\documentclass[11pt,letterpaper]{article}
\usepackage[margin=1in]{geometry}
\usepackage{setspace}
\newcommand{\keywords}[1]{\par\medskip\noindent{\small\textit{Keywords:} #1\par}\medskip}
\makeatletter
\renewcommand{\@maketitle}{%
  \begin{center}
    {\LARGE\bfseries \@title\par}
    \vskip 1.25em
    {\normalsize\begin{tabular}[t]{c}\@author\end{tabular}\par}
  \end{center}
  \vskip 1em
}
\makeatother
\usepackage[utf8]{inputenc}
\usepackage[T1]{fontenc}
\usepackage{lmodern}
\AtBeginDocument{\DeclareFontShape{T1}{lmr}{bx}{sc}{<->ssub*lmr/m/sc}{}}

\usepackage{textcomp,graphicx,booktabs,makecell,arydshln,adjustbox,threeparttable,multirow,diagbox,tabularx,array,siunitx,xcolor}
\usepackage[normalem]{ulem}
\usepackage{amsmath,amsfonts,amssymb,amsthm,mathtools,bm,mathrsfs,latexsym,nicefrac}
\usepackage[nopatch={footnote}]{microtype}
\usepackage{algorithm,algorithmicx,enumerate,enumitem,float,rotating,rotfloat,appendix,fix-cm,comment}
\usepackage[font=footnotesize,labelfont=bf]{caption}
\usepackage[round,authoryear]{natbib}
\usepackage{url}
\usepackage[hidelinks]{hyperref}
\usepackage{doi}
\hypersetup{pdftitle={Diagonalized Attention for Individualized Regression: Latent-Row Localization and Prediction},pdfauthor={Borui Peng, Liwei Lin, Feifei Wang, Long Feng}}

\date{}
\DeclareFontFamily{U}{rsfs}{\skewchar\font127}
\DeclareFontShape{U}{rsfs}{m}{n}{<-6> rsfs5 <6-8> rsfs7 <8-> rsfs10}{}

\newcommand{\bel}{\begin{eqnarray}\label}
\newcommand{\eel}{\end{eqnarray}}
\newcommand{\bes}{\begin{eqnarray*}}
\newcommand{\ees}{\end{eqnarray*}}
\newcommand{\bei}{\begin{itemize}}
\newcommand{\beiftnt}{\begin{itemize}\footnotesize}
\newcommand{\eei}{\end{itemize}}

\def\benu{\begin{enumerate}}
\def\eenu{\end{enumerate}}

\DeclareMathOperator*{\argmax}{arg\,max}

\def\R{{\real}}

\def\E{{\mathbb{E}}}

\def\complex{\mathop{{\rm I}\kern-.58em\hbox{\rm C}}\nolimits}

\def\diag{\hbox{\rm diag}}

\def\mathbold{\boldsymbol} 

\def\bA{\mathbold{A}}

\def\bB{\mathbold{B}}

\def\bC{\mathbold{C}}

\def\bD{\mathbold{D}}

\def\bH{\mathbold{H}}

\def\bI{\mathbold{I}}

\def\bK{\mathbold{K}}

\def\bL{\mathbold{L}}

\def\bM{\mathbold{M}}

\def\calN{{\cal N}}

\def\bP{\mathbold{P}}
\def\calP{{\cal P}}

\def\bQ{\mathbold{Q}}

\def\bR{\mathbold{R}}

\def\calS{{\cal S}}

\def\bT{\mathbold{T}}

\def\bu{\mathbold{u}}

\def\bU{\mathbold{U}}

\def\bv{\mathbold{v}}

\def\bV{\mathbold{V}}\def\hbV{{\widehat{\bV}}}

\def\bW{\mathbold{W}}\def\hbW{{\widehat{\bW}}}

\def\bx{\mathbold{x}}

\def\bX{\mathbold{X}}

\def\bz{\mathbold{z}}

\def\bZ{\mathbold{Z}}

\def\balpha{\mathbold{\alpha}}

\def\bbeta{\mathbold{\beta}}

\def\bgamma{\mathbold{\gamma}}

\def\bGamma{\mathbold{\Gamma}}

\def\bDelta{\mathbold{\Delta}}

\def\eps{\epsilon}

\def\btheta{\mathbold{\theta}}

\def\hbtheta{{\widehat{\btheta}}}

\def\bTheta{\mathbold{\Theta}}

\def\bkappa{\mathbold{\kappa}}

\def\bmu{\mathbold{\mu}}

\def\bSigma{\mathbold{\Sigma}}

\def\bpsi{\mathbold{\psi}}

\def\bOmega{\mathbold{\Omega}}

\def\E{\mathbb{E}}

\def\R{\mathbb{R}}

\def\T{\top}

\let\hat\widehat
\let\tilde\widetilde

\useunder{\uline}{\ul}{}
\newcolumntype{Y}{>{\centering\arraybackslash}X}

\newtheorem{proposition}{Proposition}

\newcommand{\Prob}{\mathbb{P}}
\newcommand{\softmax}{\operatorname{softmax}}

\def\bA{\bm{A}}
\def\bB{\bm{B}}
\def\bC{\bm{C}}
\def\bI{\bm{I}}
\def\bP{\bm{P}}
\def\bU{\bm{U}}
\def\bV{\bm{V}}
\def\bW{\bm{W}}
\def\bX{\bm{X}}
\def\balpha{\bm{\alpha}}
\def\bbeta{\bm{\beta}}
\def\bmu{\bm{\mu}}

\def\bSigma{\bm{\Sigma}}
\def\bbE{\mathbb{E}}
\def\bbR{\mathbb{R}}

\def\calN{\mathcal{N}}
\def\calS{\mathcal{S}}

\theoremstyle{plain}
\newtheorem{theorem}{Theorem}[section]
\newtheorem{lemma}[theorem]{Lemma}

\newtheorem{example}{Example}

\useunder{\uline}{\ul}{}
\theoremstyle{remark}
\newtheorem{remark}{Remark}[section]

\theoremstyle{definition}

\newtheorem{assumption}[theorem]{Assumption}

\title{Diagonalized Attention for Individualized Regression: Latent-Row Localization and Prediction}
\author{Borui Peng$^{1,\dagger}$ \quad Liwei Lin$^{2,\dagger}$ \quad Feifei Wang$^{2}$ \quad Long Feng$^{1,\ast}$\\[6pt]
\normalfont\small $^{1}$School of Computing \& Data Science, University of Hong Kong\\
\normalfont\small $^{2}$School of Statistics, Renmin University of China\\[4pt]
\normalfont\small $^{\dagger}$The first two authors contribute equally.\\
\normalfont\small $^{\ast}$Correspondence to: \href{mailto:lfeng@hku.hk}{lfeng@hku.hk}}
\begin{document}
\maketitle
\begin{abstract}
Modern text and image representations are often matrix-valued, with rows corresponding to tokens, patches, or other local feature vectors. Predictive information is often sparse but sample-specific, making classical sparse regression methods with a common support poorly suited to this heterogeneity. This paper formalizes an individualized sparse regression framework for matrix-valued covariates in which each observation has its own rows of interest, while the associated regression effects are shared across the population. To estimate this model, we introduce a diagonalized attention mechanism that uses query--key scores to localize sample-specific signal rows and a value matrix for downstream regression. The proposed method has a parameter dimension independent of sample size and can identify rows of interest for new observations without their responses. We establish existence theorems showing that, under suitable score-separation and concentration conditions, single-head and multi-head diagonalized attention models recover the latent rows with high probability, yielding prediction risk bounds. Our theory therefore provides a statistical explanation of how attention-based scoring localizes sample-specific signals in heterogeneous matrix-valued data. Simulations demonstrate strong prediction and localization in regression and misspecified classification across varying sample sizes, dimensions, and signal cardinalities. Real sentiment analyses show improved classification accuracy and interpretable token selection.

\end{abstract}
\keywords{Attention mechanisms, sample-specific sparsity, sparse matrix regression, heterogeneity, variable selection}
\section{Introduction}\label{sec:intro}
Modern large language and vision models have transformed the way unstructured data are represented and analyzed. Text, images, and other complex objects are now routinely encoded as collections of feature vectors rather than as a single fixed vector of covariates. This shift creates new statistical challenges. 
The relevant information in such data is often sparse, but its location is highly sample-dependent: two observations may share the same type of predictive signal 
while expressing it through different tokens, patches, or local features. Consequently, a single population-level coefficient vector or a fixed sparsity pattern can be too rigid for these applications. This motivates an individualized sparse regression framework in which each sample is allowed to have its own regions of interest, while the regression effects associated with such regions are learned across samples.

The difficulty is especially visible in language applications. After tokenization and embedding, a sentence or comment is naturally represented by a matrix, with each row corresponding to the feature vector of one token. The number of rows varies with sentence length, and the tokens that determine the response may appear at different positions in different samples. In sentiment analysis, for instance, a positive or negative label may be driven by only a few sentiment-bearing words, but these words do not occur in aligned locations across comments. Similar issues arise in image analysis, where an image can be partitioned into local patches and the informative object or texture may occupy only a small, sample-specific part of the frame. Classical sparse regression methods, such as lasso-type or group-sparse estimators, are designed to select a common set of coordinates or rows across observations. They are therefore not well suited to matrix-valued data with varying row dimensions and unknown, sample-specific signal regions.

There is a growing statistical literature on individualized modeling. Early representative work includes varying coefficient models, where regression coefficients are modeled as functions of an auxiliary variable \citep{hastie1993varying,fan2003adaptive}. Another line of work assumes that similar samples should have similar regression coefficients and uses pairwise regularization induced by predefined similarity structures \citep{xu2015formula,yamada2017localized,lengerich2018personalized}. These approaches provide important tools for heterogeneity modeling, but they typically require an observed indexing variable, a neighborhood structure, or another form of external prior information. More recent subgroup analysis methods improve estimation by encouraging samples to share coefficients within latent groups. For example, \citet{tang2021individualized} proposed a multidirectional separation penalty for individualized variable selection and subgroup identification, and \citet{he2023center} developed a center-augmented $\ell_2$-type regularization framework with improved statistical and computational properties. However, these subgroup-based methods still require outcomes from new samples for individualized estimation and inference, making them unsuitable for standard prediction settings where only test covariates are observed. Related individualized modeling frameworks have also been studied from other perspectives; see, for example, \citet{tang2020individualized,miao2025reinforcement,rim2025individualized,zhang2026individualized}.

In parallel, attention mechanisms, as a core component of the Transformer architecture, have achieved remarkable empirical success in both language and vision models \citep{vaswani2017attention,devlin2019bert,dosovitskiy2020image}. In a standard self-attention layer, the query and key matrices define data-adaptive scores among input tokens or patches, while the value matrix determines the features that are aggregated for downstream prediction. This architecture is naturally attractive for heterogeneous data because it does not require the important row or patch to appear at a fixed location. Despite its success, the statistical mechanisms through which attention can localize sample-specific signals remain only partially understood. Recent theoretical work has begun to address this issue: \citet{marion2025attention} studied a simplified nonlinear self-attention layer for individualized single-location regression; \citet{wang2024transformers} and \citet{barnfield2026high} investigated the ability of attention to extract sparse informative words; and \citet{chen2024transformers} and \citet{zhang2025transformer} established guarantees for group-sparsity selection. In addition, \citet{yang2024attention} connected an individualized image regression framework with attention mechanisms. These studies suggest that attention can serve not only as a neural-network module, but also as a statistical device for individualized signal localization.

Motivated by these observations, this paper formalizes an individualized sparse regression model for matrix-valued covariates in which the signal rows are unknown and sample-specific. We propose a diagonalized attention mechanism that modifies the standard query-key scoring step by retaining the diagonal scores associated with individual rows. The resulting attention weights act as continuous estimators of sample-specific row indicators, while the value matrix plays the role of shared regression coefficients for the localized rows. This construction separates localization from regression: the query--key scores identify the rows of interest (ROIs), while the value component relates the localized features to the response. 
Additionally, our approach uses a number of parameters independent of the sample size and enables individualized prediction and ROI identification for new observations based solely on their covariates.

Our theoretical analysis proceeds from single-head to multi-head diagonalized attention. For the single-head case, corresponding to one signal row per sample, we establish the existence result constructively by showing that an appropriately chosen query--key matrix produces attention weights that concentrate on the true ROI with high probability. We then extend the construction to the multi-head setting, where each head is designed to locate one of multiple signal rows. Building on these localization guarantees, we derive an excess prediction risk bound under an individualized linear regression model. Our theory clarifies how diagonalized query--key scores exploit the score gap between active and inactive rows, and how the softmax operation translates this gap into accurate localization. It thus provides a statistical explanation of how attention-based scoring can identify sample-specific signals in heterogeneous matrix-valued data.

We further evaluate the proposed method through simulations and real data analysis. In Gaussian regression settings, diagonalized attention achieves strong prediction accuracy and ROI localization, even under comparison settings that favor competing methods by allowing them to select more candidate rows. In a Gaussian-mixture classification setting, the proposed method remains competitive despite model misspecification. Finally, in real-world sentiment classification datasets, diagonalized attention identifies sentiment-bearing tokens that are interpretable and consistent with the observed labels, while also improving classification performance over standard attention and other individualized regression competitors.

The rest of the paper is organized as follows. Section~\ref{section: methodology} introduces the individualized regression framework and motivating applications in language and image data. Section~\ref{section: single-head attn and diag} reviews the standard attention mechanism and introduces the proposed diagonalized attention mechanism. Section~\ref{section: theory} establishes localization guarantees and derives a prediction risk bound. Sections~\ref{section: simulation} and~\ref{section: real} present simulation studies and real data analysis, respectively.

\textbf{Notation.} 
We use bold uppercase letters for matrices and bold
lowercase letters for vectors. For a matrix $\bA$, $\bA_{j,\cdot}$ and
$\bA_{\cdot,k}$ denote its $j$-th row and $k$-th column, respectively;
$\lambda_{\min}(\bA)$ and
$\lambda_{\max}(\bA)$ denote its smallest and largest eigenvalues if it is symmetric; $\|\bA\|_F=\sqrt{\sum_{i,j} A_{i,j}^2}$, $\|\bA\|_{\mathrm{op}}=\sup_{\|\bx\|_2=1}\|\bA\bx\|_2=\sqrt{\lambda_{\max}(\bA^\top\bA)}$, and $\|\bA\|_\infty=\max_{j,k}|A_{j,k}|$ denote its Frobenius, operator, and maximum-entry norms, respectively; and
$\bA^\dagger$ denotes its  Moore--Penrose pseudo-inverse.  For
symmetric matrices $\bA$ and $\bB$ of the same dimension, $\bA\succ\bB$ and
$\bA\succeq\bB$ mean that $\bA-\bB$ is positive definite and positive
semi-definite, respectively. For nonnegative quantities \(a\) and \(b\), \(a\lesssim b\) means \(a\leq Cb\) for some constant \(C>0\), and \(a\gtrsim b\) means \(b\lesssim a\). The notation \(O_{\Prob}(\cdot)\) has its usual stochastic meaning, while \(o(\cdot)\) has its usual deterministic asymptotic meaning. Finally, for any positive integer \(n\), we write \([n]=\{1,\ldots,n\}\).

\section{Individualized regression}\label{section: methodology}
We begin by presenting the general formulation of individualized regression in Section \ref{section:general}. The motivating applications in language models and image analysis are presented in  Section \ref{section:application}.

\subsection{The general formulation of individualized regression} \label {section:general}
Consider $n$ observations, each consisting of a matrix-valued covariate $\bX^{(i)}\in\bbR^{p^{(i)}\times d}$ and a scalar response $y^{(i)}$, for $i\in[n]$. The covariate matrix $\bX^{(i)}$ contains $p^{(i)}$ feature vectors of dimension $d$, denoted as $\bX^{(i)}
=
\left[
\left(\bX_{1,\cdot}^{(i)}\right)^\top,
\ldots,
\left(\bX_{p^{(i)},\cdot}^{(i)}\right)^\top
\right]^\top$, where each row $\bX_{j,\cdot}^{(i)}\in\bbR^{1\times d}$ corresponds to the $j$-th feature vector for sample $i$ and $p^{(i)}$ is allowed to vary across samples. We assume that the response $y^{(i)}$ follows a generalized linear model with link function $g(\cdot)$:
\begin{equation}
	\label{model1}
g\left(\bbE(y^{(i)})\right) = \langle\bX^{(i)}, \bC^{(i)} \rangle,
\end{equation}
where $\bC^{(i)}\in\bbR^{p^{(i)}\times d}$ denotes sample-specific coefficients that model individualized effects.
The notation $\langle \bA, \bB \rangle = \mathrm{tr}(\bA^\top \bB)$ denotes inner product.  
Model~(\ref{model1}) can be easily extended to include additional covariates  $\bz^{(i)}$ whose coefficients are shared across the population. In the present work, however, we focus exclusively on the matrix-valued input $\bX^{(i)}$ and omit $\bz^{(i)}$ for clarity.

Sparse matrix regression has been extensively studied in the literature \citep{yuan2007dimension,bunea2011optimal,negahban2011estimation,chen2012sparse}. 
By imposing various penalty functions, one can identify important entries, rows, or columns in the input matrix.  
In this work, we also impose a sparsity assumption on the coefficients. Unlike classical
sparse regression models, however, we allow the signal locations to vary across
samples. Specifically, we assume that each sample-specific coefficient matrix
$\bC^{(i)}$ contains $s$ signal rows, with $s\leq \min\{p^{(1)},\ldots,p^{(n)}\}$, and takes the following form:
\begin{equation} \label{eq:structure_decomp_R}
\bC^{(i)} = {\bA^{(i)}}\bB^\top, 
\end{equation}
where $\bB=[\bbeta_1,\ldots,\bbeta_s]\in\bbR^{d\times s}$ and $\bA^{(i)}=[\balpha_1^{(i)},\ldots,\balpha_s^{(i)}]$ are unknown and satisfy
\begin{equation}
 \bA^{(i)} = [\bm{\alpha}_1^{(i)}, \ldots, \bm{\alpha}_s^{(i)}] \in \{0, 1\}^{p^{(i)} \times s}, \ \ 
\left(\bA^{(i)}\right)^\T \bA^{(i)} =\bI_{s}. 
\end{equation}
With this decomposition, the binary matrix $\bA^{(i)}$ serves as an individualized binary selection matrix that identifies the active rows for subject~$i$, and $\bB$ represents the regression coefficient matrix associated with the active rows. Denote the active rows for subject~$i$ as $\calS^{(i)}$, i.e.,
\begin{equation}
    \calS^{(i)}=\left\{j:\ \bA_{j,\cdot}^{(i)}\neq \bm{0}\right\}.
\end{equation}
We refer to $\calS^{(i)}$ as the Rows of Interest (ROI) for subject $i$ in subsequent analysis. This is closely related to the problem of detecting regions of interest in the existing literature; see Section \ref{section:application} below for further application details. 
Clearly, when $p^{(i)}$ and $\calS^{(i)}$ are fixed for all samples, this setting reduces to the classical row-sparse matrix-valued regression. The constraint $\left(\bA^{(i)}\right)^\T \bA^{(i)} =\bI_{s}$ serves as an identifiability condition, and it also implies  
$\text{Card}(\calS^{(i)})=s$ for $i\in[n]$. 
We note that Models~(\ref{model1}) and~(\ref{eq:structure_decomp_R}) can be rewritten as 
\begin{equation}\label{eq:individualized_framework}
 g \left( \mathbb{E}(y^{(i)}) \right)  
 = \left\langle \big(\bX^{(i)}\big)^\T \bA^{(i)}, \ \bB \right\rangle.
\end{equation}
This formulation further clarifies that $\bA^{(i)}$ identifies the ROI positions for sample~$i$, while $\bB$ contains the corresponding regression coefficients.
Although here we focus on identifying important rows, our framework can be readily
adapted to detect important columns or regions; see Example~2 in
Section~\ref{section:application} for more details.

Under the individualized sparse regression framework, it is essential to clearly define the active rows for each observation. Without such a definition, any row could be regarded as active, making ROI identification infeasible. To ensure feasible ROI identification, we impose the following distributional assumption on the input. Specifically, for all the observations $i\in[n]$, assume
\begin{equation}\label{eq:dist_assumption}
\boldsymbol{X}_{j,\cdot}^{(i)}
\mid
A_{j,k}^{(i)}=1
\ \sim \ 
f_k(\boldsymbol{x}),\quad  j=1,\ldots, p^{(i)},\quad k\in[s]
\end{equation}
holds for certain unknown functions $f_k(\cdot)$.
Here, the common distributional assumption is imposed only on the $s$ active rows of each observation. The distributions of the inactive rows are allowed to vary across samples. Moreover, we do not impose any independence assumption among the rows within a sample; in fact, both active and inactive rows may be mutually dependent. This setting is therefore highly challenging, and classical sparse regression methods, such as lasso-type or group-sparse estimators, are not directly applicable.
Additionally, under the parameterization in (\ref{eq:individualized_framework}), 
treating each $\bA^{(i)}$ as an unconstrained matrix causes the total number of parameters to grow linearly with $n$. This is infeasible in practice and undermines statistical consistency. In the next section, inspired by the standard attention mechanism, we show how a diagonalized attention structure can be used to address this individualized regression problem.

\subsection{Applications of individualized regression}\label{section:application}

The individualized regression framework discussed in Section \ref{section: methodology} is widely applicable. In this section, we illustrate two representative examples in language models and image analysis.

\begin{example}\label{eg:language}
(Sentiment Analysis in Language Models) Consider a sentiment analysis problem that classifies comments as positive or negative. A common strategy is to use a pretrained language model to tokenize each comment into a sequence of tokens and represent each token as a feature vector in a \(d\)-dimensional continuous space. Each comment is then transformed into an embedding matrix by stacking the feature vectors in the original token order. 
Since the sentiment polarity of a comment is typically driven by only a few sentiment-related tokens, this naturally motivates a sparse regression formulation.
However, natural language complexity gives rise to heterogeneity across comments in two ways. First, comment lengths vary, leading to different numbers of tokens. Second, sentiment-bearing tokens may appear at arbitrary positions and are not aligned across comments.
Thus, order-preserving representations yield input matrices with different numbers of rows and informative tokens in unpredictable locations. Classical statistical models often struggle with such individualized tasks, whereas our individualized regression framework~\eqref{eq:individualized_framework} naturally accommodates these structures and identifies informative sentiment-related tokens.

Let $\bX^{(i)} \in \R^{p^{(i)} \times d}$ denote the embedding matrix for comment $i$, where $p^{(i)}$ is the number of tokens in comment $i$ and $d$ is the embedding dimension of the feature vector associated with each token. 
To account for different comment lengths and unknown locations of informative tokens, we consider the following individualized sparse logistic regression model:
$\mathrm{logit}\bigl(\E (y^{(i)})\bigr) = \langle \bX^{(i)}, \bC^{(i)} \rangle$,
where the coefficient matrix $\bC^{(i)}\in \R^{p^{(i)} \times d}$ is assumed to admit the structural decomposition in \eqref{eq:structure_decomp_R}.
We will study this sentiment analysis problem in Section \ref{section: real} and show that our method can successfully identify informative tokens without any additional preprocessing.
\end{example}

\begin{example}
(Sparse Image Regression)
     Consider an image classification task in which the objective is to distinguish between two classes: Cat and Dog. The input consists of corresponding images for each class. In many practical scenarios, the species-discriminative features occupy only a small region within a larger image, thereby creating a sparse signal setting. Unlike applications with well-aligned input vectors across samples, such as genomic data analysis, images are often unaligned, with the animal’s location varying substantially across frames. Classification accuracy can therefore be improved by focusing on the small patch containing the relevant object, a setting naturally modeled by our individualized regression framework for identifying subject-specific regions of interest.
     
Let image \(i\) have dimension \(D_1^{(i)}\times D_2^{(i)}\) and be partitioned into \(p_1^{(i)}\times p_2^{(i)}\) patches of size \(d_1\times d_2\), where \(D_1^{(i)}=p_1^{(i)}d_1,D_2^{(i)}=p_2^{(i)}d_2\). When only a small number of patches contain class-discriminative information, we consider an individualized sparse logistic regression model $\mathrm{logit}(\E y^{(i)})=\langle \tilde{\bX}^{(i)},\tilde{\bC}^{(i)}\rangle$ in which the coefficients $\tilde{\bC}^{(i)}$ are specified as follows:
\begin{equation}\label{skpd}
    \tilde{\bC}^{(i)} = \sum_{k=1}^s \bU_k^{(i)}\otimes \bV_k.
\end{equation}
Here, $\otimes$ is the Kronecker product, $\bU_k^{(i)}\in\{0,1\}^{p_1^{(i)}\times p_2^{(i)}}$ denotes the patch-indicator matrix for image $i$, indicating the locations of signal-containing patches, and $\bV_k \in \mathbb{R}^{d_1\times d_2}$ represents the corresponding coefficient matrix. This patch-wise localization problem can be reformulated within our models~(\ref{model1}) and~(\ref{eq:structure_decomp_R}) via the linear mapping $\mathcal{R}: \mathbb{R}^{D_1^{(i)} \times D_2^{(i)}}\to \mathbb{R}^{p_1^{(i)} p_2^{(i)}\times d_1 d_2}$,
\[
\mathcal{R}\bigl(\widetilde{\bC}^{(i)}\bigr)
=
\left[
\operatorname{vec}\bigl(\widetilde{\bC}_{1,1}^{(i)}\bigr),
\ldots,
\operatorname{vec}\bigl(\widetilde{\bC}_{p_1^{(i)},1}^{(i)}\bigr),
\ldots,
\operatorname{vec}\bigl(\widetilde{\bC}_{1,p_2^{(i)}}^{(i)}\bigr),
\ldots,
\operatorname{vec}\bigl(\widetilde{\bC}_{p_1^{(i)},p_2^{(i)}}^{(i)}\bigr)
\right]^\top .
\]
where $\tilde{\bC}^{(i)}_{j,l}$ denotes the $(j,l)$-th block of $\tilde{\bC}^{(i)}$ and  $\operatorname{vec}(\cdot)$ denotes column-wise
vectorization.
With this mapping, the logit model and (\ref{skpd}) can be written as  $\mathrm{logit}(\E y^{(i)})=\langle \bX^{(i)},\bC^{(i)}\rangle$, where
\[
\bX^{(i)} = \mathcal{R}\big( \tilde{\bX}^{(i)} \big), 
\quad
\bC^{(i)} = \mathcal{R}\big( \tilde{\bC}^{(i)} \big), 
\quad
\balpha_k^{(i)} = \textnormal{vec}\big( \bU_k^{(i)} \big), 
\quad 
\bbeta_k = \textnormal{vec}\big( \bV_k \big).
\]

Under this linear mapping, we can reframe the task of detecting {\bf Regions of Interest}  in $ \tilde{\bX}^{(i)}$ to our problem of detecting {\bf Rows of Interest} in $\bX^{(i)}$. We refer to \citet{wu2023sparse} for a detailed discussion of the Kronecker product decomposition in image data analysis.
\end{example}

\section{Standard attention and diagonalized attention}\label{section: single-head attn and diag}

\subsection{Standard attention}
\label{sec:method_attn}


We first review the standard single-head self-attention mechanism. Given
learnable matrices $\bW_Q$, $\bW_K$, and $\bW_V$, it maps the input features to
the representation
\begin{equation}\label{eq:standard_attn_full}
\mathcal{A}(\bX^{(i)})
=
\mathcal{A}(\bX^{(i)};\bW_Q,\bW_K,\bW_V)
=
\softmax\left(
\frac{(\bX^{(i)}\bW_Q)(\bX^{(i)}\bW_K)^\top}{\sqrt{d}}
\right)
\bX^{(i)}\bW_V .
\end{equation}
Here, $\bW_Q$, $\bW_K$, and $\bW_V$ are referred to as the query, key, and value matrices, respectively. 
For simplicity, we assume that these matrices are square with dimension $d \times d$, although they may also be chosen to be non-square matrices of dimension $d \times d'$, where $d' \neq d$. 
The softmax operator can be applied either row-wise or entry-wise, depending on the context. 
When applied row-wise to a matrix \(\bm{M} \in \mathbb{R}^{p \times p}\), its \((j,k)\)-th entry is defined by
\begin{equation*}
\left[\softmax(\bm{M})\right]_{j,k}
=
\frac{\exp(M_{j,k})}{\sum_{l=1}^p \exp(M_{j,l})}.
\end{equation*}
Under this definition, each row of the resulting attention matrix is nonnegative and sums to one. 
Consequently, each output row may be interpreted as a data-adaptive combination or aggregation of the value vectors \(\bX^{(i)}\bW_V\), where the aggregation weights are determined by learned pairwise similarities among the row features induced by the projection matrices \(\bW_Q\) and \(\bW_K\).

Now, let’s take a closer look at the matrix inside the softmax operation. 
Denote $\mathbf{\Psi}^{(i)}= d^{-1/2}(\bX^{(i)}\bW_Q)(\bX^{(i)}\bW_K)^\top$. Then its $(j,k)$-th entry is $\Psi_{j,k}^{(i)} = d^{-1/2}\langle \bX^{(i)}_{j,\cdot}\bW_Q,\bX^{(i)}_{k,\cdot}\bW_K\rangle$, which measures the compatibility between the $j$-th and $k$-th rows after projection into the query and key spaces, respectively. 
Hence, $\mathbf{\Psi}^{(i)}$ can be interpreted as an uncentered cross-Gram (similarity) matrix induced by the two linear projections. 
The softmax is then applied row-wise to $\mathbf{\Psi}^{(i)}$, so each row is normalized into a probability distribution over keys. The scaling factor $1/\sqrt{d}$ controls the variance of dot-product scores and helps prevent softmax weights from becoming overly concentrated when $d$ is large.
In fact, the core of attention lies in this query--key scoring step, which defines a similarity or compatibility function between feature representations. 
More broadly, for vectors in $\mathbb{R}^d$, one may define a general scoring function $\psi:\mathbb{R}^d\times\mathbb{R}^d\to\mathbb{R}$ to quantify alignment or dependence. 
Different choices of \(\psi\) yield different attention mechanisms, including additive attention based on multilayer perceptrons \citep{bahdanau2014neural} and kernelized or linearized variants \citep{katharopoulos2020transformers}.

For the final prediction, the attention layer is typically followed by additional layers, such as fully connected layers. We consider the simplest setting with a single fully connected layer, whose output weight matrix is denoted by $\bW_O$. 
Then, when the outcomes are continuous scalars, the final prediction takes the form
\[
\mathcal{O}(\bX^{(i)})=\mathcal{O}(\bX^{(i)};\bW_Q,\bW_K,\bW_V,\bW_O)
=
\left\langle
\mathcal{A}(\bX^{(i)}),\bW_O
\right\rangle .
\]
If we let $\bW=\bW_Q\bW_K^\top$ and $\bV=\bW_O\bW_V^\top$, the final output can be equivalently written as
\begin{equation}\label{single1}
\mathcal{O}(\bX^{(i)};\bW,\bV)
=
\left\langle
\softmax\left(
\frac{\bX^{(i)}\bW(\bX^{(i)})^\top}{\sqrt{d}}
\right)
\bX^{(i)},
\bV
\right\rangle .
\end{equation}
When considering classification problems, an additional link function may be incorporated; we omit the details here.

Extending the single-head operator defined above, multi-head attention concatenates the outputs from all heads. Specifically, consider an $R$-head attention and let
$\mathcal{A}(\bX^{(i)};\bOmega_r)=\mathcal{A}(\bX^{(i)};\bW_{Q,r},\bW_{K,r},\bW_{V,r})$ for $r\in[R]$. Then we define
\begin{equation}\label{eq:standard_multi_head}
\mathcal{MA}(\bX^{(i)};\bOmega)
=
\big[ \mathcal{A}(\bX^{(i)};\bOmega_1), \dots, 
\mathcal{A}(\bX^{(i)};\bOmega_R) \big].
\end{equation}
Here, $\bOmega=\{\bOmega_r\}_{r=1}^R$ collects all learnable parameters, and $\mathcal{MA}$ denotes multi-head attention, which produces an $R$-head attention representation. As in the single-head mechanism, we consider a simple setting with one fully connected layer for each head, and denote the corresponding output weight matrix by $\bW_{O,r}$ for head $r$. Further, let $\bW_r=\bW_{Q,r}\bW_{K,r}^\T$ and $\bV_r=\bW_{O,r}\bW_{V,r}^\top$. Then the final output of the multi-head attention mechanism takes the form
\begin{equation}\label{multi1}
\mathcal{O}(\bX^{(i)};\{\bW_r,\bV_r\}_{r=1}^R)
=\sum_{r=1}^R
\left\langle
\softmax\left(
\frac{\bX^{(i)}\bW_r(\bX^{(i)})^\top}{\sqrt{d}}
\right)
\bX^{(i)},
\bV_r
\right\rangle .
\end{equation}

\subsection{Diagonalized attention for individualized regression}
\label{sec:method_diag_attn}
For the individualized sparse regression setting discussed in Section~\ref{section:general}, 
we propose a diagonalized version of the attention mechanism. 
We start with the case $s=1$, where each observation contains exactly one signal row, and then generalize to the multi-head attention for general $s$.

Recall that the signal row varies across observations, making its identification a crucial step. In this paper, we propose to accomplish this using a diagonalized attention mechanism. First, recall that
$\bW = \bW_Q \bW_K^\top$,
and the attention score matrix \(\mathbf{\Psi}^{(i)}\) can be written as $\mathbf{\Psi}^{(i)} = d^{-1/2}\bX^{(i)} \bW (\bX^{(i)})^\top$.
Then we consider the diagonal vector of the attention matrix, i.e.,
\begin{equation}\label{eq:diag_score}
	\bm{\psi}^{(i)} = \diag(\mathbf{\Psi}^{(i)})
	= d^{-1/2}
	\left[
	\bX_{1,\cdot}^{(i)} \bW (\bX_{1,\cdot}^{(i)})^\top, 
	\dots, 
	\bX_{p^{(i)},\cdot}^{(i)} \bW (\bX_{p^{(i)},\cdot}^{(i)})^\top
	\right]^\top
	\in \mathbb{R}^{p^{(i)}}.
\end{equation}
Note that \(\bm{\psi}^{(i)}\) is a function of the unknown matrix \(\bW\) and the observation \(\bX^{(i)}\). To emphasize this dependence, we may also write
$\bm{\psi}^{(i)} = \bm{\psi}(\bX^{(i)};\bW)$.
If we further apply the softmax transformation to \(\bm{\psi}^{(i)}\), we obtain
\begin{equation}\label{eq:softmax_diag_score}
	\bkappa^{(i)} = \bkappa(\bX^{(i)};\bW) = \softmax(\bm{\psi}^{(i)}) \in \bbR^{p^{(i)}},
\end{equation}
which defines a probability distribution over the \(p^{(i)}\) entries.
Finally, we propose
\begin{equation}\label{eq:final_estimator}
	z^{(i)} = z(\bX^{(i)};\bW,\bv)
	= \left\langle \bv,  (\bX^{(i)})^\top \bkappa^{(i)} \right\rangle
\end{equation}
as our final estimate for \(g(\E(y^{(i)}))\) in the generalized linear model~\eqref{model1}. 
Here, $\bv$ is an unknown coefficient vector that plays the role of 
$\bV=\bW_O\bW_V^\T$ in the standard attention mechanism in \eqref{single1}. 
Consequently, $z^{(i)}$ is a function of $\bX^{(i)}$ and the unknown 
parameters $\bW \in \bbR^{d\times d}$ and $\bv \in \bbR^d$. 

The key feature of this construction is the separation of localization and regression. 
The weight vector \(\bkappa^{(i)}=\bkappa(\bX^{(i)};\bW)\) is designed to locate the signal row of observation \(i\) and thus serves as an estimator of the latent location vector \(\balpha^{(i)}\).  In particular, the proposed diagonalized scoring design in (\ref{eq:diag_score})
 enables effective localization of the signal row under the model in 
(\ref{eq:dist_assumption}). 
Conditional on this localization step, the parameter \(\bv\) plays the role of the regression coefficient \(\bbeta\) by linking the aggregated feature \((\bX^{(i)})^\top \bkappa^{(i)}\) to the response. 
Note that the softmax transformation in \eqref{eq:softmax_diag_score} is crucial, since it amplifies relative differences among the diagonal scores and encourages the resulting weights to place most of their mass on the informative row. 
This concentration property is central to our theoretical analysis, and will be established formally in Section~\ref{section: theory}.

Now consider the case in which each sample contains \(s\) active rows,
where \(s \ge 2\). Under this setting, an \(R\)-head diagonalized attention mechanism with $R=s$ is needed, in which each attention head is responsible for
detecting a distinct feature row.
Following the diagonalized scoring design in \eqref{eq:diag_score}, let
\(\{\boldsymbol{\kappa}_r^{(i)}\}_{r=1}^R\) denote the diagonal attention weight vectors
of the \(i\)-th sample across \(R\) heads after the softmax transformation. Specifically, for the \(r\)-th head, we define
\begin{equation}\label{eq:softmax_diag_score_multi}
\boldsymbol{\kappa}_r^{(i)}
= \boldsymbol{\kappa}\bigl(\bX^{(i)};\bW_r\bigr)
= \operatorname{softmax} \left(
\boldsymbol{\psi}\bigl(\bX^{(i)};\bW_r\bigr)
\right)
\in \mathbb{R}^{p^{(i)}} .
\end{equation}
Here, \(\bW_r=\bW_{Q,r}\bW_{K,r}^\T\) denotes the weight
matrix associated with the \(r\)-th head. Since the softmax transformation
normalizes each score vector, \(\boldsymbol{\kappa}_r^{(i)}\) naturally
represents a probability distribution over the \(p^{(i)}\) entries. 
We then
concatenate the \(R\) probability distributions and define the aggregated
attention matrix as
\begin{equation}\label{eq:K_matrix_estimation}
\bK^{(i)}
= \bK\bigl(\bX^{(i)};\{\bW_r\}_{r=1}^{R}\bigr)
= \left[
\boldsymbol{\kappa}_1^{(i)},\ldots,\boldsymbol{\kappa}_R^{(i)}
\right]
\in \mathbb{R}^{p^{(i)} \times R}.
\end{equation}
As in the
single-head attention mechanism, the matrix
\(
\bK^{(i)}
\)
is designed to identify the \(s\) signal rows of the \(i\)-th observation and
serves as a continuous estimator of the latent indicator matrix
\(\bA^{(i)}\) in \eqref{eq:individualized_framework}.
The final output is obtained by taking the inner product between the
value matrix \(\bV=[\bv_1,\ldots,\bv_R]\in\bbR^{d\times R}\) and the
attention-pooled feature matrix \((\bX^{(i)})^\top\bK^{(i)}\). Specifically,
\begin{equation}\label{eq:final_estimator_multi}
z^{(i)}
=
z\bigl(\bX^{(i)};\{\bW_r,\bv_r\}_{r=1}^{R}\bigr)
=
\left\langle
\bV,
(\bX^{(i)})^\top \bK^{(i)}
\right\rangle
=
\sum_{r=1}^{R}
\left\langle
\bv_r,
(\bX^{(i)})^\top \bkappa_r^{(i)}
\right\rangle .
\end{equation}
Here, the parameters $\bv_r$ act as regression coefficients for the attention-pooled feature map $(\bX^{(i)})^\top \bkappa_r^{(i)}$. The resulting $R$ features and their corresponding coefficients are then combined to produce the final prediction $z^{(i)}$.

With the formulation of diagonalized attention, we consider the following problem
\begin{equation}\label{obj}
\min_{(\bV, \{\bW_r\}_{r=1}^R)\in \bTheta} \mathcal{L}_n(\bV, \{\bW_r\}_{r=1}^R) = \frac{1}{n} \sum_{i=1}^n \ell\left(y^{(i)}, z(\bX^{(i)};\{\bW_r,\bv_r\}_{r=1}^{R})\right),
\end{equation}
where \(\ell(\cdot, \cdot)\) is the canonical loss associated with the chosen GLM family, such as squared error for Gaussian responses or cross-entropy for Bernoulli responses, and \(\bTheta\) denotes the parameter space. In the absence of prior information, we may take
\(
\bTheta=\left\{\left(\bV,\{\bW_r\}_{r=1}^R\right):\bV\in\mathbb{R}^{d\times R},\ \bW_r\in\mathbb{R}^{d\times d}\right\}.
\) 
When structural preferences are available, \(\bTheta\) may instead be restricted to a suitable constrained space, such as a bounded parameter set.
We solve the optimization problem in~\eqref{obj} using a standard gradient-based algorithm to obtain the estimate $\hbtheta=(\hbV,\{\hbW_r\}_{r=1}^R)$, although other optimization methods may also be applied. 
Given estimated $\{\hbW_r\}_{r=1}^R$, the active rows for observation $i$ can be identified by finding the maximum entry in each of the $R$ columns of the attention matrix, i.e., 
\begin{equation*}
    \hat{\calS}^{(i)} = \left\{\hat{j}_r: \quad r\in[R],\quad \ \hat{j}_r=\argmax_{j\in[p^{(i)}]}\hat{\bK}_{j,r}^{(i)},\quad  \hat{\bK}^{(i)}=\bK\left(\bX^{(i)};\{\hat{\bW}_r\}_{r=1}^R\right)\right\}.
\end{equation*}
Ideally, the number of heads $R$ should be set equal to the number of active rows $s$. However, since $s$ is typically unknown in practice, $R$ can be treated as a tuning hyperparameter and selected using a validation-set approach or cross-validation. See supplementary material for further details.

\section{Diagonalized attention for individualized regression: latent-row localization and prediction}\label{section: theory}

In this section, we show that the proposed diagonalized attention mechanism effectively approximates individualized regression. We first consider single-head attention with one important row per observation and then extend the analysis to multi-head attention. Prediction accuracy is analyzed in Section~\ref{prediction}. Computational issues are beyond the scope of this paper; our theory focuses on constructive localization and global risk analysis.

\subsection{Single-head localization under score separation}\label{sec4-1}

As discussed before, the key feature of our design is that the weight vector
$\bkappa^{(i)}$ identifies the signal row. For observation $i$, let $j_{i}^{\star}$ denote the active row satisfying $\alpha_{j_{i}^{\star}}^{(i)}=1$. A row with $\alpha_{j}^{(i)}=0$ is called inactive.
Our strategy of identifying $j_i^{\star}$ relies on the existence of a matrix $\bW_0$ that separates the active row from the inactive rows, as formalized in the following assumption.

\begin{assumption}[Score separability] \label{assum:score-gap}
In the single-active-row setting, assume that there exists a matrix $\bW_0$ and a scalar $\delta_0^{(i)}$ such that, for each observation $i$,
\begin{equation}\label{assum:score-gap-eq}
\delta_0^{(i)} := \E\!\left(
\bX_{j_i^{\star},\cdot}^{(i)}
\bW_0
(\bX_{j_i^{\star},\cdot}^{(i)})^\top
\right)
-
\max_{\substack{
1\le j\le p^{(i)}, \ j\neq j_i^{\star}
}}
\E\!\left(
\bX_{j,\cdot}^{(i)}
\bW_0
(\bX_{j,\cdot}^{(i)})^\top
\right)
> 0.
\end{equation}
Further let $\delta_0 := \min_{i\in[n]}\delta_0^{(i)}$.
\end{assumption}

Assumption~\ref{assum:score-gap} is the central identifiability condition. 
It requires the existence of a matrix $\bW_0$ such that the active row has a larger expected score than the strongest inactive competitor. The inactive rows need not share a common distribution or a common expected score; their heterogeneity is summarized by the maximum in~\eqref{assum:score-gap-eq}. This is a population-level condition and does not require Gaussianity or any other specific distributional assumption. Only the relevant first and second moments are used to define the expected quadratic scores. The next theorem gives a concrete sufficient construction. It is deliberately stated under common active and inactive first two moments so that the role of covariance separation can be seen explicitly.

\begin{theorem}
[A covariance-based sufficient construction]\label{prop1}
Suppose that, across all observations, all active rows have mean $\bmu_1$ and covariance matrix $\bSigma_1$, while all inactive rows have mean $\bmu_0$ and covariance matrix $\bSigma_0$. 
Let $\bDelta_{\Sigma}=\bSigma_1-\bSigma_0$ and suppose that $\bDelta_{\Sigma}\succ0$. Define
\(
\bZ=
\bDelta_{\Sigma}^{-1/2}\left[
\bmu_1,\
\bmu_0
\right]
\)
and 
\(
\boldsymbol{P}_{\boldsymbol{Z}}^{\perp}
=
\boldsymbol{I}_d
-
\boldsymbol{Z}
\boldsymbol{Z}^{\dagger}
\), and $\operatorname{rank}(\bZ)\ll d$.
For any parameter $\tau>0$, if we let
\begin{equation}\label{W0}
\bW_0
=
\frac{1}{\tau}\bDelta_{\Sigma}^{-1/2}
\bP_{\bZ}^{\perp}
\bDelta_{\Sigma}^{-1/2},
\end{equation}
then Assumption \ref{assum:score-gap} is satisfied with 
$\delta_0
=
\frac{
d-\operatorname{rank}(\bZ)
}{\tau}
>0.$
\end{theorem}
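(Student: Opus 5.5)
The computation rests on the standard identity for the mean of a quadratic form: for a random row $\bx$ with mean $\bmu$ and covariance $\bSigma$, and any fixed $\bW$,
\[
\E(\bx\bW\bx^\top)=\operatorname{tr}(\bW\bSigma)+\bmu^\top\bW\bmu,
\]
where $\bmu$ is treated as a column vector and $\bx$ as a row. Only the first two moments enter, which is why no distributional assumption beyond the stated means and covariances is needed. Because every inactive row has the same moments $(\bmu_0,\bSigma_0)$, the maximum over $j\neq j_i^\star$ in \eqref{assum:score-gap-eq} equals the common inactive expected score. The gap is therefore
\[
\delta_0^{(i)}=\operatorname{tr}\bigl(\bW_0\bDelta_\Sigma\bigr)+\bmu_1^\top\bW_0\bmu_1-\bmu_0^\top\bW_0\bmu_0,
\]
and it does not depend on $i$, so $\delta_0=\delta_0^{(i)}$.

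Next I substitute \eqref{W0}, handling the trace term and the mean terms separately.

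For the trace term, cyclicity gives
\[
\operatorname{tr}\bigl(\tau^{-1}\bDelta_\Sigma^{-1/2}\bP_{\bZ}^\perp\bDelta_\Sigma^{-1/2}\bDelta_\Sigma\bigr)=\tau^{-1}\operatorname{tr}(\bP_{\bZ}^\perp).
\]
Here $\bDelta_\Sigma^{1/2}$ is the symmetric positive definite square root, which exists because $\bDelta_\Sigma\succ0$. The matrix $\bZ\bZ^\dagger$ is the orthogonal projector onto the column space of $\bZ$, so $\bP_{\bZ}^\perp$ is an orthogonal projector of rank $d-\operatorname{rank}(\bZ)$. Its trace equals its rank, so this term is $(d-\operatorname{rank}(\bZ))/\tau$.

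For the mean terms, take $m\in\{0,1\}$. Then
\[
\bmu_m^\top\bW_0\bmu_m=\tau^{-1}\,\bigl\|\bP_{\bZ}^\perp\bDelta_\Sigma^{-1/2}\bmu_m\bigr\|_2^2,
\]
using that $\bP_{\bZ}^\perp$ is symmetric and idempotent. The vector $\bDelta_\Sigma^{-1/2}\bmu_m$ is a column of $\bZ$, so it lies in the range of $\bZ$ and is annihilated by $\bP_{\bZ}^\perp$. Both mean contributions therefore vanish.

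Combining the two pieces gives $\delta_0=(d-\operatorname{rank}(\bZ))/\tau$. Since $\operatorname{rank}(\bZ)\le 2<d$, which is what the hypothesis $\operatorname{rank}(\bZ)\ll d$ guarantees, this is strictly positive.

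There is no genuine obstacle here. The only point needing care is the justification that $\bZ\bZ^\dagger$ is the orthogonal projector onto the column space of $\bZ$, which follows from the Moore--Penrose identities, together with the observation that projecting out the whitened means is exactly what removes the mean contributions.
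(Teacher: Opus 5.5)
Your proposal is correct and follows the paper's proof essentially step for step. Both arguments use the quadratic-score identity and the annihilation of the whitened means by $\bP_{\bZ}^{\perp}$, which you write as $\bmu_m^\top\bW_0\bmu_m=\tau^{-1}\|\bP_{\bZ}^{\perp}\bDelta_{\Sigma}^{-1/2}\bmu_m\|_2^2=0$ and the paper as $\bW_0\bmu_0=\bW_0\bmu_1=\mathbf{0}$; they then compute $\operatorname{tr}(\bW_0\bDelta_{\Sigma})=\tau^{-1}\operatorname{tr}(\bP_{\bZ}^{\perp})=(d-\operatorname{rank}(\bZ))/\tau$, with positivity coming from $\operatorname{rank}(\bZ)\le 2<d$.
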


\begin{remark}
The common active and inactive first two moments assumption in Theorem \ref{prop1} is used only to provide a transparent closed-form construction. With heterogeneous inactive rows, Assumption~\ref{assum:score-gap} remains the appropriate general condition: one must control the largest inactive expected quadratic score. Note that a covariance envelope by itself is not sufficient when inactive means differ, as the mean term also contributes to the score. One may instead impose a uniform condition on the full second-moment matrices, or verify the score gap directly. If the relevant covariance contrast is positive definite only on an informative subspace, the same construction can be restricted to that subspace, provided that the projected subspace remains nontrivial. See Section~\ref{app:sec:covariance-constructions} of the supplementary material for more details. 
\end{remark}

The population condition alone does not guarantee sample-level localization. We therefore impose the following uniform concentration condition on the diagonal scores.

\begin{assumption}[The concentration condition] \label{assum:concentration}
Assume that for every $\eta\in(0,1)$ there exists a deterministic radius $r_{\eta,p_{\max}}>0$ where
$p_{\max}=\max_{1\le i\le n}p^{(i)}$ such that
\begin{equation}\label{assum:score-concentration-eq}
\Prob\left(
\max_{1\le j\le p^{(i)}}
\left|\bX_{j,\cdot}^{(i)}\bW_0(\bX_{j,\cdot}^{(i)})^\top-\E(\bX_{j,\cdot}^{(i)}\bW_0(\bX_{j,\cdot}^{(i)})^\top) \right|
\le r_{\eta,p_{\max}}
\right)
\ge 1-\eta .
\end{equation}
\end{assumption}

This assumption controls the stochastic fluctuations of all row-wise scores simultaneously. Gaussian or sub-Gaussian row distributions provide convenient sufficient conditions through standard quadratic-form concentration inequalities, but the localization theorem itself only requires the concentration statement in Assumption~\ref{assum:concentration}.
Combining the population score gap with this concentration condition yields the following guarantee.

\begin{theorem}[Localization guarantee, $s=1$]
\label{thm:selection_single_s1}
For each \(i\in[n]\), define the attention score
\(
\bm{\psi}_0^{(i)}
=
\left[
\bX_{1,\cdot}^{(i)}\bW_0(\bX_{1,\cdot}^{(i)})^\top,\ldots,
\bX_{p^{(i)},\cdot}^{(i)}\bW_0(\bX_{p^{(i)},\cdot}^{(i)})^\top
\right]^\top
\)
and
\(
\bkappa_0^{(i)}
=
\softmax(\bm{\psi}_0^{(i)}).
\)
Suppose that Assumptions~\ref{assum:score-gap} and~\ref{assum:concentration} hold with constants $\delta_0$ and $r_{\eta,p_{\max}}$, respectively. For any $\varepsilon, \eta\in(0,1)$, if 
\begin{equation}\label{eq:score-gap-condition-single}
\delta_0
\ge\max \left\{4r_{\eta,p_{\max}},
2\log\left(\frac{p_{\max}-1}{\varepsilon}\right)
\right\}
\end{equation}
then for each observation $i$,
\begin{equation}\label{eq:bound_kappa_single_s1}
\Prob\left(
\bigl\|\bkappa_0^{(i)}-\balpha^{(i)}\bigr\|_{\infty}
\le
\varepsilon
\right)
\ge
1-\eta .
\end{equation}
\end{theorem}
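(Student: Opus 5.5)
On the event from Assumption~\ref{assum:concentration}, the realized scores stay within $r_{\eta,p_{\max}}$ of their expectations. The plan is to show that on this event the realized active score beats every inactive score by at least $\delta_0/2$. The softmax then gives the $\ell_\infty$ bound deterministically. The probability bound $1-\eta$ is inherited directly from the event, so no union bound over $i$ is needed, since the statement is per observation.

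\textbf{Step 1 (sample-level gap).} Fix $i$ and let $E_i$ be the event in \eqref{assum:score-concentration-eq}, with $\Prob(E_i)\ge 1-\eta$. Write $\psi_j=\bX_{j,\cdot}^{(i)}\bW_0(\bX_{j,\cdot}^{(i)})^\top$ and $m_j=\E\psi_j$. On $E_i$, for every $j\neq j_i^\star$,
\[
\psi_{j_i^\star}-\psi_j\ \ge\ m_{j_i^\star}-m_j-2r_{\eta,p_{\max}}\ \ge\ \delta_0^{(i)}-2r_{\eta,p_{\max}}\ \ge\ \delta_0-\tfrac{\delta_0}{2}=\tfrac{\delta_0}{2}.
\]
This uses Assumption~\ref{assum:score-gap}, $\delta_0\le\delta_0^{(i)}$, and $\delta_0\ge 4r_{\eta,p_{\max}}$ from \eqref{eq:score-gap-condition-single}.

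\textbf{Step 2 (softmax translation).} Let $\kappa_j$ denote the entries of $\bkappa_0^{(i)}$ and set $p=p^{(i)}$. For an inactive $j$,
\[
0\le\kappa_j\le \exp(\psi_j-\psi_{j_i^\star})\le e^{-\delta_0/2}.
\]
For the active row,
\[
1-\kappa_{j_i^\star}=\sum_{j\neq j_i^\star}\kappa_j\le \frac{\sum_{j\ne j_i^\star}e^{\psi_j-\psi_{j_i^\star}}}{1}\le (p-1)e^{-\delta_0/2}.
\]
Since $\balpha^{(i)}$ equals $1$ at $j_i^\star$ and $0$ elsewhere, it follows that $\|\bkappa_0^{(i)}-\balpha^{(i)}\|_\infty\le (p-1)e^{-\delta_0/2}$. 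The condition $\delta_0\ge 2\log((p_{\max}-1)/\varepsilon)$ gives $e^{-\delta_0/2}\le \varepsilon/(p_{\max}-1)$, and hence the bound is at most $(p^{(i)}-1)\varepsilon/(p_{\max}-1)\le\varepsilon$. This holds on $E_i$, which yields \eqref{eq:bound_kappa_single_s1}.

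\textbf{Main obstacle.} The argument is elementary. The only care needed is in two places. First, the score in the theorem is unscaled: there is no $d^{-1/2}$ factor, and any scaling is absorbed into $\bW_0$, for example via $\tau$ in Theorem~\ref{prop1}. Second, one must handle the degenerate case $p_{\max}=1$ (where the bound is trivial, since $\bkappa_0^{(i)}=\balpha^{(i)}$) and the case $p^{(i)}<p_{\max}$, which only helps. I would also remark that the $4r$ requirement (rather than $2r$) leaves the slack $\delta_0/2$ used in Step 2.
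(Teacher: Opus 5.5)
Your proof is correct and follows essentially the same route as the paper. The concentration event turns the population gap into a realized gap of at least $\delta_0-2r_{\eta,p_{\max}}\ge\delta_0/2$, and the softmax bound $(p^{(i)}-1)e^{-\delta_0/2}\le(p_{\max}-1)e^{-\delta_0/2}\le\varepsilon$ finishes it. The differences are cosmetic: the paper isolates your Step~2 as a deterministic gap-to-softmax lemma, and it centers and applies the concentration conditionally on the realized selector $\balpha^{(i)}$ before integrating, which covers random active locations, whereas you treat $j_i^\star$ as fixed.
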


Theorem~\ref{thm:selection_single_s1} makes the mechanism explicit: the query-key matrix creates a quadratic-score gap, concentration transfers this gap to the observed scores, and softmax normalization turns it into high-probability localization. The logarithmic term reflects the number of inactive competitors, while \(r_{\eta,p_{\max}}\) captures the stochastic difficulty of distinguishing their scores.

\begin{remark}[A sufficient condition under sub-Gaussian rows]
For the covariance-based construction in Theorem~\ref{prop1}, standard
quadratic-form concentration shows that, under sub-Gaussian row distributions
with uniformly bounded sub-Gaussian norms, the fluctuation requirement in
Eq.~\eqref{eq:score-gap-condition-single} is satisfied whenever
$d-\operatorname{rank}(\boldsymbol{Z})
\gtrsim
\frac{
\lambda_{\max}^{4}(\boldsymbol{\Sigma}_1)
}{
\lambda_{\min}^{2}(\boldsymbol{\Delta}_{\Sigma})
}
\log \left(
\frac{p_{\max}}{\eta}
\right)$.
Under the informative-subspace
construction described above, $\boldsymbol{W}_0$, or equivalently
$\boldsymbol{W}_Q$ and $\boldsymbol{W}_K$, can be chosen to be low-rank,
reducing model complexity while preserving the positive quadratic-score gap
on the informative subspace.
See Sections~\ref{app:subsec:single-construction} and \ref{app:subsec:informative-subspaces} of the supplementary material.
\end{remark}

\begin{remark}[The temperature parameter $\tau$]
In standard attention, \(1/\sqrt{d}\) in~\eqref{eq:standard_attn_full} controls the scale of dot-product scores. In diagonalized construction, this role is played by the temperature \(\tau\), which rescales the diagonal scores before softmax. A smaller \(\tau\) sharpens the weights and helps satisfy the logarithmic threshold in Theorem~\ref{thm:selection_single_s1}, but does not improve the ratio between the population gap and stochastic fluctuation. Specifically, the theorem requires \( \delta_0 \propto \tau^{-1}\), together with the corresponding concentration condition. Although the argument imposes no theoretical lower bound on \(\tau\), overly small values may cause numerical instability~\citep{zhai2023stabilizing}.
\end{remark}

\subsection{Multi-head localization under class-wise score separation}
\label{sec4-2}

We next consider the case in which each observation contains $s\geq 2$ active rows. We set the number of attention heads to $R=s$, with head $k$ intended to identify the row belonging to the $k$-th signal class. For each $i\in[n]$ and $k\in [s]$, let $j_{i,k}^{\star}$ denote the unique row satisfying $A_{j_{i,k}^{\star},k}^{(i)}=1$. A row with $\bA_{j,\cdot}^{(i)}=\boldsymbol{0}$ is called inactive for observation $i$.

The multi-head identification problem differs from the single-head problem because a given head must distinguish its target class not only from inactive rows, but also from the rows belonging to the other signal classes. 
We formalize the required separation through the following class-wise score-gap condition.

\begin{assumption}[Class-wise score separability]\label{assum:multi-score-gap}
Suppose that there exist matrices
$\{\bW_{0,k}\}_{k=1}^{s}$ and positive constants $\{\delta_k\}_{k=1}^{s}$ such that, for every observation $i$ and every head $k$,

\begin{equation}\label{assum:multi-score-gap-eq}
\begin{aligned}
\delta_k^{(0,i)}
&:=
\E\!\left\{
\bX_{j_{i,k}^{\star},\cdot}^{(i)}
\bW_{0,k}
\bigl(\bX_{j_{i,k}^{\star},\cdot}^{(i)}\bigr)^\top
\right\}-
\max_{\substack{1\leq j\leq p^{(i)}\\
j\notin\{j_{i,1}^{\star},\ldots, j_{i,s}^{\star}\}
}}
\E\!\left\{
\bX_{j,\cdot}^{(i)}\bW_{0,k}
\bigl(\bX_{j,\cdot}^{(i)}\bigr)^\top
\middle| \bA_{j,\cdot}^{(i)}=\boldsymbol{0}
\right\}
>0.\\
\delta_k^{(1,i)}
&:=
\E\!\left\{
\bX_{j_{i,k}^{\star},\cdot}^{(i)}
\bW_{0,k}
\bigl(\bX_{j_{i,k}^{\star},\cdot}^{(i)}\bigr)^\top
\right\}-
\max_{\ell\in\{1,\ldots, s\}\setminus\{k\}}
\E\!\left\{
\bX_{j_{i,\ell}^{\star},\cdot}^{(i)}\bW_{0,k}
\bigl(\bX_{j_{i,\ell}^{\star},\cdot}^{(i)}\bigr)^\top
\right\}
>0.
\end{aligned}
\end{equation}
Let $\delta_k^{(0)} := \min_{i\in[n]}\delta_k^{(0,i)}$ and $\delta_k^{(1)} := \min_{i\in[n]}\delta_k^{(1,i)}$, \(\delta_k:=\min\{\delta_k^{(0)}, \delta_k^{(1)}\},
\) and \(\delta_{\min}:=\min_{ k\in [s]}\delta_k.
\)
\end{assumption}

Assumption~\ref{assum:multi-score-gap} is the multi-head analogue of the single-head score-gap condition. 
For each $k\in [s]$, the matrix $\boldsymbol{W}_{0,k}$ ensures that the row from the \(k\)-th signal class has an expected score at least \(\delta_k^{(0)}\) larger than that of the strongest inactive competitor and at least \(\delta_k^{(1)}\) larger than that of every other signal class.
It allows the inactive rows to have heterogeneous distributions: only the strongest inactive competitor enters the gap. 
The following theorem gives a concrete sufficient construction under common first and second moments for the inactive rows and for each signal class. It also illustrates why multi-head localization requires a subspace in which the target class is more prominent than the remaining signal classes.

\begin{theorem}[A covariance-based sufficient multi-head construction]\label{prop2}
Suppose that, across all observations, all inactive rows have mean $\bmu_0$ and covariance matrix $\bSigma_0$, while rows in signal class $k$ have mean $\bmu_k$ and covariance matrix $\bSigma_k$, for $k\in[s]$. For each $k$, suppose that
\(
\bDelta_k=\bSigma_k-\bSigma_0\succ0
\) and let 
\(
\bZ_k=\bDelta_k^{-1/2}
[\bmu_0,\bmu_1,\ldots,\bmu_s],
\)
\(
\bP_{\bZ_k}^{\perp}
=\bI_d-\bZ_k\bZ_k^\dagger,
\)
where
$\operatorname{rank}(\bZ_k)\le s+1 \ll d$. Define
\(
d_k=d-\operatorname{rank}(\bZ_k).
\)
For $k\in[s]$, let
\[
\bD_k=\sum_{\ell\in\{1,\ldots,s\}\setminus\{k\}}\bDelta_\ell,
\qquad
\widetilde{\bD}_k
=
\bP_{\bZ_k}^{\perp}
\bDelta_k^{-1/2}\bD_k\bDelta_k^{-1/2}
\bP_{\bZ_k}^{\perp}.
\]
Let $d^\ast=\lfloor d/s\rfloor$ and assume that $d^\ast\leq d_k$ for every $k$. Denote the positive eigenvalues of $\widetilde{\bD}_k$ by
\(
\gamma_{k,1}\geq\gamma_{k,2}\geq\ldots\geq\gamma_{k,d_k}>0,
\)
and define the average of its $d^\ast$ smallest eigenvalues by
\(
\bar\gamma_k^\ast
=
\frac{1}{d^\ast}
\sum_{q=d_k-d^\ast+1}^{d_k}\gamma_{k,q},
\)
\(
\bar\gamma_{\max}^\ast=\max_{1\leq k\leq s}\bar\gamma_k^\ast.
\)
Suppose that $\bar\gamma_{\max}^\ast<1$. Let
$\bGamma_k\in\mathbb R^{d\times d^\ast}$ contain the corresponding orthonormal eigenvectors and, for any temperature parameter $\tau>0$, set
\begin{equation}\label{eq:multi-W0}
\bW_{0,k}
=
\frac{1}{\tau}
\bDelta_k^{-1/2}
\bGamma_k\bGamma_k^\top
\bDelta_k^{-1/2}.
\end{equation}
Then, the target class in head $k$ has an exact expected score gap
\(
\delta_{k}^{(0)}
=
\frac{d^\ast}{\tau}
\)
over the inactive rows. Moreover, for every competing signal class $\ell\neq k$, its expected score gap satisfies
\(
\delta_{k}^{(1)}
\geq
\frac{d^\ast(1-\bar\gamma_k^\ast)}{\tau}.
\)
\end{theorem}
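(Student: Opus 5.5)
The plan is to compute each expected quadratic score in closed form and then compare the scores term by term. For a row $\bx$ with mean $\bmu$ and covariance $\bSigma$, and any symmetric $\bW$, we have
\[
\E(\bx\bW\bx^\top)=\operatorname{tr}(\bW\bSigma)+\bmu^\top\bW\bmu .
\]
(The paper writes rows as $1\times d$; $\bmu^\top\bW\bmu$ is shorthand for the mean term.) I will apply this with $\bW=\bW_{0,k}$ from \eqref{eq:multi-W0}. The proof then has three steps: show the mean contributions vanish, reduce the score to a trace involving $\bGamma_k$, and bound that trace for the inactive rows and for the competing signal classes.

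\textbf{Step 1: the mean terms vanish.} First I will settle where $\bGamma_k$ lives. The matrix $\widetilde{\bD}_k$ is sandwiched by the projection $\bP_{\bZ_k}^{\perp}$. Hence every eigenvector with a nonzero eigenvalue lies in the range of $\bP_{\bZ_k}^{\perp}$, which gives $\bP_{\bZ_k}^{\perp}\bGamma_k=\bGamma_k$.

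Next I check that there are exactly $d_k$ positive eigenvalues. For $s\ge2$ each $\bDelta_\ell\succ0$, so $\bD_k\succ0$. Then $\bDelta_k^{-1/2}\bD_k\bDelta_k^{-1/2}\succ0$, and its compression to the $d_k$-dimensional range of $\bP_{\bZ_k}^{\perp}$ is positive definite. So $\gamma_{k,1},\ldots,\gamma_{k,d_k}$ are all positive, and $\bGamma_k$ has $d^\ast\le d_k$ orthonormal columns inside that range.

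Now take any $\ell\in\{0,1,\ldots,s\}$. The vector $\bDelta_k^{-1/2}\bmu_\ell$ is a column of $\bZ_k$, so it is annihilated by $\bP_{\bZ_k}^{\perp}$. Therefore $\bGamma_k^\top\bDelta_k^{-1/2}\bmu_\ell=\bm 0$, and so $\bmu_\ell^\top\bW_{0,k}\bmu_\ell=0$. This holds for the inactive mean and for every signal mean alike.

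\textbf{Step 2: reduce to a trace.} With the mean terms gone, a row with covariance $\bSigma$ has expected score
\[
\frac{1}{\tau}\operatorname{tr}\!\left(\bGamma_k^\top\bDelta_k^{-1/2}\bSigma\bDelta_k^{-1/2}\bGamma_k\right).
\]

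\textbf{Step 3: the two gaps.} For the inactive rows, the difference $\bSigma_k-\bSigma_0=\bDelta_k$ gives a gap of $\tau^{-1}\operatorname{tr}(\bGamma_k^\top\bGamma_k)=d^\ast/\tau$. This is exact, and it is the same for every observation, so $\delta_k^{(0)}=d^\ast/\tau$.

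For a competing class $\ell\ne k$, we have $\bSigma_k-\bSigma_\ell=\bDelta_k-\bDelta_\ell$. The gap is therefore
\[
\frac{1}{\tau}\Bigl[d^\ast-\operatorname{tr}\!\left(\bGamma_k^\top\bDelta_k^{-1/2}\bDelta_\ell\bDelta_k^{-1/2}\bGamma_k\right)\Bigr].
\]
Since every $\bDelta_m\succeq0$, we have $\bDelta_\ell\preceq\bD_k$, and congruence preserves the Loewner order. Using $\bP_{\bZ_k}^{\perp}\bGamma_k=\bGamma_k$, this yields
\[
\operatorname{tr}\!\left(\bGamma_k^\top\bDelta_k^{-1/2}\bDelta_\ell\bDelta_k^{-1/2}\bGamma_k\right)
\le\operatorname{tr}\!\left(\bGamma_k^\top\widetilde{\bD}_k\bGamma_k\right).
\]
Because $\bGamma_k$ consists of the eigenvectors for the $d^\ast$ smallest positive eigenvalues, the right-hand side equals $\sum_{q=d_k-d^\ast+1}^{d_k}\gamma_{k,q}=d^\ast\bar\gamma_k^\ast$. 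Hence $\delta_k^{(1)}\ge d^\ast(1-\bar\gamma_k^\ast)/\tau$. This is positive because $\bar\gamma_{\max}^\ast<1$.

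The main point needing care is Step 1: verifying that the selected eigenvectors lie in the range of $\bP_{\bZ_k}^{\perp}$ and are not null directions. That is why the positive-definiteness of $\bD_k$ when $s\ge2$ and the condition $d^\ast\le d_k$ are needed. The remaining steps are direct trace calculations.
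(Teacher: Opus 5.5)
Your proposal is correct and follows essentially the same route as the paper's proof. Both use the quadratic-score identity and annihilate all whitened means by $\bP_{\bZ_k}^{\perp}$, using $\bD_k\succ0$ to place $\bGamma_k$ in its range. Both then obtain the exact background trace $d^\ast/\tau$ and use the Loewner bound $\bDelta_\ell\preceq\bD_k$, giving $\operatorname{tr}(\bGamma_k^\top\widetilde{\bD}_k\bGamma_k)=d^\ast\bar\gamma_k^\ast$ for the competing-class gap.
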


\begin{remark}[Interpretation and extensions of Theorem~\ref{prop2}]
Theorem~\ref{prop2} constructs each head-specific query--key matrix by selecting a $d^\ast$-dimensional subspace in which interference from the other signal classes is weak. The condition $\bar\gamma_{\max}^\ast<1$ guarantees that, in this subspace, every target class retains a positive expected score advantage over every competing signal class; see Section~\ref{app:sec:overlap-discussion} of the supplementary material for further discussion.
As in the single-head case, heterogeneous inactive rows can be handled through the general score-gap condition in Assumption~\ref{assum:multi-score-gap}. Extensions to covariance contrasts that are positive definite only on informative subspaces are analogous; see Section~\ref{app:subsec:informative-subspaces} of the supplementary material for more details.
The theorem is a sufficient construction, rather than a necessary characterization, of multi-head separability. Its role is to make the source of the class-wise gap explicit; the theorem below requires only Assumption~\ref{assum:multi-score-gap}. 
\end{remark}

If we further assume Assumption \ref{assum:concentration} holds uniformly over each head $k \in [s]$ with a deterministic radius $r_{\eta,p_{\max},s}>0$,
we have the following theorem on multi-head attention.



\begin{theorem}[Localization guarantee, $s>1$]
\label{thm:selection_multi_s1}
For each observation $i$ and each head $k$, define
\(
\boldsymbol{\psi}_{k,0}^{(i)}
=
\left[
\bX_{1,\cdot}^{(i)}\bW_{0,k}
\bigl(\bX_{1,\cdot}^{(i)}\bigr)^\top,\ldots,
\bX_{p^{(i)},\cdot}^{(i)}\bW_{0,k}
\bigl(\bX_{p^{(i)},\cdot}^{(i)}\bigr)^\top
\right]^\top\)
and
\(
\boldsymbol{\kappa}_{k,0}^{(i)}
=
\operatorname{softmax}\!\left(\boldsymbol{\psi}_{k,0}^{(i)}\right).
\)
Let
\( 
\bK_0^{(i)}
=
\left[
\boldsymbol{\kappa}_{1,0}^{(i)},\ldots,
\boldsymbol{\kappa}_{s,0}^{(i)}
\right].
\)
Assume that Assumption \ref{assum:concentration} holds uniformly over each head $1\le k \le s$ with a deterministic radius $r_{\eta,p_{\max},s}>0$.
Suppose that Assumption~\ref{assum:multi-score-gap}  holds with 
\begin{equation}\label{eq:score-gap-condition-multi}
\delta_{\min}
\geq
\max\left\{
4r_{\eta,p_{\max},s},
2\log\left(\frac{p_{\max}-1}{\varepsilon}\right)
\right\}.
\end{equation}
Then, for every observation $i$,
\begin{equation}\label{eq:bound-K-multi}
\Prob\!\left(
\left\|
\bK_0^{(i)}-\bA^{(i)}
\right\|_{\infty}
\leq\varepsilon
\right)
\geq 1-\eta.
\end{equation}
\end{theorem}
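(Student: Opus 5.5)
The plan is to prove Theorem~\ref{thm:selection_multi_s1} head by head, reusing the single-head argument behind Theorem~\ref{thm:selection_single_s1}, and then to combine the heads into a single event. Fix an observation $i$ and a head $k$, and write $m_j=\E\bigl(\bX_{j,\cdot}^{(i)}\bW_{0,k}(\bX_{j,\cdot}^{(i)})^\top\bigr)$. Assumption~\ref{assum:multi-score-gap} then says that the target row $j_{i,k}^\star$ beats every other row $j\neq j_{i,k}^\star$ in expectation. This covers the inactive rows (through $\delta_k^{(0,i)}$) and the competing signal rows $j_{i,\ell}^\star$, $\ell\neq k$ (through $\delta_k^{(1,i)}$). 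Hence
\[
m_{j_{i,k}^\star}-\max_{j\neq j_{i,k}^\star} m_j \ \ge\ \delta_k\ \ge\ \delta_{\min}.
\]
I will treat the conditional expectation given $\bA_{j,\cdot}^{(i)}=\bm 0$ as the expectation for that inactive row, since $\bA^{(i)}$ is the latent design. Under this reading, head $k$ faces exactly the single-head problem, with gap $\delta_{\min}$ and with the $p^{(i)}-1$ competitors being all the other rows.

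Next I define the good event $\mathcal E_i$ on which, for every head $k\in[s]$, $\max_j|\psi_{k,0,j}^{(i)}-m_j|\le r:=r_{\eta,p_{\max},s}$. The uniform-over-heads version of Assumption~\ref{assum:concentration} is read as giving $\Prob(\mathcal E_i)\ge 1-\eta$ directly. If it were only stated per head, one would instead apply a union bound and replace $\eta$ by $\eta/s$ inside $r$; I will note this in a footnote-level remark. On $\mathcal E_i$, every competitor $j$ satisfies
\[
\psi_{k,0,j_{i,k}^\star}^{(i)}-\psi_{k,0,j}^{(i)}\ \ge\ \delta_{\min}-2r\ \ge\ \delta_{\min}/2,
\]
where the last step uses $\delta_{\min}\ge 4r$.

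The softmax step converts this gap into bounds on the weights. The target weight satisfies
\[
1-\kappa_{k,0,j_{i,k}^\star}^{(i)}=\frac{\sum_{j\ne j^\star}e^{\psi_j-\psi_{j^\star}}}{1+\sum_{j\ne j^\star}e^{\psi_j-\psi_{j^\star}}}\le (p^{(i)}-1)e^{-\delta_{\min}/2}\le (p_{\max}-1)e^{-\delta_{\min}/2}\le\varepsilon,
\]
where the final inequality uses $\delta_{\min}\ge 2\log((p_{\max}-1)/\varepsilon)$. Every non-target weight is bounded by the same total mass, so $\kappa_{k,0,j}^{(i)}\le\varepsilon$ for $j\neq j_{i,k}^\star$. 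Since column $k$ of $\bA^{(i)}$ is the indicator of $j_{i,k}^\star$, this gives $\max_j|(\bK_0^{(i)})_{j,k}-A^{(i)}_{j,k}|\le\varepsilon$.

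This holds simultaneously for all $k$ on $\mathcal E_i$, which yields \eqref{eq:bound-K-multi}. The only delicate step is making sure the per-head gap really covers the other signal rows. That is exactly what $\delta_k^{(1)}$ supplies, together with the simultaneity of concentration across heads; the remaining steps are routine.
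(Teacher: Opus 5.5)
Your proposal is correct and follows essentially the same route as the paper. The paper also applies a deterministic gap-to-softmax bound (Lemma~\ref{app:lem:deterministic-softmax}) to each head on a single joint head--row concentration event, using $\delta_k^{(0)}$ and $\delta_k^{(1)}$ exactly as you do to get the realized gap $\delta_{\min}-2r\ge\delta_{\min}/2$. The only cosmetic difference is that the paper conditions on the realized $\bA^{(i)}$, assumes concentration uniformly over realizations, and then integrates, whereas you treat $\bA^{(i)}$ as a fixed design; the two amount to the same argument.
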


Theorem~\ref{thm:selection_multi_s1} has the same structure as the single-head result. For each head, the target class must beat its strongest competitor by a gap large enough to dominate both score fluctuations and the softmax threshold. The joint concentration condition allows the guarantees for all heads to hold simultaneously, yielding recovery of the full latent indicator matrix.

\begin{remark}[A concrete sufficient condition under sub-Gaussian rows]
Under the construction in Theorem~\ref{prop2}, assuming sub-Gaussian row distributions with uniformly bounded norms, standard quadratic-form concentration gives, up to constants depending on these norms,
\[
r_{\eta,p_{\max},s}
\lesssim
\frac{1}{\tau}
\frac{
\max_{0\leq \ell\leq s}\lambda_{\max}(\bSigma_\ell)
}{
\min_{1\leq k\leq s}\lambda_{\min}(\bDelta_k)
}
\sqrt{
\frac{d}{s}\log\left(\frac{s p_{\max}}{\eta}\right)
}.
\]
This bound, together with the lower bound on $\delta_{\min}$ in Theorem~\ref{prop2}, gives a direct sufficient dimensional condition for multi-head high-probability localization. See Section~\ref{app:subsec:multi-construction} of the supplementary material for more details.
\end{remark}

\subsection{Prediction risk analysis}\label{prediction}

We briefly recall how the preceding localization result translates into
prediction risk. Consider the individualized linear model
in~\eqref{eq:individualized_framework},
\(
y^{(i)}
=
\langle \bX^{(i)},\bC^{(i)}\rangle+\xi^{(i)}\) with \( 
\xi^{(i)}\sim\mathcal N(0,\sigma^2), 
\)
and estimate it by solving the diagonalized-attention problem~\eqref{obj} with \(R=s\). 
Let
\(
\btheta=\left(\bV,\{\bW_k\}_{k=1}^s\right)
\)
collect the trainable parameters, whose total dimension is \(D=sd+sd^2\). Recall that \(z(\bX^{(i)})\), defined
in~\eqref{eq:final_estimator_multi}, is the diagonalized-attention prediction.
The empirical and population risks are
\(
\mathcal L_n(\btheta)
=
\frac{1}{n}\sum_{i=1}^n
\left[y^{(i)}-z(\bX^{(i)})\right]^2
\) and 
\(
\mathcal L(\btheta)=\E\left[\mathcal L_n(\btheta)\right].
\)
Let \(\mathcal L^\ast\) denote the oracle population risk when true \(\bB\) and
\(\bA^{(i)}\) are known. Since
\(\xi^{(i)}\sim\mathcal N(0,\sigma^2)\),
\(
\mathcal L^\ast
=
\E\left[
y^{(i)}
-
\left\langle
\bB,(\bX^{(i)})^\top\bA^{(i)}
\right\rangle
\right]^2
=
\sigma^2.
\)
Our goal is to bound the excess prediction risk
\(
\mathcal L(\hbtheta)-\mathcal L^\ast,
\)
where
\(\hbtheta=(\hbV,\{\hbW_k\}_{k=1}^s)\)
solves problem~\eqref{obj}.

We impose the following empirical-process regularity condition:
\begin{equation}\label{eq:uniform-loss-condition}
\sup_{\btheta\in\bTheta}
\left|
\mathcal L_n(\btheta)-\mathcal L(\btheta)
\right|
=
O_{\Prob}\left(
\sqrt{\frac{D\log n}{n}}
\right).
\end{equation}
This condition can be verified for suitable parameter classes under
sub-Gaussian design and noise assumptions, uniform envelope control, and
standard entropy bounds for the induced squared-loss class. Precise sufficient
conditions are provided in
Section~\ref{app:subsec:empirical-process} of the supplementary material.

\begin{theorem}[Excess prediction risk bound]\label{thm:erm_excess}
Suppose that the uniform empirical-process condition~\eqref{eq:uniform-loss-condition} holds and that the true coefficients satisfy
$\max_{1\le k\le s}\|\bbeta_k\|_2\le C_{\beta}\sqrt{d}$ for certain constant $C_{\beta}$. Further assume the standard row-wise regularity condition detailed in Section~\ref{app:assum:rowwise-regularity}, together with the conditions of Theorem~\ref{thm:selection_multi_s1}. If the score-gap
condition~\eqref{eq:score-gap-condition-multi} holds with
$(\varepsilon,\eta)=(\varepsilon_n,\eta_n)$, then
\begin{equation}\label{eq:scale-explicit-excess}
\mathcal L(\widehat\btheta)-\mathcal L^\ast
=
O_{\Prob}\left[
\sqrt{\frac{D\log n}{n}}
+
s^2 d
\left\{
(d+\log p_{\max})\varepsilon_n^2
+
\eta_n\left(
d+\log\frac{p_{\max}}{\eta_n}
\right)
\right\}
\right].
\end{equation}
Moreover, when $(\varepsilon_n,\eta_n)$ satisfies \(
\varepsilon_n \asymp d^{-1/2}s^{-3/4} \left( \frac{\log n}{n} \right)^{1/4}, 
\)
\(
\eta_n \asymp d^{-1}s^{-3/2} \left( \frac{\log n}{n} \right)^{1/2},
\)
the final excess prediction risk reduces to 
\begin{equation}\label{th-4.8-2}
\mathcal L(\widehat\btheta)-\mathcal L^\ast
=
O_{\Prob}\left(
\sqrt{\frac{D\log n}{n}}
\right).
\end{equation}
\end{theorem}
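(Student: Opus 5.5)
We follow the standard ERM oracle decomposition, using the construction $\btheta_0=(\bB,\{\bW_{0,k}\}_{k=1}^s)$ from Theorem~\ref{thm:selection_multi_s1} as a comparison point. We assume $\btheta_0\in\bTheta$, or that $\bTheta$ is large enough to contain it. Since $\hbtheta$ minimizes $\mathcal L_n$, we have
\[
\mathcal L(\hbtheta)-\mathcal L^\ast
\le \bigl[\mathcal L(\hbtheta)-\mathcal L_n(\hbtheta)\bigr]+\bigl[\mathcal L_n(\btheta_0)-\mathcal L(\btheta_0)\bigr]+\bigl[\mathcal L(\btheta_0)-\mathcal L^\ast\bigr].
\]
The first two brackets are each bounded by the uniform deviation in \eqref{eq:uniform-loss-condition}, which gives the $O_{\Prob}(\sqrt{D\log n/n})$ term. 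It remains to bound the deterministic approximation error of $\btheta_0$.

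Because the noise is independent of the design and has mean zero, $\mathcal L(\btheta_0)-\mathcal L^\ast=\E[\langle \bB,(\bX^{(i)})^\top(\bK_0^{(i)}-\bA^{(i)})\rangle^2]$, averaged over $i$. This equals $\E[(\sum_{k}\bbeta_k^\top (\bX^{(i)})^\top(\bkappa_{k,0}^{(i)}-\balpha_k^{(i)}))^2]$. Let $G_i$ be the event in Theorem~\ref{thm:selection_multi_s1}, with $\Prob(G_i^c)\le\eta_n$. We split the expectation over $G_i$ and $G_i^c$.

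On $G_i$, each head places mass at least $1-\varepsilon_n$ on its target row, so $\|\bkappa_{k,0}^{(i)}-\balpha_k^{(i)}\|_1\le 2\varepsilon_n$. We need an $\ell_2$-type control rather than $\ell_1$ times a maximum. The proof of Theorem~\ref{thm:selection_single_s1} gives exponential decay of the inactive weights: each inactive weight is at most $e^{-\delta/2}$ times the active one, with $(p-1)e^{-\delta/2}\le\varepsilon$. Write $\bX^{(i)\top}(\bkappa-\balpha)=\sum_j(\kappa_j-\alpha_j)\bX_{j,\cdot}^{(i)\top}$. Cauchy--Schwarz then gives
\[
\bigl(\bbeta_k^\top \bX^{\top}(\bkappa-\balpha)\bigr)^2\le \|\bkappa-\balpha\|_1\sum_j|\kappa_j-\alpha_j|(\bX_{j,\cdot}\bbeta_k)^2\le 4\varepsilon_n^2\max_j(\bX_{j,\cdot}\bbeta_k)^2 .
\]
The row-wise regularity condition (uniformly sub-Gaussian rows with bounded second moments) gives $\E\max_j(\bX_{j,\cdot}\bbeta_k)^2\lesssim\|\bbeta_k\|_2^2(1+\log p_{\max})$, or $\lesssim \|\bbeta_k\|_2^2+ d\log p_{\max}$ when the means are of order $\sqrt d$. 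With $\|\bbeta_k\|_2^2\le C_\beta^2 d$, this is $\lesssim d(d+\log p_{\max})$. The cross terms over $k$ cost a factor $s^2$, so the $G_i$ part is $\lesssim s^2d(d+\log p_{\max})\varepsilon_n^2$.

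On $G_i^c$ we use only $\|\bkappa-\balpha\|_1\le 2$. The same Cauchy--Schwarz step bounds the integrand by $4s^2\max_{k,j}(\bX_{j,\cdot}\bbeta_k)^2$. We then need $\E[\max_j(\bX_{j,\cdot}\bbeta_k)^2\mathbf 1_{G_i^c}]$. For this we use the truncation inequality $\E[M\mathbf 1_E]\le t\Prob(E)+\E[M\mathbf 1_{M>t}]$. Taking $t\asymp d(d+\log(p_{\max}/\eta_n))$ and using sub-Gaussian tails of $\max_j$ makes the tail term $O(\eta_n t)$. This yields the term $s^2 d\,\eta_n(d+\log(p_{\max}/\eta_n))$. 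Together with the previous steps, this proves \eqref{eq:scale-explicit-excess}.

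For \eqref{th-4.8-2} we substitute the given rates. We have $s^2d\cdot d\,\varepsilon_n^2\asymp s^{1/2}d(\log n/n)^{1/2}$, which is at most $\sqrt{D\log n/n}$ since $D\ge sd^2$. Likewise $s^2d^2\eta_n\asymp s^{1/2}d(\log n/n)^{1/2}$ is of the same order. The logarithmic factors $\log p_{\max}$ and $\log(1/\eta_n)$ are absorbed under the implicit assumption $\log p_{\max}\lesssim d$ and $\log n\lesssim d$, or else they can be tracked as harmless extra logarithms.

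The main obstacle is the $G_i^c$ term and the tail control of $\max_j$. The rows are dependent, and $\eta_n$ is tied to the concentration event rather than to the design tails. The truncation argument must therefore rely on the row-wise regularity condition alone, and must not assume independence between $G_i^c$ and the row magnitudes.
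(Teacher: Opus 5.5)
Your proof has the same architecture as the paper's: the comparator $(\bB,\{\bW_{0,k}\}_{k=1}^s)$ is assumed to lie in $\bTheta$, and the estimation part is bounded by $2\sup_{\btheta}|\mathcal L_n(\btheta)-\mathcal L(\btheta)|$. The approximation error is split over the localization event of Theorem~\ref{thm:selection_multi_s1} and its complement, with a truncation argument on the bad event that needs no independence among rows. The rate balancing is also the same; your implicit $\log p_{\max},\log n\lesssim d$ is the paper's extra condition $\ell_n\asymp h_n\asymp d$.

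The step that fails under the paper's hypotheses is your bound on $\E\max_j(\bX_{j,\cdot}^{(i)}\bbeta_k)^2$. The paper's row-wise regularity assumption controls only two things: the centered rows, through a uniform sub-Gaussian norm $K_X$, and the mean \emph{differences}, $\|\bmu_j^{(i)}-\bmu_\ell^{(i)}\|_2\le C_\mu\sqrt d$. It deliberately says nothing about absolute means or second moments. If all rows share a mean $\bm{c}$ with $|\bm{c}^\top\bbeta_k|$ large, your quantity can be arbitrarily large. Then neither the $d(d+\log p_{\max})$ bound on $G_i$ nor the truncation threshold on $G_i^c$ is available. Your clause ``when the means are of order $\sqrt d$'' is therefore an added hypothesis. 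Even granting it, the mean contribution is $(\bmu_j^\top\bbeta_k)^2\lesssim d\|\bbeta_k\|_2^2$, not $\|\bbeta_k\|_2^2$, although your final $d(d+\log p_{\max})$ is correct.

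The repair is the observation the paper's proof is built on. Each column of $\bK_0^{(i)}-\bA^{(i)}$ sums to zero, so
$(\bX^{(i)})^\top(\bkappa_{k,0}^{(i)}-\balpha_k^{(i)})=\sum_{j\ne j_{i,k}^\star}\kappa_{j,k,0}^{(i)}\bigl(\bX_{j,\cdot}^{(i)}-\bX_{j_{i,k}^\star,\cdot}^{(i)}\bigr)^\top$,
and on $G_i$ these weights sum to at most $\varepsilon_n$.

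The paper bounds this by $\|\bbeta_k\|_2\,\varepsilon_n R_i$, where $R_i=\max_{j,\ell}\|\bX_{j,\cdot}^{(i)}-\bX_{\ell,\cdot}^{(i)}\|_2$ is the row diameter. It then proves $\E R_i^2\lesssim d+\log p_{\max}$, and $\E(R_i^2\mathbf 1_{\mathcal E})\lesssim\eta\{d+\log(p_{\max}/\eta)\}$ whenever $\Prob(\mathcal E)\le\eta$.

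If you apply your Cauchy--Schwarz step to these row differences instead of the raw rows, the rest of your argument goes through using mean differences only. It is also slightly sharper: the fluctuation part becomes $\|\bbeta_k\|_2^2\log p_{\max}$ rather than the $\|\bbeta_k\|_2^2(d+\log p_{\max})$ hidden in $R_i^2$. The worst-case mean-difference term $\|\bbeta_k\|_2^2C_\mu^2 d\lesssim d^2$ still dominates, so the stated rate is unchanged.
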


The first term in (\ref{eq:scale-explicit-excess}) reflects the empirical risk error associated with the effective
parameter dimension, whereas the remaining terms quantify the error due to
imperfect localization. If additional structure in $\bB$ or $\bW_0$, such as
low rank, is known, incorporating the corresponding constraints into the
optimization may further reduce the empirical risk error; see
Section~\ref{app:subsec:structured-risk} of the supplementary material. When the localization error is appropriately controlled by suitable choices of
$(\eps_n,\eta_n)$, balancing it with the empirical risk error yields the final
prediction risk bound in~\eqref{th-4.8-2}.


\begin{remark}[Dimensional implications]
For the excess prediction risk in Theorem \ref{thm:erm_excess} to vanish, we require
\(
D\log n=o(n).
\)
The score-gap condition~\eqref{eq:score-gap-condition-multi} further imposes
growth restrictions on the maximum number of rows \(p_{\max}\) and the row
dimension \(d\). Nevertheless, our theory accommodates input matrices of fairly
large dimensions. For example, the conditions of the theorem can hold when 
\[
d=o\left(\sqrt{\frac{n}{s\log n}}\right),
\qquad
p_{\max}
=O\left(
\frac{1}{s^{5/2}}
\sqrt{\frac{\log n}{n}}
\exp\left(c\frac{d}{s}\right)
\right),
\]
for a sufficiently small constant $c>0$.
The corresponding compatibility conditions are provided in
Section~\ref{app:subsec:growth-conditions} of the supplementary material.
\end{remark}

\section{Simulation studies}
\label{section: simulation}
In this section, we conduct simulations to evaluate the proposed diagonalized attention method in terms of prediction accuracy and localization of sample-specific active rows. We consider two regimes: a Gaussian regression setting consistent with model~(\ref{model1}) and a Gaussian-mixture classification setting whose conditional mean falls outside the GLM in~(\ref{model1}), providing a deliberately misspecified test of the localization mechanism. For comparability of ROI identification across methods, we set \(p^{(1)}=\ldots=p^{(n)}\coloneq p\).

\subsection{Gaussian-distributed active rows for regression}\label{sec:simu_reg}
We first consider linear regression settings in which active and inactive rows differ in their Gaussian distributions. The response is generated from the individualized sparse model
\begin{equation}
	\label{eq:DGP-reg}
	y^{(i)} = \sum_{j=1}^s \left\langle\bX_{S_j^{(i)},\cdot}^{(i)},\ \bbeta_j^\top\right\rangle + \varepsilon^{(i)},\quad i\in[n],
\end{equation}
where $\varepsilon^{(i)}\sim\calN(0,1)$, and $\calS^{(i)}\coloneq\{S_j^{(i)}\}_{j=1}^s$ is the set of active rows for sample $i$. The coefficient vector $\bbeta_j\in\bbR^d$ is shared across samples for the $j$-th active row and has entries independently generated from $\calN(1,1)$. For each sample, $s$ distinct active indices are randomly drawn from $\{1,\ldots,p\}$. Conditional on these indices, the active row $\bX_{S_j^{(i)},\cdot}^{(i)}$ is generated from $\calN_d(\mu_j\bmu,\sigma_j^2\bSigma)$, while all inactive rows are generated from $\calN_d(\bmu,\bSigma)$. The entries of $\bmu\in\bbR^d$ are independently drawn from $\calN(0,0.1^2)$, and $\bSigma$ is a correlation matrix to be specified later. 

We compare the proposed diagonalized attention mechanism (denoted as DiagAtt) with four competitors: group-lasso regression (\citealp[GroupLasso]{yuan2006model}), which ignores heterogeneity in signal locations, the multidirectional separation penalty (\citealp[MDSP]{tang2021individualized}), the standard multi-head attention (denoted as StdAtt), and a two-step clustering-based procedure using Gaussian mixture modeling (denoted as Two-Step). In this simulation setting, active rows across all observations are assumed to follow one common distribution, while non-active rows follow another. A straightforward two-step approach is therefore to treat the full collection of \(\sum_{i=1}^n p^{(i)}\) rows from the \(n\) samples as independent observations, apply a standard clustering algorithm such as a Gaussian mixture model to identify the active rows, and then use only the identified rows in the subsequent regression or classification analysis.
Additionally, GroupLasso treats each row as a group to encourage row-wise sparsity, while MDSP uses a row-wise multidirectional separation penalty with two latent subgroups to achieve row localization and individualized effects.
For both StdAtt and DiagAtt, we consider single-head and multi-head attention while keeping their architectures identical. 
Additional implementation details of different methods are given in Section~\ref{supp_sec:detail} of the supplementary material.

To evaluate prediction and active-row localization, we independently generate a test set of size \(n_{\mathrm{test}}=n/4\), where \(n\) is the training sample size. We report the test Root Mean Squared Error (RMSE) and Correct ROI Localization Rate (CRLR), i.e., 
\[ \text{RMSE}=\frac{\|\bm{Y}^{\mathrm{test}}-\widehat{\bm{Y}}^{\mathrm{test}}\|_2}{\sqrt{n_{\mathrm{test}}}}, \ \ 
\mathrm{CRLR}
= \frac{1}{n_{\mathrm{test}}}
\sum_{i=1}^{n_{\mathrm{test}}}
\frac{\mathrm{Card}\bigl(\calS^{(i)}\cap\widehat{\calS}^{(i)}\bigr)}
{\mathrm{Card}\bigl(\calS^{(i)}\bigr)}\times 100\%,
\]
where $\widehat{\calS}^{(i)}$ denotes the predicted active row set for sample $i$. We report two sets of experiments: (i) varying the sample size and dimension and (ii) varying the number of active rows per sample. All results are averaged over 100 independent replications.

\textbf{\textsc{Varying sample size and dimension.}}
We begin with the single-signal setting $s=1$ and $\mu_1=\sigma_1^2=2$. Table~\ref{tab:simu-reg} reports prediction and ROI localization results over different matrix dimensions with a sample size of $n=9000$. Additional results for $n=18000$ are given in Section~\ref{supp_sec:add_simu_reg} of the supplementary material. DiagAtt gives the lowest RMSE in all configurations and also achieves the highest CRLR. The improvement over StdAtt is particularly informative, since both methods use attention-type representations but differ in scoring structures. The results suggest that discarding the off-diagonal query-key interactions and concentrating on row-wise diagonal scores leads to more reliable localization in this individualized sparse setting. 
For the comparison between single-head and multi-head attention, since the number of active rows is fixed at \(s=1\) in this setting, single-head attention is sufficient and demonstrates better performance. The case with \(s>1\) is presented later.

\begin{table}[!htbp]
    \centering
    \fontsize{10.5}{12}\selectfont
    \renewcommand{\arraystretch}{1.25}
    \caption{Performance of different methods across varying dimensions under the Gaussian regression setting with $n=9000$. The best and second-best results are marked by \textbf{bold} and \underline{underline}, respectively.}
    \label{tab:simu-reg}
    \begin{adjustbox}{max width=\linewidth}
\begin{tabular}{c|c|c|c|c|c}
        \hline
        Metric                          & Method           & $(p,d)=(100,70)$          & $(p,d)=(100,90)$          & $(p,d)=(120,70)$          & $(p,d)=(120,90)$          \\ \hline
        \multirow{7}{*}{RMSE}   & GroupLasso       & 15.181 (0.239)            & 16.122 (0.224)            & 15.262 (0.197)            & 16.145 (0.226)            \\
                                      & MDSP             & 8.707 (0.107)             & 8.138 (0.106)             & {\ul 7.997 (0.111)}       & 7.544 (0.105)             \\
                                      & Two-Step         & 15.483 (0.278)            & 16.285 (0.522)            & 15.672 (0.222)            & 16.309 (0.329)            \\
                                      & 1-head StdAtt    & 14.913 (0.214)            & 15.511 (0.232)            & 15.180 (0.190)            & 15.778 (0.222)            \\
                                      & $R$-head StdAtt  & 15.360 (0.229)            & 16.034 (0.255)            & 15.871 (0.209)            & 16.637 (0.244)            \\
                                      & 1-head DiagAtt   & \textbf{7.294 (0.259)}    & \textbf{6.000 (0.249)}    & \textbf{7.716 (0.267)}    & \textbf{6.716 (0.287)}    \\
                                      & $R$-head DiagAtt & {\ul 7.779 (0.269)}       & {\ul 6.122 (0.263)}       & 8.210 (0.278)             & {\ul 6.883 (0.298)}       \\ \hline
        \multirow{7}{*}{CRLR} & GroupLasso       & 9.97\% (0.68\%)           & 10.07\% (0.58\%)          & 8.30\% (0.52\%)           & 8.36\% (0.52\%)           \\
                                      & MDSP             & 31.00\% (1.25\%)          & 33.92\% (1.07\%)          & 26.46\% (1.35\%)          & 27.97\% (0.89\%)          \\
                                      & Two-Step         & 24.72\% (14.33\%)         & 26.59\% (23.32\%)         & 11.98\% (9.68\%)          & 25.26\% (16.17\%)         \\
                                      & 1-head StdAtt    & 68.64\% (1.09\%)          & 77.48\% (1.02\%)          & 64.83\% (1.16\%)          & 72.98\% (0.97\%)          \\
                                      & $R$-head StdAtt  & 69.76\% (1.12\%)          & 77.58\% (1.17\%)          & 66.81\% (1.02\%)          & 74.90\% (1.06\%)          \\
                                      & 1-head DiagAtt   & {\ul 79.13\% (0.96\%)}    & {\ul 87.43\% (0.73\%)}    & {\ul 76.95\% (0.91\%)}    & {\ul 84.15\% (0.89\%)}    \\
                                      & $R$-head DiagAtt & \textbf{79.55\% (0.95\%)} & \textbf{87.94\% (0.77\%)} & \textbf{77.57\% (0.92\%)} & \textbf{84.91\% (0.89\%)} \\ \hline
    \end{tabular}
\end{adjustbox}
\end{table}

The patterns across $(p,d)$ are consistent with the theory. Decreasing $p$ reduces the number of candidate noise rows competing with the true ROI, and increasing $d$ provides more row-wise distributional information for separating active rows from background rows. These effects are visible for DiagAtt in both RMSE and CRLR. In contrast, GroupLasso performs poorly because it imposes fixed row-level structure across samples. MDSP attains relatively competitive prediction error but much weaker ROI localization, reflecting a mismatch between subgroup-level coefficient heterogeneity and the randomly located ROI structure used here. The two-step procedure is unstable, associated with the class-imbalance and within-sample dependence concerns.

\textbf{\textsc{Varying Sparsity Level.}}
We now consider the setting in which each sample contains multiple active rows. Specifically, we examine the case $s=2$ with $(\mu_1,\sigma_1^2)=(1,3)$ and $(\mu_2,\sigma_2^2)=(4,2)$, as well as the case $s=3$ with $(\mu_1,\sigma_1^2)=(1,3)$, $(\mu_2,\sigma_2^2)=(4,2)$, and $(\mu_3,\sigma_3^2)=(-4,2)$. The sample size is increased to $n=27000$, while the remaining parameters are fixed at $\rho=0$ and $(p,d)=(100,70)$. Here, we consider the number of attention heads $R\in\{1,2,3,4,5\}$.

Table~\ref{tab:simu-cardinality} shows that increasing the number of active rows substantially increases the difficulty of the problem for most methods. Nevertheless, the multi-head DiagAtt maintains a clear advantage over other competitors in terms of both RMSE and CRLR, whereas the single-head version suffers a marked performance deterioration because a single head can identify at most one active row among the $s$ active rows. This result further suggests that the multi-head construction can allocate different heads to different active rows, supporting the theoretical role of multi-head diagonalized attention in extending single-ROI localization to multiple individualized signal rows.

\begin{table}[!htbp]
    \centering
    \fontsize{10.5}{12}\selectfont
	\renewcommand{\arraystretch}{1.25}
    \caption{Performance of different methods across varying numbers of active rows per sample under the Gaussian regression setting. The best and second-best results are marked by \textbf{bold} and \underline{underline}, respectively.}
    \label{tab:simu-cardinality}
    \begin{tabular}{c|cc|cc}
        \hline
        \multirow{2}{*}{Method} & \multicolumn{2}{c|}{$s=2$}                                              & \multicolumn{2}{c}{$s=3$}                                               \\ \cline{2-5} 
                                & \multicolumn{1}{c|}{RMSE}                   & CRLR                      & \multicolumn{1}{c|}{RMSE}                   & CRLR                      \\ \hline
        GroupLasso              & \multicolumn{1}{c|}{27.431 (0.241)}         & 9.97\% (0.28\%)           & \multicolumn{1}{c|}{33.401 (0.288)}         & 9.98\% (0.26\%)           \\
        MDSP                    & \multicolumn{1}{c|}{11.995 (0.239)}         & 71.58\% (0.77\%)          & \multicolumn{1}{c|}{{\ul 12.099 (0.254)}}   & 75.87\% (0.62\%)          \\
        Two-Step                & \multicolumn{1}{c|}{23.220 (2.516)}         & 41.27\% (10.02\%)         & \multicolumn{1}{c|}{23.759 (3.305)}         & 54.98\% (13.19\%)         \\
        1-head StdAtt           & \multicolumn{1}{c|}{22.304 (0.215)}         & 85.68\% (0.42\%)          & \multicolumn{1}{c|}{23.239 (0.243)}         & {\ul 88.45\% (0.30\%)}    \\
        $R$-head StdAtt         & \multicolumn{1}{c|}{19.215 (0.201)}         & {\ul 87.95\% (2.34\%)}    & \multicolumn{1}{c|}{21.029 (0.222)}         & 81.78\% (4.51\%)          \\
        1-head DiagAtt          & \multicolumn{1}{c|}{{\ul 11.954 (0.138)}}   & 48.36\% (0.12\%)          & \multicolumn{1}{c|}{17.745 (0.205)}         & 32.03\% (0.09\%)          \\
        $R$-head DiagAtt        & \multicolumn{1}{c|}{\textbf{5.012 (0.208)}} & \textbf{95.98\% (0.17\%)} & \multicolumn{1}{c|}{\textbf{8.874 (0.185)}} & \textbf{89.30\% (0.39\%)} \\ \hline
    \end{tabular}
\end{table}

\subsection{Gaussian-mixture-distributed active rows for classification}\label{sec:simu_cla}
We next consider a binary classification setting that departs from the preceding linear regression model. The inactive rows follow the same background Gaussian distribution, while the active row follows a class-dependent Gaussian mixture. We restrict \(s=1\) and generate the data as follows:
\begin{equation}
	\label{eq:data-generation-cla}
	y^{(i)}\sim\mathrm{Ber}(\pi)\quad \mathrm{and}\quad \bX_{j,\cdot}^{(i)}\sim
	\begin{cases}
		\calN_d(\bmu,\bSigma),\quad &\mathrm{if}\ j\notin \calS^{(i)};\\
		\calN_d(\mu_P\bmu,\sigma_P^2\bSigma),\quad &\mathrm{if}\ j\in \calS^{(i)}\ \mathrm{and}\ y^{(i)}=1;\\
		\calN_d(\mu_N\bmu,\sigma_N^2\bSigma),\quad &\mathrm{if}\ j\in \calS^{(i)}\ \mathrm{and}\ y^{(i)}=0.
	\end{cases}
\end{equation}
Here, $y^{(i)}$ belongs to either the positive or negative class, and $\mathrm{Ber}(\pi)$ denotes a Bernoulli distribution with success probability $\pi$. This setting mimics real-world classification problems arising in sentiment analysis and sparse image regression. However, under this design, the responses are not generated from an explicit linear conditional mean model. Therefore, this experiment evaluates whether accurate individualized localization remains beneficial when the regression model is misspecified.

We set $\pi=0.5$ for class balance and use $(\mu_P,\sigma_P^2)=(3,2)$ and $(\mu_N,\sigma_N^2)=(-3,2)$ to create a clear separation between positive and negative active rows. The active set $\calS^{(i)}$, mean vector $\bmu$, and covariance matrix $\bSigma$ are generated as in Section~\ref{sec:simu_reg}, and all the rows are independent within each sample in this experiment. Figure~\ref{fig:simu-mixture-main} summarizes classification and ROI localization results for $n\in\{9000,18000\}$ and $(p,d)=(100,70)$; additional results are reported in Section~\ref{supp_sec:add_simu_cla} of the supplementary material.

DiagAtt remains competitive across all configurations, both in classification accuracy and ROI localization. This indicates that the diagonalized attention weights can still locate class-informative rows even when the final linear prediction model is not correctly specified. 
When the class-informative rows are correctly identified by $\bkappa^{(i)}$ in DiagAtt, a standard logistic regression is sufficient to produce accurate classification results.
The experiment demonstrates the effectiveness of the attention-based method beyond linear data-generating models.

\begin{figure}[!htbp]
	\centering
	\includegraphics[width=1\textwidth]{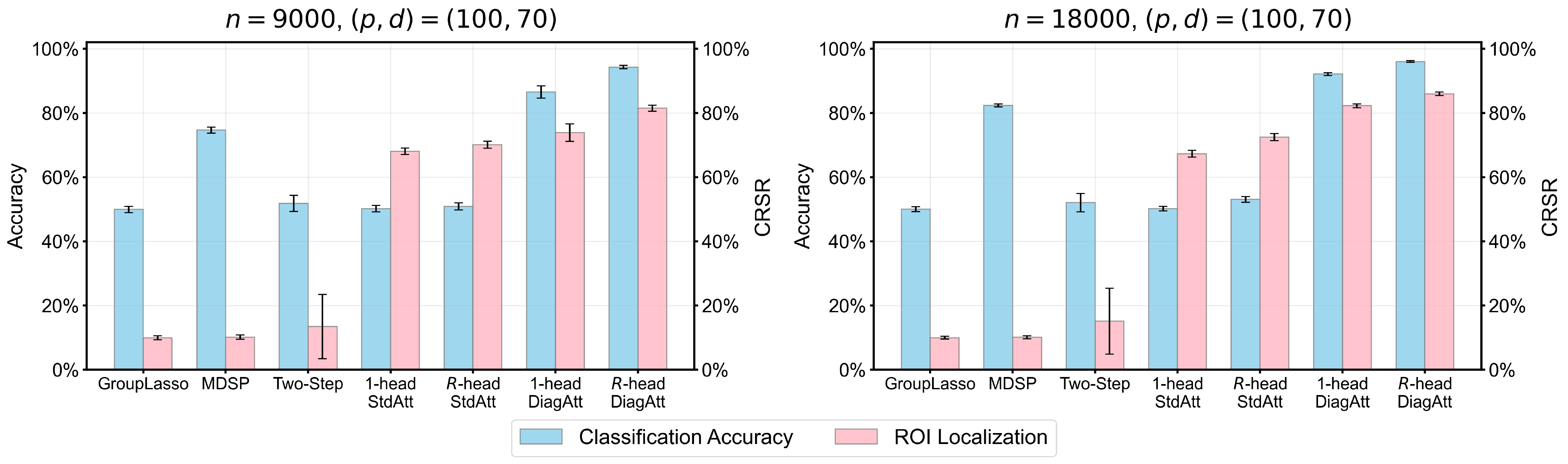}
	\caption{Performance of different methods under the Gaussian-mixture classification setting with $n\in\{9000,18000\}$ and $(p,d)=(100,70)$. Each subplot displays classification accuracy (left axis, measured by Accuracy) and ROI localization (right axis, measured by CRLR), with bar heights and error bars representing the average and standard deviation of each metric, respectively.}
	\label{fig:simu-mixture-main}
\end{figure}


\section{Real data analysis}\label{section: real}
In this section, we study sentiment analysis, a real-world language application, using the diagonalized attention mechanism. Sentiment analysis is a fundamental task in natural language processing that leverages semantic information in texts to determine sentiment orientation, typically categorized as positive or negative, and has broad applications in public opinion monitoring and decision-making. 
We consider the \textit{Sentiment Labelled Sentences} dataset from the UC Irvine Machine Learning Repository\footnote{\url{https://archive.ics.uci.edu/dataset/331/sentiment+labelled+sentences}}.  This dataset consists of 3,000 short comments on movies, products, shops, and restaurants, each labeled as positive or negative. The labels are balanced between positive and negative classes, with a positive class ratio of 0.5.

As described in Example~\ref{eg:language}, we first tokenize each comment into a sequence of tokens and represent each token as a feature vector of dimension $d=1024$. This representation is obtained using a pretrained language model from Hugging Face\footnote{\url{https://huggingface.co/}}, a popular open-source AI platform. Each comment is then transformed into an embedding matrix $\bX^{(i)}$ by stacking the corresponding $p^{(i)}$ feature vectors in the original token order. Figure~\ref{fig:realUCI}(\hyperref[fig:realUCI]{a}) shows that the token counts \(p^{(i)}\) vary substantially across comments and are right-skewed, with only a small fraction of comments containing many tokens. To apply the competitors, all \(\bX^{(i)}\) must be zero-padded to a common matrix dimension, which may introduce redundancy and unnecessary computational burden.

\begin{figure}[!htbp]
	\centering
	\includegraphics[width=\textwidth]{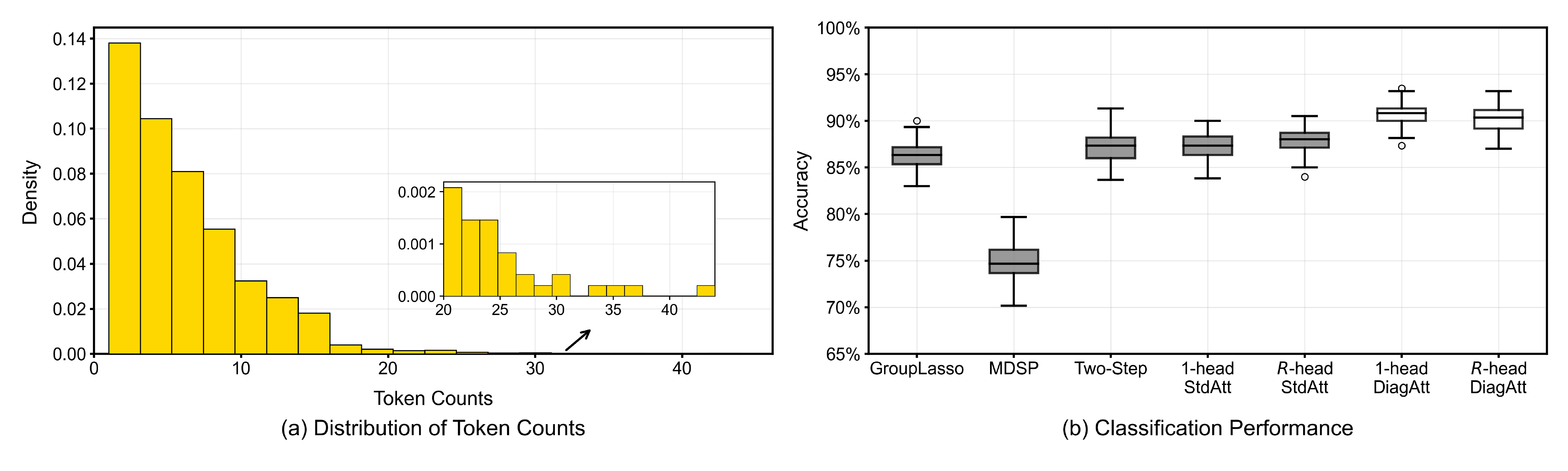}
	\caption{(a) Distribution of token counts across comments in the UCI online comment dataset, with an inset showing a zoomed-in view of the right-tail region; (b) Classification performance of different methods on the UCI online comment dataset.}
	\label{fig:realUCI}
\end{figure}

We randomly split the data into training (70\%), validation (10\%), and test (20\%) sets for model training, hyperparameter tuning, and model evaluation, respectively. Specifically, the hyperparameters are selected by mean validation cross-entropy. The training and validation sets are then combined to refit the final model, which is evaluated on the test set. This partitioning procedure is repeated 100 times, and results are averaged over the repetitions.

Figure~\ref{fig:realUCI}(\hyperref[fig:realUCI]{b}) compares the classification accuracy of different methods. DiagAtt continues to dominate in classification performance over all the competitors, with the single-head attention outperforming the $R$-head version. Notably, MDSP fails to deliver expected classification accuracy despite access to labeled data during testing, implying a significant mismatch between individualized patterns in sentiment analysis and its subgroup-based modeling assumption. The other competitors attain acceptable but not competitive classification performance.

Next, we examine ROI localization. After training, we apply the procedures in Section~\ref{supp_subsec:localization} to the full dataset. The competitors select up to \(\widetilde{R}=5\) tokens per comment, while DiagAtt identifies at most \(R=3\) tokens per comment. Selected tokens are assigned sentiment polarity using their comment labels and then aggregated across all comments and 100 repetitions. Table~\ref{tab:UCI-representative-tokens} reports the top 10 representative tokens for each polarity under each method, ranked by selection frequency.

\begin{table}[!htbp]
    \centering
    \fontsize{9.5}{12}\selectfont
    \renewcommand{\arraystretch}{1.1}
    \caption{Top 10 representative tokens for each sentiment polarity under different methods on the UCI online comment dataset. Here, tokens are ranked by selection frequency, and the correctly identified sentiment-bearing tokens are marked by "$\ast$". Note: Since ground-truth ROIs are unavailable, "$\ast$" denote tokens manually judged by the authors to convey the corresponding sentiment polarity based on their conventional usage.}
    \label{tab:UCI-representative-tokens}
    \begin{adjustbox}{max width=\linewidth}
\begin{tabular}{c|c|c|c|c|c|c|c}
        \hline
        Label                      & GroupLasso  & MDSP        & Two-Step    & 1-head StdAtt & $R$-head StdAtt  & 1-head DiagAtt    & $R$-head DiagAtt    \\ \hline
        \multirow{10}{*}{Positive} & great$\ast$ & great$\ast$ & great$\ast$ & great$\ast$    & great$\ast$     & great$\ast$        & great$\ast$        \\
                                   & good$\ast$  & good$\ast$  & good$\ast$  & good$\ast$     & good$\ast$      & good$\ast$         & good$\ast$         \\
                                   & hat         & wit$\ast$   & wit$\ast$   & well$\ast$     & well$\ast$      & best$\ast$         & wit$\ast$          \\
                                   & wit$\ast$   & one         & hat         & film           & best$\ast$      & excellent$\ast$    & hat                \\
                                   & film        & hat         & film        & best$\ast$     & really          & well$\ast$         & excellent$\ast$    \\
                                   & movie       & film        & phone       & phone          & excellent$\ast$ & love$\ast$         & well$\ast$         \\
                                   & phone       & movie       & one         & one            & love$\ast$      & nice$\ast$         & best$\ast$         \\
                                   & one         & phone       & movie       & movie          & like$\ast$      & amazing$\ast$      & love$\ast$         \\
                                   & food        & like$\ast$  & best$\ast$  & really         & nice$\ast$      & really             & service            \\
                                   & best$\ast$  & well$\ast$  & food        & place          & works           & wonderful$\ast$    & nice$\ast$         \\ \hline
        \multirow{10}{*}{Negative} & not$\ast$   & not$\ast$   & not$\ast$   & not$\ast$      & not$\ast$       & not$\ast$          & not$\ast$          \\
                                   & hat$\ast$   & hat$\ast$   & hat$\ast$   & bad$\ast$      & bad$\ast$       & bad$\ast$          & hat$\ast$          \\
                                   & wit         & wit         & bad$\ast$   & movie          & good            & good               & bad$\ast$          \\
                                   & bad$\ast$   & movie       & wit         & no$\ast$       & thing           & worst$\ast$        & wit                \\
                                   & movie       & food        & movie       & hat$\ast$      & no$\ast$        & terrible$\ast$     & no$\ast$           \\
                                   & no$\ast$    & phone       & no$\ast$    & one            & worst$\ast$     & no$\ast$           & good               \\
                                   & one         & one         & one         & time           & work            & poor$\ast$         & worst$\ast$        \\
                                   & phone       & time        & phone       & like           & terrible$\ast$  & disappointed$\ast$ & disappointed$\ast$ \\
                                   & food        & film        & food        & good           & place           & like               & terrible$\ast$     \\
                                   & time        & even        & time        & food           & like            & awful$\ast$        & service            \\ \hline
\end{tabular}
\end{adjustbox}
\end{table}

Table~\ref{tab:UCI-representative-tokens} shows that most competitors select many non-informative tokens for each sentiment polarity, along with only a few sentiment-bearing terms, which may explain their limited classification accuracy. In contrast, the single-head DiagAtt identifies representative tokens largely consistent with the corresponding sentiment polarity. Although it occasionally includes non-informative or seemingly mismatched tokens, such as ``really'' for positive sentiment and ``good'' or ``like'' for negative sentiment, these choices are often interpretable: ``really'' can amplify positive sentiment, while phrases such as ``not good'' or ``not like'' convey negative sentiment.
The multi-head DiagAtt and StdAtt achieve comparable ROI localization results, but both select some incorrect or non-informative tokens, reducing classification accuracy. StdAtt, with its higher-dimensional parameter space, exhibits poorer prediction accuracy and more frequent misidentification under the limited sample size. Overall, these results suggest that diagonalized attention is well-suited to capture individualized patterns in sentiment analysis. Additional analysis using another example from \citet{marion2025attention} is provided in Section~\ref{supp_sec:add_emp} of the supplementary material, further supporting these findings.


\clearpage
\pdfbookmark[0]{Supplementary material}{supplementary-material}
\begin{center}
{\Large\bfseries Supplementary material for ``Diagonalized Attention for Individualized Regression: Latent-Row Localization and Prediction''}\par
\end{center}
\setcounter{equation}{0}
\renewcommand{\theequation}{S\arabic{equation}}
\setcounter{figure}{0}
\renewcommand{\thefigure}{S\arabic{figure}}
\setcounter{table}{0}
\renewcommand{\thetable}{S\arabic{table}}
\setcounter{section}{0}
\renewcommand{\thesection}{S\arabic{section}}
\renewcommand{\theHequation}{supp.\arabic{equation}}
\renewcommand{\theHfigure}{supp.\arabic{figure}}
\renewcommand{\theHtable}{supp.\arabic{table}}
\renewcommand{\theHsection}{supp.\arabic{section}}
\setcounter{theorem}{0}
\setcounter{remark}{0}
\renewcommand{\theremark}{\arabic{remark}}
\makeatletter
\@removefromreset{remark}{section}
\let\c@proposition\c@theorem
\let\c@definition\c@theorem
\renewcommand{\theproposition}{\thetheorem}
\renewcommand{\thedefinition}{\thetheorem}
\@addtoreset{example}{section}
\renewcommand{\theexample}{\thesection.\arabic{example}}
\makeatother
\noindent This supplementary material is organized into seven sections. Section~\ref{app:sec:selection} establishes localization guarantee by deriving a deterministic gap-to-softmax bound and applying it to the single-head and multi-head settings. 
Section~\ref{app:sec:covariance-constructions} verifies covariance-based sufficient constructions and presents extensions
to heterogeneous backgrounds, sub-Gaussian concentration, and informative subspaces. 
Section~\ref{app:sec:excess-risk} proves the excess-risk theorem
through an estimation--approximation decomposition and further discusses structured query--key matrices and dimension-growth conditions.
Section~\ref{app:sec:overlap-discussion} provides additional analysis of the spectral-overlap assumption through analytic and shared-and-class-specific constructions, examines the effect of removing the whitened means, and concludes with a numerical illustration.
Section~\ref{supp_sec:add_simu} provides additional simulation results for both Gaussian regression and Gaussian-mixture classification settings. Section~\ref{supp_sec:add_emp} introduces another real sentiment analysis example. Finally, Section~\ref{supp_sec:detail} describes supplementary implementation details, including model architectures, hyperparameter tuning procedures, and ROI localization strategies. All notation in the supplementary material is consistent with that in the main text unless otherwise specified.

\section{Proofs for localization guarantee results}
\label{app:sec:selection}

This section proves two localization consistency results under the
quadratic-score gap and concentration assumptions of the main text.  The covariance-based
constructions in Theorems~\ref{prop1} and~\ref{prop2} are sufficient
mechanisms for producing these gaps, but they are not needed by the
localization arguments themselves.  We therefore first prove the abstract
gap-to-localization implication and verify the covariance constructions
afterward.

Throughout the localization proofs, score expectations are conditional on the
realized latent selector \(\balpha^{(i)}\) in the single-head case and on
the realized latent assignment matrix \(\bA^{(i)}\) in the multi-head case.
The expectations in the main-text assumptions are understood in this sense:
when the active locations are treated as fixed, these are simply the ordinary
expectations displayed there.
No independence among rows, or among scores produced by different heads,
is required.  The temperature parameter is already incorporated into the
constructed query--key matrices.  Thus the attention weights are written as
\(\softmax(\bm{\psi})\), with no additional rescaling of \(\bm{\psi}\) by
\(\tau\).

\subsection{A deterministic gap-to-softmax bound}
\label{app:subsec:softmax-lemma}

\begin{lemma}[Deterministic gap-to-softmax bound]
\label{app:lem:deterministic-softmax}
Let \(p\ge2\), let
\(\boldsymbol z,\boldsymbol m\in\bbR^p\), and fix \(j^\ast\in[p]\).
Suppose that, for some \(\delta>0\) and \(r\ge0\),
\[
    m_{j^\ast}-\max_{j\ne j^\ast}m_j\ge\delta,
    \qquad
    \|\boldsymbol z-\boldsymbol m\|_\infty\le r.
\]
If \(g:=\delta-2r>0\), then
\begin{equation}
\label{app:eq:deterministic-softmax-bound}
    \left\|
        \softmax(\boldsymbol z)-\mathbf e_{j^\ast}
    \right\|_\infty
    \le
    (p-1)e^{-g}.
\end{equation}
In particular, if
\[
    \delta\ge4r
    \qquad\text{and}\qquad
    \delta\ge
    2\log\left(\frac{p-1}{\varepsilon}\right),
\]
then the left-hand side of
\eqref{app:eq:deterministic-softmax-bound} is at most \(\varepsilon\).
For \(p=1\), the conclusion holds trivially with zero error.
\end{lemma}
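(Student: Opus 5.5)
First, I transfer the population gap to the perturbed scores. For every \(j\ne j^\ast\), the bound \(\|\boldsymbol z-\boldsymbol m\|_\infty\le r\) gives
\(z_{j^\ast}-z_j\ge (m_{j^\ast}-r)-(m_j+r)\ge \delta-2r=g\).
No other property of \(\boldsymbol z\) is needed.

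Next, I write the softmax explicitly. Let \(\kappa=\softmax(\boldsymbol z)\). Then
\[
1-\kappa_{j^\ast}=\frac{\sum_{j\ne j^\ast}e^{z_j}}{e^{z_{j^\ast}}+\sum_{j\ne j^\ast}e^{z_j}}
\le \sum_{j\ne j^\ast}e^{-(z_{j^\ast}-z_j)}\le (p-1)e^{-g}.
\]
The first inequality just drops the nonnegative sum from the denominator. For each \(j\ne j^\ast\),
\(\kappa_j\le e^{z_j-z_{j^\ast}}\le e^{-g}\le (p-1)e^{-g}\),
and \(\kappa_j\ge0\). Since \(\mathbf e_{j^\ast}\) has zeros off \(j^\ast\), the coordinates of \(\kappa-\mathbf e_{j^\ast}\) are \(\kappa_{j^\ast}-1\) and \(\kappa_j\) for \(j\ne j^\ast\). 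Taking the maximum of their absolute values gives \eqref{app:eq:deterministic-softmax-bound}. The off-target coordinates in fact satisfy the sharper bound \(e^{-g}\), but the stated bound suffices.

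For the particular case, the condition \(\delta\ge4r\) gives \(2r\le\delta/2\), so \(g\ge\delta/2>0\). Combined with \(\delta\ge 2\log((p-1)/\varepsilon)\), this gives \(g\ge\log((p-1)/\varepsilon)\), hence \((p-1)e^{-g}\le\varepsilon\).

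The requirement \(g>0\) needs some care when \(\delta=4r\) with \(r=0\). In that case \(g=\delta>0\), since \(\delta>0\) by hypothesis, so the argument still applies.

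For \(p=1\), the softmax is identically \(1=\mathbf e_1\), so the error is zero.

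I expect no real obstacle. The only delicate point is converting the \(\ell_\infty\) perturbation into the pairwise gap \(\delta-2r\), which requires a \(2r\) loss rather than \(r\). This is exactly why the threshold is \(4r\) and the logarithmic requirement carries the factor \(2\).
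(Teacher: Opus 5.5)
Your proof is correct and follows the paper's argument essentially step for step. You use the same transfer $z_{j^\ast}-z_j\ge\delta-2r=g$, the same bounds $1-\kappa_{j^\ast}\le\sum_{j\ne j^\ast}e^{z_j-z_{j^\ast}}\le(p-1)e^{-g}$ and $\kappa_j\le e^{-g}$, and the same reduction $g\ge\delta/2$ for the particular case.
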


\begin{proof}
For every \(j\ne j^\ast\),
\[
\begin{aligned}
    z_{j^\ast}-z_j
    &=
    (m_{j^\ast}-m_j)
    +(z_{j^\ast}-m_{j^\ast})
    -(z_j-m_j)\\
    &\ge\delta-2r=g.
\end{aligned}
\]
Writing \(\bkappa=\softmax(\boldsymbol z)\), we have
\[
\begin{aligned}
    1-\kappa_{j^\ast}
    &=
    \frac{
        \sum_{j\ne j^\ast}\exp(z_j-z_{j^\ast})
    }{
        1+\sum_{j\ne j^\ast}\exp(z_j-z_{j^\ast})
    }\\
    &\le
    \sum_{j\ne j^\ast}\exp(z_j-z_{j^\ast})
    \le(p-1)e^{-g}.
\end{aligned}
\]
Moreover, every \(j\ne j^\ast\) satisfies
\[
    \kappa_j
    =
    \frac{\exp(z_j-z_{j^\ast})}{
        1+\sum_{\ell\ne j^\ast}\exp(z_\ell-z_{j^\ast})
    }
    \le e^{-g}
    \le(p-1)e^{-g}.
\]
This proves \eqref{app:eq:deterministic-softmax-bound}.  If
\(\delta\ge4r\), then \(g\ge\delta/2\), and the remaining assertion follows
from \((p-1)e^{-\delta/2}\le\varepsilon\).
\end{proof}

\subsection{The single-head localization theorem}
\label{app:subsec:single}

\begin{proof}[Proof of Theorem~\ref{thm:selection_single_s1}]
Fix an observation \(i\), write \(p_i=p^{(i)}\), and use the notation
\(j_i^\star\) from the main text for the unique index such that
\(\alpha_{j_i^\star}^{(i)}=1\).  Thus
\(\balpha^{(i)}=\mathbf e_{j_i^\star}\).  For \(j\in[p_i]\), put
\[
    \psi_{0,j}^{(i)}
    =
    \bX_{j,\cdot}^{(i)}
    \bW_0
    \bigl(\bX_{j,\cdot}^{(i)}\bigr)^\top
\]
and define the conditional center
\begin{equation}
\label{app:eq:single-conditional-center}
    m_{0,j}^{(i)}
    :=
    \mathbb{E}\!\left[
        \psi_{0,j}^{(i)}
        \,\middle|\,
        \balpha^{(i)}
    \right].
\end{equation}
Under Assumption~\ref{assum:score-gap}, with the score-gap relation
understood as a uniform lower bound,
\begin{equation}
\label{app:eq:single-conditional-gap}
    m_{0,j_i^\star}^{(i)}
    -
    \max_{j\ne j_i^\star}m_{0,j}^{(i)}
    \ge\delta_0.
\end{equation}

Let
\begin{equation}
\label{app:eq:single-concentration-event}
    \mathcal E_i^{(1)}
    :=
    \left\{
        \max_{1\le j\le p_i}
        \left|
            \psi_{0,j}^{(i)}-m_{0,j}^{(i)}
        \right|
        \le r_{\eta,p_{\max}}
    \right\}.
\end{equation}
Assumption~\ref{assum:concentration}, centered conditionally on
\(\balpha^{(i)}\), gives
\(\Prob(\mathcal E_i^{(1)}\mid\balpha^{(i)})\ge1-\eta\), uniformly in the
realized selector.  On \(\mathcal E_i^{(1)}\), for every
\(j\ne j_i^\star\),
\begin{align}
\label{app:eq:single-realized-gap-generic}
    \psi_{0,j_i^\star}^{(i)}-\psi_{0,j}^{(i)}
    &\ge
    m_{0,j_i^\star}^{(i)}-m_{0,j}^{(i)}
    -2r_{\eta,p_{\max}}\nonumber\\
    &\ge
    \delta_0-2r_{\eta,p_{\max}}.
\end{align}
The condition \(\delta_0\ge4r_{\eta,p_{\max}}\) makes the last expression
at least \(\delta_0/2\).  Lemma~\ref{app:lem:deterministic-softmax}
therefore yields
\[
\begin{aligned}
    \left\|
        \bkappa_0^{(i)}-\balpha^{(i)}
    \right\|_\infty
    &\le
    (p_i-1)\exp\left(-\frac{\delta_0}{2}\right)\\
    &\le
    (p_{\max}-1)\exp\left(-\frac{\delta_0}{2}\right)
    \le\varepsilon.
\end{aligned}
\]
The last inequality is precisely the softmax-threshold part of
\eqref{eq:score-gap-condition-single}.  Integrating the conditional
probability over \(\balpha^{(i)}\) proves
\[
    \Prob\left(
        \left\|
            \bkappa_0^{(i)}-\balpha^{(i)}
        \right\|_\infty
        \le\varepsilon
    \right)
    \ge1-\eta.
\]
Only the conditional score gap and concentration event were used; in
particular, the inactive rows need not share a distribution and need not
be independent.
\end{proof}

\subsection{The multi-head localization theorem}
\label{app:subsec:multi}

\begin{proof}[Proof of Theorem~\ref{thm:selection_multi_s1}]
Fix an observation \(i\), write \(p_i=p^{(i)}\), and condition on the
complete latent assignment matrix \(\bA^{(i)}\).  For each head
\(k\in[s]\), use the main-text notation \(j_{i,k}^\star\) for the unique
index satisfying \(A_{j_{i,k}^\star,k}^{(i)}=1\), and put
\[
    \balpha_k^{(i)}
    :=
    \bA_{\cdot,k}^{(i)}
    =
    \mathbf e_{j_{i,k}^\star}.
\]
For \(k\in[s]\) and \(j\in[p_i]\), define
\[
    \psi_{k,0,j}^{(i)}
    :=
    \bX_{j,\cdot}^{(i)}
    \bW_{0,k}
    \bigl(\bX_{j,\cdot}^{(i)}\bigr)^\top,
    \qquad
    m_{k,0,j}^{(i)}
    :=
    \mathbb{E}\!\left[
        \psi_{k,0,j}^{(i)}
        \,\middle|\,
        \bA^{(i)}
    \right].
\]
Every row \(j\ne j_{i,k}^\star\) is either inactive or is the row associated
with a signal class \(\ell\ne k\).  By the definitions of
\(\delta_k^{(0)}\), \(\delta_k^{(1)}\), \(\delta_k\), and
\(\delta_{\min}\) in Assumption~\ref{assum:multi-score-gap},
\begin{equation}
\label{app:eq:multi-conditional-gap}
    m_{k,0,j_{i,k}^\star}^{(i)}
    -
    m_{k,0,j}^{(i)}
    \ge\delta_k
    \ge\delta_{\min}
    \quad
    \text{for every }k\in[s]\text{ and }j\ne j_{i,k}^\star.
\end{equation}

Introduce the joint head--row concentration event
\begin{equation}
\label{app:eq:multi-joint-concentration-event}
    \mathcal E_i^{(s)}
    :=
    \left\{
        \max_{\substack{1\le k\le s\\1\le j\le p_i}}
        \left|
            \psi_{k,0,j}^{(i)}
            -
            m_{k,0,j}^{(i)}
        \right|
        \le r_{\eta,p_{\max},s}
    \right\}.
\end{equation}
The uniform-over-heads version of Assumption~\ref{assum:concentration}
imposed in Theorem~\ref{thm:selection_multi_s1} is understood jointly over
all head--row pairs; it gives
\(\Prob(\mathcal E_i^{(s)}\mid\bA^{(i)})\ge1-\eta\), uniformly in the
realized assignment matrix.  On this single event, simultaneously for
all \(k\in[s]\) and \(j\ne j_{i,k}^\star\),
\[
    \psi_{k,0,j_{i,k}^\star}^{(i)}
    -
    \psi_{k,0,j}^{(i)}
    \ge
    \delta_{\min}-2r_{\eta,p_{\max},s}
    \ge\frac{\delta_{\min}}{2}.
\]
The last inequality uses the fluctuation part of
\eqref{eq:score-gap-condition-multi}.
Applying Lemma~\ref{app:lem:deterministic-softmax} to every head gives
\[
\begin{aligned}
    \left\|
        \bkappa_{k,0}^{(i)}-\balpha_k^{(i)}
    \right\|_\infty
    &\le
    (p_i-1)\exp\left(-\frac{\delta_{\min}}{2}\right)\\
    &\le
    (p_{\max}-1)\exp\left(-\frac{\delta_{\min}}{2}\right)
    \le\varepsilon.
\end{aligned}
\]
The final inequality uses the softmax-threshold part of
\eqref{eq:score-gap-condition-multi}.
Since the columns of \(\bK_0^{(i)}-\bA^{(i)}\) are precisely these
head-specific errors,
\[
    \left\|
        \bK_0^{(i)}-\bA^{(i)}
    \right\|_\infty
    =
    \max_{1\le k\le s}
    \left\|
        \bkappa_{k,0}^{(i)}-\balpha_k^{(i)}
    \right\|_\infty
    \le\varepsilon
\]
on \(\mathcal E_i^{(s)}\).  Integrating over \(\bA^{(i)}\) proves
\[
    \Prob\left(
        \left\|
            \bK_0^{(i)}-\bA^{(i)}
        \right\|_\infty
        \le\varepsilon
    \right)
    \ge1-\eta.
\]
No additional union bound over heads is needed, because the assumed event
\(\mathcal E_i^{(s)}\) already controls all head--row pairs jointly.
\end{proof}

\section{Proofs for covariance-based sufficient constructions}
\label{app:sec:covariance-constructions}

The results in this section verify that the moment-based matrices in
Theorems~\ref{prop1} and~\ref{prop2} satisfy the abstract score-gap
assumptions.  These constructions use common background moments (or the
full-second-moment extensions below), whereas the localization theorems in the
preceding section do not.

\subsection{The single-head covariance construction}
\label{app:subsec:single-construction}

\begin{proof}[Proof of Theorem~\ref{prop1}]
For a random vector \(\bx\in\bbR^d\) with mean \(\bmu\), covariance
\(\bSigma\), and finite second moment, the quadratic-score identity
\begin{equation}
\label{app:eq:quadratic-score-identity}
    \mathbb E(\bx^\top\bW\bx)
    =
    \operatorname{tr}(\bW\bSigma)
    +
    \bmu^\top\bW\bmu
\end{equation}
holds for every deterministic matrix \(\bW\).  Thus Gaussianity is not
needed for the theorem.

Let
\[
    \bDelta_{\Sigma}:=\bSigma_1-\bSigma_0,
    \qquad
    q_0:=d-\operatorname{rank}(\bZ).
\]
Because
\(\bDelta_{\Sigma}^{-1/2}\bmu_0\) and
\(\bDelta_{\Sigma}^{-1/2}\bmu_1\) belong to
\(\operatorname{col}(\bZ)\),
\[
    \bP_{\bZ}^{\perp}
    \bDelta_{\Sigma}^{-1/2}\bmu_a
    =\mathbf0,
    \qquad a\in\{0,1\}.
\]
The matrix in \eqref{W0} consequently satisfies
\begin{equation}
\label{app:eq:single-mean-annihilation}
    \bW_0\bmu_0=\bW_0\bmu_1=\mathbf0.
\end{equation}
Applying \eqref{app:eq:quadratic-score-identity} to the active and
inactive rows and subtracting gives
\[
\begin{aligned}
    &\mathbb E\!\left(
        \bx^\top\bW_0\bx
        \,\middle|\,
        \alpha=1
    \right)
    -
    \mathbb E\!\left(
        \bx^\top\bW_0\bx
        \,\middle|\,
        \alpha=0
    \right)\\
    &\qquad=
    \operatorname{tr}
    \bigl(\bW_0\bDelta_{\Sigma}\bigr)\\
    &\qquad=
    \frac1{\tau}
    \operatorname{tr}\!\left(
        \bDelta_{\Sigma}^{-1/2}
        \bP_{\bZ}^{\perp}
        \bDelta_{\Sigma}^{-1/2}
        \bDelta_{\Sigma}
    \right)\\
    &\qquad=
    \frac1{\tau}\operatorname{tr}(\bP_{\bZ}^{\perp})
    =
    \frac{d-\operatorname{rank}(\bZ)}{\tau}.
\end{aligned}
\]
The last quantity is strictly positive precisely when
\(\operatorname{rank}(\bZ)<d\).  In particular,
\(\operatorname{rank}(\bZ)\le2\), so this is automatic when
\(d>2\).  All inactive rows have the same first
two moments in the theorem, so the same calculation applies to their
maximum expected score and proves the stated lower bound.
\end{proof}

\subsubsection{Heterogeneous inactive rows}

The expected quadratic score depends on the full second moment, not on the
covariance alone.  To make this explicit, write
\[
    \bM_1:=\mathbb E_1(\bx\bx^\top)
    =\bSigma_1+\bmu_1\bmu_1^\top,
    \qquad
    \bM_0(f):=\mathbb E_f(\bx\bx^\top)
    =\bSigma_0(f)+\bmu_0(f)\bmu_0(f)^\top.
\]
Suppose that the inactive second moments admit a positive-semidefinite
upper envelope \(\overline{\bM}_0\) such that
\[
    \bM_0(f)\preceq\overline{\bM}_0
    \quad\text{for every }f\in\mathcal F_0,
    \qquad
    \bDelta_M:=\bM_1-\overline{\bM}_0\succ0.
\]
Then the direct construction
\begin{equation}
\label{app:eq:heterogeneous-second-moment-weight}
    \bW_{0,\mathrm{het}}
    :=\frac1\tau\bDelta_M^{-1}
\end{equation}
satisfies, uniformly over \(f\in\mathcal F_0\),
\begin{align}
\label{app:eq:heterogeneous-second-moment-gap}
    &\mathbb E_1(\bx^\top\bW_{0,\mathrm{het}}\bx)
    -\mathbb E_f(\bx^\top\bW_{0,\mathrm{het}}\bx)\nonumber\\
    &\quad=
    \operatorname{tr}\!\left[
        \bW_{0,\mathrm{het}}
        \{\bM_1-\bM_0(f)\}
    \right]\nonumber\\
    &\quad=
    \frac d\tau
    +
    \operatorname{tr}\!\left[
        \bW_{0,\mathrm{het}}
        \{\overline{\bM}_0-\bM_0(f)\}
    \right]
    \ge\frac d\tau.
\end{align}
The last inequality follows because both matrices in the final trace are
positive semidefinite.  This proves the uniform gap against the strongest
inactive competitor suggested in the main-text remark.

A covariance envelope alone does not give this conclusion when the means
vary, because the term \(\bmu_0(f)^\top\bW\bmu_0(f)\) is then uncontrolled.
It does suffice when the active and inactive means are common and annihilated
as in Theorem~\ref{prop1}.  More generally, if all transformed inactive
means lie in a fixed low-dimensional span, that span can be included in the
matrix \(\bZ\); the corresponding projector annihilates every such mean, and
the covariance-envelope calculation then applies on the remaining
subspace.  These are sufficient mechanisms for the abstract heterogeneous
gap, not requirements of the localization theorem.

\subsubsection{A sub-Gaussian concentration radius}

For completeness, we give the concentration calculation underlying the
single-head sub-Gaussian remark.  Put
\[
    q_0=d-\operatorname{rank}(\bZ),
    \qquad
    \lambda_\Delta
    =
    \lambda_{\min}(\bDelta_{\Sigma}),
\]
and suppose that, conditional on its class
\(a\in\{0,1\}\), every centered active or inactive row satisfies a
Hanson--Wright inequality with a uniform sub-Gaussian scale \(K\).
Because \(\bW_0\) annihilates both means, the noncentral cross terms
vanish.  Moreover,
\[
    \operatorname{rank}(\bW_0)=q_0,
    \qquad
    \|\bW_0\|_{\mathrm{op}}
    \le\frac1{\tau\lambda_\Delta},
    \qquad
    \|\bW_0\|_F
    \le\frac{\sqrt{q_0}}{\tau\lambda_\Delta}.
\]
Consequently, for universal \(c,C>0\),
\begin{equation}
\label{app:eq:single-hanson-wright}
    \Prob\left(
        \left|
            \bx^\top\bW_0\bx
            -
            \mathbb E(
                \bx^\top\bW_0\bx
                \mid a
            )
        \right|>t
        \,\middle|\,a
    \right)
    \le
    2\exp\left[
        -c\min\left\{
            \frac{\tau^2\lambda_\Delta^2t^2}{K^4q_0},
            \frac{\tau\lambda_\Delta t}{K^2}
        \right\}
    \right].
\end{equation}
No independence between different rows is needed.  A union bound over at
most \(p_{\max}\) rows shows that, with
\(L_\eta=\log(2p_{\max}/\eta)\), one may take
\begin{equation}
\label{app:eq:single-subgaussian-radius}
    r_{\eta,p_{\max}}
    \le
    C\frac{K^2}{\tau\lambda_\Delta}
    \left\{
        \sqrt{q_0L_\eta}+L_\eta
    \right\}.
\end{equation}
Since the constructed gap is \(q_0/\tau\), the common factor
\(1/\tau\) cancels when verifying
\(\delta_0\ge4r_{\eta,p_{\max}}\).  In the regime
\(L_\eta\lesssim q_0\), a sufficient condition is
\[
    q_0
    \gtrsim
    \frac{K^4}{\lambda_\Delta^2}
    \log\left(\frac{p_{\max}}{\eta}\right).
\]
Under the scale convention
\(K\lesssim\lambda_{\max}(\bSigma_1)\), this yields the displayed
order in the main-text remark.  The softmax-threshold condition is
enforced by choosing \(\tau\) so that
\[
    \frac{q_0}{\tau}
    \ge
    2\log\left(\frac{p_{\max}-1}{\varepsilon}\right).
\]

\subsection{The multi-head covariance construction}
\label{app:subsec:multi-construction}

\begin{proof}[Proof of Theorem~\ref{prop2}]
Fix \(k\in[s]\) and consider the matrix \(\bW_{0,k}\) in
\eqref{eq:multi-W0}.  Put
\[
    q_k:=\operatorname{rank}(\bZ_k),
    \qquad
    d_k=d-q_k,
    \qquad
    \bP_k^\perp:=\bP_{\bZ_k}^{\perp}.
\]
The claimed strictly positive gaps implicitly require
\(d^\ast=\lfloor d/s\rfloor\ge1\), which is automatic when \(d\ge s\).
The matrix
\[
    \widetilde{\bD}_k
    =
    \bP_k^\perp
    \bDelta_k^{-1/2}
    \bD_k
    \bDelta_k^{-1/2}
    \bP_k^\perp
\]
vanishes on \(\operatorname{col}(\bZ_k)\).  On the
\(d_k\)-dimensional range of \(\bP_k^\perp\), it is positive definite:
for every nonzero \(\bv\) in this range,
\[
    \bv^\top\widetilde{\bD}_k\bv
    =
    (\bDelta_k^{-1/2}\bv)^\top
    \bD_k
    (\bDelta_k^{-1/2}\bv)
    >0,
\]
because \(\bD_k\succ0\).  Hence
\(\widetilde{\bD}_k\) has exactly \(d_k\) positive eigenvalues.

The columns of \(\bGamma_k\) are the eigenvectors associated with its
\(d^\ast\) smallest positive eigenvalues.  Thus
\[
    \bP_k^\perp\bGamma_k=\bGamma_k,
    \qquad
    \bGamma_k^\top\bGamma_k=\bI_{d^\ast},
\]
and, using the ordering in Theorem~\ref{prop2},
\begin{equation}
\label{app:eq:selected-tail-trace}
    \operatorname{tr}
    (\bGamma_k^\top\widetilde{\bD}_k\bGamma_k)
    =
    \sum_{q=d_k-d^\ast+1}^{d_k}\gamma_{k,q}
    =
    d^\ast\bar\gamma_k^\ast.
\end{equation}
Every transformed mean
\(\bDelta_k^{-1/2}\bmu_\ell\), \(0\le\ell\le s\), belongs to
\(\operatorname{col}(\bZ_k)\).  Therefore
\begin{equation}
\label{app:eq:multi-mean-annihilation}
    \bW_{0,k}\bmu_\ell=\mathbf0,
    \qquad \ell=0,\ldots,s.
\end{equation}
By \eqref{app:eq:quadratic-score-identity}, all class-score comparisons
therefore reduce to covariance trace differences.

For the background class,
\begin{equation}
\label{app:eq:multi-background-gap}
\begin{aligned}
    \operatorname{tr}
    \{\bW_{0,k}(\bSigma_k-\bSigma_0)\}
    &=
    \frac1{\tau}
    \operatorname{tr}
    (\bGamma_k^\top\bGamma_k)\\
    &=
    \frac{d^\ast}{\tau}
    =\delta_k^{(0)}.
\end{aligned}
\end{equation}
For a competing signal class \(\ell\ne k\), positivity of every
\(\bDelta_\ell\) and
\(\bDelta_\ell\preceq\bD_k=\sum_{h\ne k}\bDelta_h\) gives
\[
\begin{aligned}
    &\operatorname{tr}
    \left(
        \bGamma_k^\top
        \bDelta_k^{-1/2}
        \bDelta_\ell
        \bDelta_k^{-1/2}
        \bGamma_k
    \right)\\
    &\qquad\le
    \operatorname{tr}
    (\bGamma_k^\top\widetilde{\bD}_k\bGamma_k)
    =
    d^\ast\bar\gamma_k^\ast.
\end{aligned}
\]
It follows that
\begin{equation}
\label{app:eq:multi-signal-gap}
\begin{aligned}
    \operatorname{tr}
    \{\bW_{0,k}(\bSigma_k-\bSigma_\ell)\}
    &\ge
    \frac{
        d^\ast(1-\bar\gamma_k^\ast)
    }{\tau}\\
    &\ge
    \frac{
        d^\ast(1-\bar\gamma_{\max}^\ast)
    }{\tau}
    >0.
\end{aligned}
\end{equation}
Because \(\delta_k^{(1)}\) is the minimum of the target-versus-signal
gaps, \eqref{app:eq:multi-signal-gap} implies
\begin{equation}
\label{app:eq:multi-delta-min-lower-bound}
    \delta_k^{(1)}
    \ge\frac{d^\ast(1-\bar\gamma_k^\ast)}{\tau},
    \qquad
    \delta_{\min}
    \ge\frac{d^\ast(1-\bar\gamma_{\max}^\ast)}{\tau}.
\end{equation}
Here we used \(0\le\bar\gamma_k^\ast<1\), so the lower bound for
\(\delta_k^{(1)}\) does not exceed the exact inactive-row gap
\(\delta_k^{(0)}=d^\ast/\tau\).  Equations
\eqref{app:eq:multi-background-gap}--%
\eqref{app:eq:multi-delta-min-lower-bound} prove Theorem~\ref{prop2}.
\end{proof}

\subsubsection{Heterogeneous backgrounds in the multi-head construction}

As in the single-head case, a covariance envelope alone does not control
heterogeneous mean terms.  The following stronger structure is one useful
sufficient extension.  Let
\(\bSigma_0(f)\preceq\overline{\bSigma}_0\) for all background
distributions and define
\[
    \bDelta_k=\bSigma_k-\overline{\bSigma}_0\succ0.
\]
Construct \(\bZ_k\) using the transformed signal means together with a
basis for the transformed background-mean span, and then construct
\(\bD_k,\widetilde{\bD}_k,\bGamma_k,\bW_{0,k}\) as in
Theorem~\ref{prop2}.  For every background distribution \(f\),
\[
\begin{aligned}
    &\mathbb E_k(\bx^\top\bW_{0,k}\bx)
    -
    \mathbb E_f(\bx^\top\bW_{0,k}\bx)\\
    &\quad=
    \operatorname{tr}\{\bW_{0,k}
        (\bSigma_k-\overline{\bSigma}_0)\}
    +
    \operatorname{tr}\{\bW_{0,k}
        (\overline{\bSigma}_0-\bSigma_0(f))\}\\
    &\quad\ge\frac{d^\ast}{\tau}.
\end{aligned}
\]
The signal-to-signal proof is unchanged because
\(\bSigma_k-\bSigma_\ell=\bDelta_k-\bDelta_\ell\).  Thus the same
spectral condition supplies the lower bound in
\eqref{app:eq:multi-signal-gap}.

\subsubsection{A multi-head sub-Gaussian concentration radius}

Assume now that all centered rows in classes
\(\ell=0,\ldots,s\) obey a Hanson--Wright inequality with a common
sub-Gaussian scale \(K\), and put
\[
    \lambda_\Delta
    =
    \min_{1\le k\le s}\lambda_{\min}(\bDelta_k).
\]
For every head,
\[
    \operatorname{rank}(\bW_{0,k})=d^\ast,
    \qquad
    \|\bW_{0,k}\|_{\mathrm{op}}
    \le\frac1{\tau\lambda_\Delta},
    \qquad
    \|\bW_{0,k}\|_F
    \le\frac{\sqrt{d^\ast}}{\tau\lambda_\Delta}.
\]
A union bound over at most \(sp_{\max}\) head--row pairs therefore gives,
with \(L_{\eta,s}=\log(2sp_{\max}/\eta)\),
\begin{equation}
\label{app:eq:multi-subgaussian-radius}
    r_{\eta,p_{\max},s}
    \le
    C\frac{K^2}{\tau\lambda_\Delta}
    \left\{
        \sqrt{d^\ast L_{\eta,s}}+L_{\eta,s}
    \right\}.
\end{equation}
In particular, conditional on the realized assignment matrix,
\begin{align}
\label{app:eq:multi-conditional-hanson-wright}
&\Prob\Bigg(
    \max_{\substack{1\le k\le s\\1\le j\le p^{(i)}}}
    \Bigg|
        \bX_{j,\cdot}^{(i)}
        \bW_{0,k}
        \bigl(\bX_{j,\cdot}^{(i)}\bigr)^\top
        \nonumber\\[-2mm]
&\hspace{35mm}
        -
        \mathbb E\!\left[
            \bX_{j,\cdot}^{(i)}
            \bW_{0,k}
            \bigl(\bX_{j,\cdot}^{(i)}\bigr)^\top
            \,\middle|\,
            \bA^{(i)}
        \right]
    \Bigg|
    \le r_{\eta,p_{\max},s}
    \,\Bigm|\,
    \bA^{(i)}
\Bigg)
\ge1-\eta.
\end{align}
When the standardized sub-Gaussian norm is uniformly bounded,
\(K^2\) may be replaced, up to a universal constant, by
\(\max_{0\le\ell\le s}\lambda_{\max}(\bSigma_\ell)\).  If
\(L_{\eta,s}\lesssim d^\ast\), the square-root term dominates and
\eqref{app:eq:multi-subgaussian-radius} has the square-root order displayed
in the main-text remark, together with the explicit factor \(1/\tau\)
coming from the construction.  Both the constructed gap and the radius contain
the same factor \(1/\tau\), so their comparison is invariant to the
temperature scaling.

More explicitly, set
\(g_\gamma:=1-\bar\gamma_{\max}^\ast>0\).  Combining
\eqref{app:eq:multi-delta-min-lower-bound} and
\eqref{app:eq:multi-subgaussian-radius}, the fluctuation requirement
\(\delta_{\min}\ge4r_{\eta,p_{\max},s}\) is ensured by
\begin{equation}
\label{app:eq:multi-subgaussian-dimension}
    d^\ast
    \gtrsim
    \max\left\{
        \frac{K^4}{\lambda_\Delta^2g_\gamma^2},
        \frac{K^2}{\lambda_\Delta g_\gamma}
    \right\}
    \log\left(\frac{2sp_{\max}}{\eta}\right).
\end{equation}
The remaining deterministic softmax requirement is
\[
    \frac{d^\ast g_\gamma}{\tau}
    \ge
    2\log\left(\frac{p_{\max}-1}{\varepsilon}\right).
\]
Thus \(\tau\) appears through the constructed matrices and their effective
gaps, never as a second division inside the softmax.

\subsection{Informative-subspace constructions}
\label{app:subsec:informative-subspaces}

\paragraph{The single-head construction.}
The full-rank condition on \(\bDelta_{\Sigma}\) can be relaxed when its
positive contrast is confined to an informative subspace
\(\mathcal I\subseteq\bbR^d\).  Let
\(\boldsymbol U_{\mathcal I}\in\bbR^{d\times m}\), with
\(m=\dim(\mathcal I)\), have orthonormal columns spanning
\(\mathcal I\), and suppose
\[
    \bDelta_{\mathcal I}
    :=
    \boldsymbol U_{\mathcal I}^{\top}
    \bDelta_{\Sigma}
    \boldsymbol U_{\mathcal I}
    \succ0.
\]
Define the compressed means and whitened mean matrix by
\[
    \bmu_{a,\mathcal I}
    :=
    \boldsymbol U_{\mathcal I}^{\top}\bmu_a,
    \qquad
    \bZ_{\mathcal I}
    :=
    \bDelta_{\mathcal I}^{-1/2}
    [\bmu_{1,\mathcal I},\bmu_{0,\mathcal I}],
\]
and set
\[
    \bP_{\bZ_{\mathcal I}}^\perp
    =
    \bI_m-\bZ_{\mathcal I}\bZ_{\mathcal I}^{\dagger}.
\]
Whitening and projecting inside \(\mathcal I\), followed by lifting back
to \(\bbR^d\), gives
\begin{equation}
\label{app:eq:single-informative-weight}
    \bW_{0,\mathcal I}
    :=
    \frac1{\tau}
    \boldsymbol U_{\mathcal I}
    \bDelta_{\mathcal I}^{-1/2}
    \bP_{\bZ_{\mathcal I}}^\perp
    \bDelta_{\mathcal I}^{-1/2}
    \boldsymbol U_{\mathcal I}^{\top}.
\end{equation}
Writing
\[
    q_{\mathcal I}
    :=
    m-\operatorname{rank}(\bZ_{\mathcal I}),
\]
the same trace calculation as in Theorem~\ref{prop1} gives
\begin{equation}
\label{app:eq:single-informative-properties}
\begin{aligned}
    \bW_{0,\mathcal I}\bmu_0
    &=
    \bW_{0,\mathcal I}\bmu_1
    =
    \mathbf0,\\
    \operatorname{tr}
    (\bW_{0,\mathcal I}\bDelta_{\Sigma})
    &=
    \frac{q_{\mathcal I}}{\tau},\\
    \operatorname{rank}(\bW_{0,\mathcal I})
    &=
    q_{\mathcal I},\\
    \|\bW_{0,\mathcal I}\|_{\mathrm{op}}
    &\le
    \frac1{
        \tau\lambda_{\min}(\bDelta_{\mathcal I})
    }.
\end{aligned}
\end{equation}
Thus no definiteness is needed outside \(\mathcal I\), and a nonzero
gap requires only
\(m>\operatorname{rank}(\bZ_{\mathcal I})\).  The value \(m-2\) is the
special case in which the two compressed whitened means are linearly
independent.

The conditional Hanson--Wright and union-bound calculation in
\eqref{app:eq:single-hanson-wright}--%
\eqref{app:eq:single-subgaussian-radius} carries over after replacing
\(q_0\), \(\lambda_\Delta\), and the covariance scale by their compressed
counterparts.  In particular, the deterministic softmax requirement is
\[
    \frac{q_{\mathcal I}}{\tau}
    \ge
    2\log\left(\frac{p_{\max}-1}{\varepsilon}\right).
\]
Equivalently, \(\tau\) may be chosen of order
\(q_{\mathcal I}/\log\{(p_{\max}-1)/\varepsilon\}\).  This is a
condition on the effective gap already encoded in
\(\bW_{0,\mathcal I}\); the softmax itself remains unscaled.

Because \(\bW_{0,\mathcal I}\) is positive semidefinite of rank
\(q_{\mathcal I}\), it admits a factorization
\[
    \bW_{0,\mathcal I}
    =
    \bL_{\mathcal I}\bL_{\mathcal I}^{\top},
    \qquad
    \bL_{\mathcal I}\in\bbR^{d\times q_{\mathcal I}}.
\]
Thus the same construction can be represented by low-rank query and key
matrices without changing its score gap.

\paragraph{Head-specific informative subspaces.}
The restriction can be carried out separately for every head.  Let
\(\mathcal I_k\subseteq\bbR^d\) have orthonormal basis
\(\boldsymbol U_k\), and suppose
\[
    \bDelta_{k,\mathcal I_k}
    :=
    \boldsymbol U_k^\top\bDelta_k\boldsymbol U_k
    \succ0.
\]
For \(\ell=0,\ldots,s\), put
\(\bmu_{\ell,\mathcal I_k}=\boldsymbol U_k^\top\bmu_\ell\), and define
\[
\begin{aligned}
    \bZ_{k,\mathcal I_k}
    &:=
    \bDelta_{k,\mathcal I_k}^{-1/2}
    [
        \bmu_{0,\mathcal I_k},
        \ldots,
        \bmu_{s,\mathcal I_k}
    ],\\
    \bP_{k,\mathcal I_k}^{\perp}
    &:=
    \bI_{\dim(\mathcal I_k)}
    -
    \bZ_{k,\mathcal I_k}
    \bZ_{k,\mathcal I_k}^{\dagger},\\
    \bD_{k,\mathcal I_k}
    &:=
    \boldsymbol U_k^\top\bD_k\boldsymbol U_k,\\
    \widetilde{\bD}_{k,\mathcal I_k}
    &:=
    \bP_{k,\mathcal I_k}^{\perp}
    \bDelta_{k,\mathcal I_k}^{-1/2}
    \bD_{k,\mathcal I_k}
    \bDelta_{k,\mathcal I_k}^{-1/2}
    \bP_{k,\mathcal I_k}^{\perp}.
\end{aligned}
\]
Choose
\[
    d_k^\ast
    \le
    \dim(\mathcal I_k)
    -
    \operatorname{rank}(\bZ_{k,\mathcal I_k})
\]
and let \(\bGamma_{k,\mathcal I_k}\) collect the corresponding
\(d_k^\ast\) smallest-interference eigenvectors.  The lifted matrix is
\begin{equation}
\label{app:eq:multi-informative-weight}
    \bW_{0,k}^{(\mathcal I_k)}
    :=
    \frac1{\tau}
    \boldsymbol U_k
    \bDelta_{k,\mathcal I_k}^{-1/2}
    \bGamma_{k,\mathcal I_k}
    \bGamma_{k,\mathcal I_k}^{\top}
    \bDelta_{k,\mathcal I_k}^{-1/2}
    \boldsymbol U_k^\top.
\end{equation}
It satisfies
\begin{equation}
\label{app:eq:multi-informative-properties}
\begin{aligned}
    \bW_{0,k}^{(\mathcal I_k)}\bmu_\ell
    &=
    \mathbf0,
    \qquad \ell=0,\ldots,s,\\
    \operatorname{rank}
    \bigl(\bW_{0,k}^{(\mathcal I_k)}\bigr)
    &=
    d_k^\ast,\\
    \operatorname{tr}
    \bigl(
        \bW_{0,k}^{(\mathcal I_k)}
        \bDelta_k
    \bigr)
    &=
    \frac{d_k^\ast}{\tau},\\
    \|\bW_{0,k}^{(\mathcal I_k)}\|_{\mathrm{op}}
    &\le
    \frac1{
        \tau
        \lambda_{\min}(\bDelta_{k,\mathcal I_k})
    }.
\end{aligned}
\end{equation}
If
\(\bar\gamma_{k,\mathcal I_k}^{\ast}\) is the average of the
selected interference eigenvalues, then, for every \(\ell\ne k\),
\begin{equation}
\label{app:eq:multi-informative-signal-gap}
    \operatorname{tr}\left\{
        \bW_{0,k}^{(\mathcal I_k)}
        (\bSigma_k-\bSigma_\ell)
    \right\}
    \ge
    \frac{
        d_k^\ast
        (1-\bar\gamma_{k,\mathcal I_k}^{\ast})
    }{\tau}.
\end{equation}
Hence the multi-head construction and selection proof carry over after
replacing the full-space effective dimensions and spectral quantities by
their head-specific compressed versions.  This produces a low-rank
feasible comparator; reducing the estimation complexity additionally
requires restricting the optimization class itself, as discussed after
the excess-risk proof.

\section{Proof of the excess-risk theorem}
\label{app:sec:excess-risk}

We prove Theorem~\ref{thm:erm_excess} for the empirical risk minimizer
\(\widehat\btheta=(\hbV,\hbW)\) defined in the main text.  The separating matrices
\(\{\bW_{0,k}\}_{k=1}^s\) are used only to exhibit a feasible comparator;
they are not fixed in the jointly optimized estimator
\((\hbV,\hbW)\).  Thus the excess-risk argument depends on the generic
localization theorem, while Theorem~\ref{prop2} gives one concrete way
to construct the comparator.

\subsection{The joint class and a feasible comparator}

Write \(\bW=(\bW_1,\ldots,\bW_s)\) and
\(\btheta=(\bV,\bW)\in\bTheta\), as in the main text, and let
\[
    \mathfrak{F}_{\bTheta}
    :=
    \left\{
        \bX\longmapsto z(\bX;\bV,\bW):
        \btheta=(\bV,\bW)\in\bTheta
    \right\}
\]
be the induced predictor class.  To shorten notation, write
\[
    f_{\btheta}(\bX):=f_{\bV,\bW}(\bX):=z(\bX;\bV,\bW),
    \qquad
    \widehat f:=f_{\widehat\btheta}=f_{\hbV,\hbW}.
\]
The empirical and population risks are the quantities
\(\mathcal L_n\) and \(\mathcal L\) defined in the main text; when
convenient, we write
\[
    \mathcal L_n(f_{\bV,\bW})
    :=
    \mathcal L_n(\bV,\bW),
    \qquad
    \mathcal L(f_{\bV,\bW})
    :=
    \mathcal L(\bV,\bW).
\]

For reference, under the covariance construction in Theorem~\ref{prop2}, each
effective matrix entering the score satisfies
\begin{equation}
\label{app:eq:comparator-op-bound}
    \|\bW_{0,k}\|_{\mathrm{op}}
    \le
    \frac1{\tau}
    \|\bDelta_k^{-1/2}\|_{\mathrm{op}}^2
    \|\bGamma_k\bGamma_k^\top\|_{\mathrm{op}}
    =
    \frac1{
        \tau\lambda_{\min}(\bDelta_k)
    }.
\end{equation}
This display records the scale of the constructed comparator.  It is not a
norm constraint in the high-level theorem, but it becomes a feasibility
requirement for the bounded sufficient class considered below.  More
generally, write
\[
    \bW_0:=(\bW_{0,1},\ldots,\bW_{0,s}),
    \qquad
    \btheta^\dagger:=(\bB,\bW_0)\in\bTheta,
\]
where membership is the comparator assumption in
Theorem~\ref{thm:erm_excess}.
Define the corresponding comparator
\begin{equation}
\label{app:eq:bridge-predictor}
    f^\dagger(\bX^{(i)})
    :=
    f_{\btheta^\dagger}(\bX^{(i)})
    =
    \left\langle
        \bB,
        (\bX^{(i)})^\top\bK_0^{(i)}
    \right\rangle.
\end{equation}
Thus \(\mathcal L(\btheta^\dagger)=\mathcal L(f^\dagger)\).

\subsection{Estimation--approximation decomposition}

Adding and subtracting the comparator risk gives
\begin{align}
\label{app:eq:risk-decomposition}
    \mathcal L(\widehat\btheta)-\mathcal L^\ast
    &=
    \underbrace{
        \mathcal L(\widehat\btheta)-\mathcal L(\btheta^\dagger)
    }_{\mathrm{(I)}\ \text{estimation error}}
    \nonumber\\
    &\quad+
    \underbrace{
        \mathcal L(\btheta^\dagger)-\mathcal L^\ast
    }_{\mathrm{(II)}\ \text{approximation error}}.
\end{align}
Term (II) is controlled by the localization theorem, while term (I) follows
from empirical optimality and the empirical-process condition assumed in
the main text.

\subsection{Approximation error}

We first state the rowwise conditions used to control the prediction error
caused by imperfect localization.  For a scalar random variable \(U\), let
\[
    \|U\|_{\psi_\alpha}
    :=
    \inf\{c>0:\mathbb E\exp(|U|^\alpha/c^\alpha)\le2\}
\]
denote its Orlicz norm.  Write
\[
    \bmu_j^{(i)}
    :=
    \mathbb E\!\left(
        \bX_{j,\cdot}^{(i)}\mid\bA^{(i)}
    \right).
\]
\begin{assumption}[Rowwise regularity]
\label{app:assum:rowwise-regularity}
Conditionally on the selection matrices, suppose that there exist constants
\(K_X,C_\mu>0\), independent of \(n,d,p_{\max}\), and \(s\), such that
\begin{align}
\label{app:eq:rowwise-subgaussian-condition}
    \max_{1\le i\le n}\max_{1\le j\le p^{(i)}}
    \sup_{\|\bu\|_2=1}
    \left\|
        \bu^\top
        \bigl(\bX_{j,\cdot}^{(i)}-\bmu_j^{(i)}\bigr)^\top
    \right\|_{\psi_2}
    &\le K_X,\\
\label{app:eq:row-mean-diameter-condition}
    \max_{1\le i\le n}
    \max_{1\le j,\ell\le p^{(i)}}
    \|\bmu_j^{(i)}-\bmu_\ell^{(i)}\|_2
    &\le C_\mu\sqrt d,
\end{align}
where the Orlicz norm in
\eqref{app:eq:rowwise-subgaussian-condition} is evaluated under the
conditional law given the selection matrices, uniformly over their possible
realizations.  The rows within an observation need not be independent.
Condition~\eqref{app:eq:row-mean-diameter-condition} is stated in terms of
mean differences, rather than absolute mean sizes, because the prediction
error below is invariant to a common translation of all rows.
\end{assumption}

We additionally impose the coefficient-scale condition that, for a constant
\(C_\beta>0\) independent of \(n,d,p_{\max}\), and \(s\),
\begin{equation}
\label{app:eq:coefficient-column-bound}
    \max_{1\le k\le s}\|\bbeta_k\|_2\le C_\beta\sqrt d.
\end{equation}

For each observation, define its row diameter by
\[
    R_i
    :=
    \max_{1\le j,\ell\le p^{(i)}}
    \|\bX_{j,\cdot}^{(i)}-\bX_{\ell,\cdot}^{(i)}\|_2.
\]

\begin{lemma}[Row-diameter bounds]
\label{app:lem:row-diameter}
Under Assumption~\ref{app:assum:rowwise-regularity}, there is a constant \(C>0\),
depending only on \(K_X\) and \(C_\mu\), such that, uniformly in \(i\),
\begin{equation}
\label{app:eq:row-diameter-second-moment}
    \mathbb E R_i^2
    \le
    C\{d+\log p_{\max}\}.
\end{equation}
Moreover, for every event \(\mathcal E_i\) satisfying
\(\Prob(\mathcal E_i)\le\eta\), where \(\eta\in(0,1)\),
\begin{equation}
\label{app:eq:row-diameter-tail-event}
    \mathbb E\!\left(R_i^2\boldsymbol 1_{\mathcal E_i}\right)
    \le
    C\eta
    \left\{
        d+\log\left(\frac{p_{\max}}{\eta}\right)
    \right\}.
\end{equation}
\end{lemma}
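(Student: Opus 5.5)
Both bounds follow from one tail bound for the maximal centered row norm, combined with the mean-diameter condition. Write $\bX_{j,\cdot}^{(i)}=\bmu_j^{(i)}+\bm{e}_j^{(i)}$, where conditionally on $\bA^{(i)}$ each $\bm{e}_j^{(i)}$ is centered and sub-Gaussian with scale $K_X$ by \eqref{app:eq:rowwise-subgaussian-condition}. The triangle inequality and \eqref{app:eq:row-mean-diameter-condition} give
\[
R_i\le C_\mu\sqrt d+2M_i,\qquad M_i:=\max_{1\le j\le p^{(i)}}\|\bm{e}_j^{(i)}\|_2,
\]
so $R_i^2\le 2C_\mu^2 d+8M_i^2$. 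No independence across rows is used, since we only take a maximum.

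The first step is a tail bound for a single row norm. For a sub-Gaussian vector with uniform directional $\psi_2$-norm $K_X$, a $1/2$-net argument on the unit sphere gives
\[
\Prob\bigl(\|\bm{e}_j^{(i)}\|_2> CK_X(\sqrt d+\sqrt t)\bigr)\le 2e^{-t}.
\]
The net has cardinality $5^d$; we union bound the scalar sub-Gaussian tails over it and use $\|\bm e\|_2\le 2\max_{\bu\in\text{net}}\bu^\top\bm e$. A union bound over at most $p_{\max}$ rows then yields
\[
\Prob\bigl(M_i>CK_X(\sqrt d+\sqrt{\log p_{\max}}+\sqrt t)\bigr)\le 2e^{-t}.
\]
Set $a:=C^2K_X^2(d+\log p_{\max})$. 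We deduce $\Prob(M_i^2>c_1(a+K_X^2t))\le 2e^{-t}$ for a suitable constant $c_1$. Integrating this tail gives $\E M_i^2\lesssim d+\log p_{\max}$, and this proves \eqref{app:eq:row-diameter-second-moment}. All bounds are conditional on $\bA^{(i)}$ and uniform over its realizations, so they survive integration.

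For \eqref{app:eq:row-diameter-tail-event} the deterministic part is immediate: $\E(2C_\mu^2 d\,\boldsymbol 1_{\mathcal E_i})\le 2C_\mu^2 d\eta$. For the random part we use the layer-cake decomposition with a threshold $u_0$:
\[
\E(M_i^2\boldsymbol 1_{\mathcal E_i})\le u_0\Prob(\mathcal E_i)+\int_{u_0}^\infty\Prob(M_i^2>u)\,du .
\]
Take $u_0=c_1\{a+K_X^2\log(2/\eta)\}$. After the change of variables $u=c_1(a+K_X^2t)$, the integral is at most $c_1K_X^2\int_{\log(2/\eta)}^\infty 2e^{-t}\,dt\lesssim\eta$. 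The total is therefore
\[
\lesssim\eta\{d+\log p_{\max}+\log(1/\eta)\}+\eta\lesssim\eta\{d+\log(p_{\max}/\eta)\}.
\]
This is exactly \eqref{app:eq:row-diameter-tail-event}, and the constants depend only on $K_X$ and $C_\mu$.

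The main obstacle is the single-row norm tail. We need a bound of order $\sqrt d$ from a directional $\psi_2$ condition alone, without independent coordinates, and the net argument is what handles this. Beyond that, the only care needed is choosing the cutoff $u_0$ in the truncation so that the factor $\eta$ multiplies the $\log(1/\eta)$ term, rather than that term entering additively.
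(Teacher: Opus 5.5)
Your proposal is correct and follows essentially the same route as the paper. Both arguments bound $\max_j\|\bX_{j,\cdot}^{(i)}-\bmu_j^{(i)}\|_2$ via a $1/2$-net and a union bound over rows (no independence needed), apply the triangle inequality with the mean-diameter condition, integrate the tail for the second moment, and truncate at a level of order $d+\log(p_{\max}/\eta)$ for the event bound. Your only differences are cosmetic: you split off the deterministic $C_\mu^2 d$ term and truncate via $(M_i^2-u_0)_+$ rather than $R_i^2\boldsymbol 1_{\{R_i^2>u_\eta\}}$.
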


\begin{proof}
Put \(\bZ_j^{(i)}=\bX_{j,\cdot}^{(i)}-\bmu_j^{(i)}\).
A standard \(1/2\)-net argument on the unit sphere, followed by a union
bound over the rows, gives constants \(c,C_0>0\), depending only on
\(K_X\), such that
\[
    \Prob\left\{
        \max_{1\le j\le p^{(i)}}\|\bZ_j^{(i)}\|_2^2
        >C_0\bigl(d+\log p_{\max}+t\bigr)
    \right\}
    \le C_0e^{-ct},
    \qquad t\ge0.
\]
This union-bound argument does not require independence among the rows.
By the triangle inequality and
\eqref{app:eq:row-mean-diameter-condition},
\[
    R_i
    \le
    2\max_{1\le j\le p^{(i)}}\|\bZ_j^{(i)}\|_2
    +C_\mu\sqrt d.
\]
Integrating the resulting tail bound proves
\eqref{app:eq:row-diameter-second-moment}.

To prove the second assertion, let
\(u_\eta=C_1\{d+\log(p_{\max}/\eta)\}\), with \(C_1\) sufficiently
large.  Then
\begin{align*}
    \mathbb E\left(R_i^2\boldsymbol 1_{\mathcal E_i}\right)
    &\le
    u_\eta\Prob(\mathcal E_i)
    +\mathbb E\left(R_i^2\boldsymbol 1_{\{R_i^2>u_\eta\}}\right)\\
    &\le
    C\eta
    \left\{
        d+\log\left(\frac{p_{\max}}{\eta}\right)
    \right\},
\end{align*}
where the last step follows by integrating the same exponential tail.
\end{proof}

For \(\varepsilon\in(0,1)\), define the localization event
\[
    \mathcal G_\varepsilon^{(i)}
    :=
    \left\{
        \|\bK_0^{(i)}-\bA^{(i)}\|_\infty
        \le\varepsilon
    \right\}.
\]
\begin{proposition}
\label{app:prop:approximation}
Suppose the conditions of
Theorem~\ref{thm:selection_multi_s1} hold for
\(\varepsilon,\eta\in(0,1)\), and suppose
Assumption~\ref{app:assum:rowwise-regularity} and
Condition~\eqref{app:eq:coefficient-column-bound} hold.  Then
\begin{equation}
\label{app:eq:approximation-bound}
    \mathcal L(f^\dagger)-\mathcal L^\ast
    =
    O\!\left[
        s^2d
        \left\{
            (d+\log p_{\max})\varepsilon^2
            +
            \eta\left(
                d+\log\frac{p_{\max}}{\eta}
            \right)
        \right\}
    \right].
\end{equation}
\end{proposition}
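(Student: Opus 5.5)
Since the noise \(\xi^{(i)}\) is independent of \(\bX^{(i)}\) with mean zero and variance \(\sigma^2=\mathcal L^\ast\), I will first reduce the excess risk of the comparator to a pure approximation term:
\[
\mathcal L(f^\dagger)-\mathcal L^\ast=\frac1n\sum_{i=1}^n\mathbb E\Bigl[\bigl\langle\bB,(\bX^{(i)})^\top(\bK_0^{(i)}-\bA^{(i)})\bigr\rangle^2\Bigr].
\]
It then suffices to bound a single summand uniformly in \(i\). Writing \(\bm e_k:=\bkappa_{k,0}^{(i)}-\balpha_k^{(i)}\), the summand is \(\sum_{k=1}^s\bbeta_k^\top(\bX^{(i)})^\top\bm e_k\).

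The key observation is translation invariance. Each \(\bm e_k\) is a difference of two probability vectors, so \(\bm 1^\top\bm e_k=0\). Hence \((\bX^{(i)})^\top\bm e_k=\sum_j e_{k,j}\bigl(\bX_{j,\cdot}^{(i)}-\bX_{j_{i,k}^\star,\cdot}^{(i)}\bigr)^\top\), and the rows can be recentred at any reference row. This gives
\[
\bigl\|(\bX^{(i)})^\top\bm e_k\bigr\|_2\le R_i\|\bm e_k\|_1 .
\]
Because \(\bm e_k\) has zero sum, \(\|\bm e_k\|_1=2(1-\kappa_{k,0,j^\star_{i,k}})\le 2\). Combining this with Cauchy--Schwarz and \eqref{app:eq:coefficient-column-bound}, I obtain
\[
\bigl|\langle\bB,(\bX^{(i)})^\top(\bK_0^{(i)}-\bA^{(i)})\rangle\bigr|\le C_\beta\sqrt d\,R_i\sum_k\|\bm e_k\|_1 ,
\]
so the squared error is at most \(C_\beta^2 d\,s^2R_i^2\max_k\|\bm e_k\|_1^2\). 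This is the source of the factor \(s^2d\).

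Next I split on the localization event \(\mathcal G_\varepsilon^{(i)}\) of Theorem~\ref{thm:selection_multi_s1}. Off the event, I use the crude bound \(\|\bm e_k\|_1\le2\). Lemma~\ref{app:lem:row-diameter}, eq.~\eqref{app:eq:row-diameter-tail-event}, then applies with \(\mathcal E_i=(\mathcal G^{(i)}_\varepsilon)^c\), which has probability at most \(\eta\), and yields the term \(s^2d\,\eta\{d+\log(p_{\max}/\eta)\}\).

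On the event, I need \(\|\bm e_k\|_1\lesssim\varepsilon\), and this is the main obstacle. The \(\ell_\infty\) bound alone only gives \(\|\bm e_k\|_1\le p\varepsilon\), which would introduce a spurious factor of \(p_{\max}\). The plan is to rerun the proof of Lemma~\ref{app:lem:deterministic-softmax}: on the concentration event used in Theorem~\ref{thm:selection_multi_s1}, the total off-target mass satisfies \(1-\kappa_{k,0,j^\star}\le(p_{\max}-1)e^{-\delta_{\min}/2}\le\varepsilon\). Consequently \(\|\bm e_k\|_1=2(1-\kappa_{k,0,j^\star})\le2\varepsilon\). I will therefore define the good event as the joint concentration event \(\mathcal E_i^{(s)}\) rather than \(\mathcal G_\varepsilon^{(i)}\) itself; it still has probability at least \(1-\eta\) conditionally on \(\bA^{(i)}\).

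On this event, \(\mathbb E[R_i^2\mathbf 1]\le\mathbb ER_i^2\le C(d+\log p_{\max})\) by \eqref{app:eq:row-diameter-second-moment}, which gives \(s^2d(d+\log p_{\max})\varepsilon^2\). All conditional statements hold uniformly over the realized \(\bA^{(i)}\), so integrating over \(\bA^{(i)}\) and averaging over \(i\) gives \eqref{app:eq:approximation-bound}.
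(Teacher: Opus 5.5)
Your proposal is correct and follows the paper's proof essentially step for step: the same risk identity, the same recentring of $(\bX^{(i)})^\top\bm e_k$ at the target row $j^\star_{i,k}$, the same $s^2 d\,R_i^2$ bound on the squared error, and the same split of the expectation using both parts of Lemma~\ref{app:lem:row-diameter}. One correction: the ``main obstacle'' is not real, because $e_{k,j^\star_{i,k}}=\kappa_{k,0,j^\star_{i,k}}-1$ is itself a coordinate of $\bm e_k$, so on $\mathcal G_\varepsilon^{(i)}$ you already have $1-\kappa_{k,0,j^\star_{i,k}}\le\|\bm e_k\|_\infty\le\varepsilon$ and hence $\|\bm e_k\|_1\le 2\varepsilon$, which is exactly how the paper argues; replacing $\mathcal G_\varepsilon^{(i)}$ by $\mathcal E_i^{(s)}$ is therefore valid but unnecessary, and it ties your argument to the internals of the proof of Theorem~\ref{thm:selection_multi_s1} rather than to its statement.
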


\begin{proof}
By the identity-link individualized regression model described in
Section~\ref{prediction}, the definition
\(\mathcal L=\mathbb E(\mathcal L_n)\), and the conditional mean-zero
property of the noise,
\begin{align}
    \mathcal L(f^\dagger)-\mathcal L^\ast
    &=
    \frac1n\sum_{i=1}^n
    \mathbb{E}\left[
        \left\langle
            \bB,
            (\bX^{(i)})^\top
            (\bA^{(i)}-\bK_0^{(i)})
        \right\rangle^2
    \right].
\end{align}
For head \(k\), let \(j_{i,k}^\star\) be its target row.  Since
\(\boldsymbol\kappa_{k,0}^{(i)}\) belongs to the probability simplex,
\begin{align}
\label{app:eq:translation-invariant-error}
    &(\bX^{(i)})^\top
    \left(
        \boldsymbol\kappa_{k,0}^{(i)}
        -\mathbf e_{j_{i,k}^\star}
    \right)\nonumber\\
    &\qquad=
    \sum_{j\ne j_{i,k}^\star}
    \kappa_{j,k,0}^{(i)}
    \left(
        \bX_{j,\cdot}^{(i)}
        -\bX_{j_{i,k}^\star,\cdot}^{(i)}
    \right)^\top.
\end{align}
On \(\mathcal G_\varepsilon^{(i)}\),
\[
    \sum_{j\ne j_{i,k}^\star}\kappa_{j,k,0}^{(i)}
    =1-\kappa_{j_{i,k}^\star,k,0}^{(i)}
    \le\varepsilon,
\]
whereas the same sum is always at most one.  Consequently,
Condition~\eqref{app:eq:coefficient-column-bound} and
\eqref{app:eq:translation-invariant-error} give
\begin{align*}
    \left|
        \left\langle
            \bB,
            (\bX^{(i)})^\top(\bA^{(i)}-\bK_0^{(i)})
        \right\rangle
    \right|
    &\le C_\beta s\sqrt d\,\varepsilon R_i,
    &&\text{on }\mathcal G_\varepsilon^{(i)},\\
    \left|
        \left\langle
            \bB,
            (\bX^{(i)})^\top(\bA^{(i)}-\bK_0^{(i)})
        \right\rangle
    \right|
    &\le C_\beta s\sqrt d\,R_i,
    &&\text{always}.
\end{align*}
Theorem~\ref{thm:selection_multi_s1} gives
\(\Prob\{(\mathcal G_\varepsilon^{(i)})^c\}\le\eta\).  Splitting the
expectation in the preceding risk identity over
\(\mathcal G_\varepsilon^{(i)}\) and its complement, and applying
Lemma~\ref{app:lem:row-diameter}, therefore gives
\begin{align*}
    \mathcal L(f^\dagger)-\mathcal L^\ast
    &\le
    \frac{C_\beta^2s^2d}{n}
    \sum_{i=1}^n
    \left[
        \varepsilon^2\mathbb E R_i^2
        +\mathbb E\left\{
            R_i^2\boldsymbol 1_{(\mathcal G_\varepsilon^{(i)})^c}
        \right\}
    \right]\\
    &\le
    Cs^2d
    \left\{
        (d+\log p_{\max})\varepsilon^2
        +
        \eta\left(
            d+\log\frac{p_{\max}}{\eta}
        \right)
    \right\},
\end{align*}
which proves Equation~\eqref{app:eq:approximation-bound}.
\end{proof}

The bound in Proposition~\ref{app:prop:approximation} is deterministic:
the selector event concerns a fresh population draw inside the
expectation and is integrated out.  Accordingly, the approximation term
does not consume any probability in the final \(O_{\Prob}\) statement.

\subsection{Estimation error for the jointly optimized class}

Set
\[
    \Delta_n
    :=
    \sup_{\btheta\in\bTheta}
    |\mathcal L_n(\btheta)-\mathcal L(\btheta)|.
\]
Because \(\btheta^\dagger\in\bTheta\), empirical optimality gives
\begin{align*}
    \mathcal L(\widehat\btheta)-\mathcal L(\btheta^\dagger)
    &=
    \{\mathcal L(\widehat\btheta)-\mathcal L_n(\widehat\btheta)\}
    +\{\mathcal L_n(\widehat\btheta)-\mathcal L_n(\btheta^\dagger)\}
    +\{\mathcal L_n(\btheta^\dagger)-\mathcal L(\btheta^\dagger)\}\\
    &\le 2\Delta_n.
\end{align*}
Condition~\eqref{eq:uniform-loss-condition} therefore yields
\begin{equation}
\label{app:eq:estimation-bound}
    \mathcal L(\widehat\btheta)-\mathcal L(\btheta^\dagger)
    =
    O_{\Prob}\left(
        \sqrt{\frac{D\log n}{n}}
    \right).
\end{equation}
This is the only empirical-process step needed for the theorem.

\subsubsection{A norm-controlled sufficient condition}
\label{app:subsec:empirical-process}

For \(R_{1,n},R_{2,n}>0\), define
\[
    \bTheta_n(R_{1,n},R_{2,n})
    :=
    \left\{
        (\bV,\bW):
        \|\bV\|_F\le R_{1,n},\quad
        \max_{1\le k\le s}\|\bW_k\|_{\mathrm{op}}\le R_{2,n}
    \right\}.
\]
One concrete specialization of the high-level theorem is obtained by taking
\(\bTheta=\bTheta_n(R_{1,n},R_{2,n})\).  For this specialization, the conditions
below verify Condition~\eqref{eq:uniform-loss-condition}.  These norm
constraints are not part of the abstract statement of
Theorem~\ref{thm:erm_excess}.
For this specialization, the comparator assumption is ensured by
\begin{equation}
\label{app:eq:comparator-feasibility}
    R_{1,n}\ge\|\bB\|_F,
    \qquad
    R_{2,n}\ge
    \max_{1\le k\le s}\|\bW_{0,k}\|_{\mathrm{op}}.
\end{equation}
Equip it with
\[
    d_{\bTheta}(\btheta,\btheta')
    :=
    \|\bV-\bV'\|_F
    +
    \left\{
        \sum_{k=1}^s
        \|\bW_k-\bW_k'\|_{\mathrm{op}}^2
    \right\}^{1/2}.
\]
Because every attention column lies in the probability simplex,
\begin{equation}
\label{app:eq:predictor-envelope}
    |f_{\btheta}(\bX)|
    \le
    \sqrt{s}R_{1,n}\|\bX\|_{\mathrm{op}}.
\end{equation}
Moreover, the \(\ell_\infty\)-to-\(\ell_1\) Lipschitz property of softmax
gives, for \(\btheta,\btheta'\in\bTheta_n(R_{1,n},R_{2,n})\),
\begin{align}
\label{app:eq:predictor-lipschitz}
    |f_{\btheta}(\bX)-f_{\btheta'}(\bX)|
    &\le
    \sqrt{s}\|\bX\|_{\mathrm{op}}\|\bV-\bV'\|_F
    \nonumber\\
    &\quad+
    2R_{1,n}\|\bX\|_{\mathrm{op}}^3
    \left\{
        \sum_{k=1}^s
        \|\bW_k-\bW_k'\|_{\mathrm{op}}^2
    \right\}^{1/2}.
\end{align}
In particular, neither the predictor envelope nor the Lipschitz factor in
\eqref{app:eq:predictor-lipschitz} depends polynomially on \(R_{2,n}\).

For \(Z_i=(\bX^{(i)},y^{(i)})\), put
\begin{align*}
    H_{n,i}
    &:=
    \left(
        |y^{(i)}|
        +\sqrt{s}R_{1,n}\|\bX^{(i)}\|_{\mathrm{op}}
    \right)^2,\\
    J_{n,i}
    &:=
    2\left(
        |y^{(i)}|
        +\sqrt{s}R_{1,n}\|\bX^{(i)}\|_{\mathrm{op}}
    \right)
    \left(
        \sqrt{s}\|\bX^{(i)}\|_{\mathrm{op}}
        +2R_{1,n}\|\bX^{(i)}\|_{\mathrm{op}}^3
    \right).
\end{align*}
Here \(H_{n,i}\) is an envelope for the squared loss and \(J_{n,i}\) is a
Lipschitz envelope with respect to \(d_{\bTheta}\).

\begin{proposition}
\label{app:prop:uniform-loss-sufficient}
Suppose the observations are independent,
\(R_{1,n}=O(1)\), \(D\log n=o(n)\), and
\begin{equation}
\label{app:eq:empirical-process-envelope}
    \max_{1\le i\le n}\|H_{n,i}\|_{\psi_1}=O(1),
    \qquad
    \max_{1\le i\le n}\|J_{n,i}\|_{L_2}=O(1).
\end{equation}
If
\begin{equation}
\label{app:eq:radius-entropy-growth}
    \log\!\left\{
        2+\sqrt{sd}\,R_{2,n}
    \right\}
    =O(\log n),
\end{equation}
then, with \(\bTheta=\bTheta_n(R_{1,n},R_{2,n})\),
Condition~\eqref{eq:uniform-loss-condition} holds.
In particular, \(R_{2,n}=O(\log n)\) satisfies
\eqref{app:eq:radius-entropy-growth}.
\end{proposition}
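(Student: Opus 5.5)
The plan is to prove Proposition~\ref{app:prop:uniform-loss-sufficient} with a standard chaining/bracketing argument for a Lipschitz-in-parameter loss class. The envelopes $H_{n,i}$ and $J_{n,i}$ supply the integrability, and the covering numbers of $\bTheta_n(R_{1,n},R_{2,n})$ under $d_{\bTheta}$ supply the entropy. Write $\ell_{\btheta}(Z_i)=\{y^{(i)}-f_{\btheta}(\bX^{(i)})\}^2$. By \eqref{app:eq:predictor-envelope}, $0\le \ell_{\btheta}(Z_i)\le H_{n,i}$ uniformly over the class. Also, since $|a^2-b^2|=|a-b||a+b|$, the bounds \eqref{app:eq:predictor-envelope}--\eqref{app:eq:predictor-lipschitz} give
\[
|\ell_{\btheta}(Z_i)-\ell_{\btheta'}(Z_i)|\le J_{n,i}\,d_{\bTheta}(\btheta,\btheta').
\]
Both facts are taken as given; they are already recorded above.

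The first step is to discretize. Let $\calN_\epsilon$ be an $\epsilon$-net of $\bTheta_n$ in $d_{\bTheta}$. A volumetric bound on the Frobenius ball of radius $R_{1,n}$ in $\bbR^{d\times s}$, together with the operator-norm balls of radius $R_{2,n}$ in $\bbR^{d\times d}$ (each contained in a Frobenius ball of radius $\sqrt d R_{2,n}$), gives
\[
\log|\calN_\epsilon|\lesssim D\log\{(2+\sqrt{sd}\,R_{2,n}+R_{1,n})/\epsilon\}.
\]
Take $\epsilon=n^{-1}$. By \eqref{app:eq:radius-entropy-growth} and $R_{1,n}=O(1)$, this is $O(D\log n)$.

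The second step splits the supremum into two parts:
\[
\Delta_n\le \max_{\btheta\in\calN_\epsilon}|\mathcal L_n(\btheta)-\mathcal L(\btheta)|+\epsilon\Big(\tfrac1n\sum_i J_{n,i}+\tfrac1n\sum_i\E J_{n,i}\Big).
\]
The second term is $O_{\Prob}(n^{-1})$ by Markov's inequality and $\|J_{n,i}\|_{L_2}=O(1)$, so it is negligible. For the first term, fix $\btheta$. The summands $\ell_{\btheta}(Z_i)-\E\ell_{\btheta}(Z_i)$ are independent and have $\psi_1$ norm $\lesssim\|H_{n,i}\|_{\psi_1}=O(1)$. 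Bernstein's inequality for sub-exponential variables then gives a tail of $2\exp[-cn\min(t^2,t)]$. A union bound over $\calN_\epsilon$ with $t=C\sqrt{D\log n/n}$ gives probability at most $\exp\{C'D\log n-cC^2D\log n\}\to0$ for $C$ large. The condition $D\log n=o(n)$ puts $t$ in the quadratic regime, $t\le1$. Together these give $\Delta_n=O_{\Prob}(\sqrt{D\log n/n})$.

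The main obstacle is the entropy step. The Lipschitz constant must not blow up in $R_{2,n}$, which holds because \eqref{app:eq:predictor-lipschitz} is free of $R_{2,n}$, so $R_{2,n}$ enters only logarithmically through the covering radius. I would also check carefully that the $d_{\bTheta}$-covering of a product of operator-norm balls really has log-cardinality of order $D\log(\cdot)$ with $D=sd+sd^2$. Finally, $R_{2,n}=O(\log n)$ gives $\log(2+\sqrt{sd}\log n)=O(\log n)$, because $sd\le D=o(n)$.
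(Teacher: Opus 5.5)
Your proposal is correct and follows essentially the same route as the paper: an $n^{-1}$-net of $\bTheta_n(R_{1,n},R_{2,n})$ under $d_{\bTheta}$ with log-cardinality $O(D\log n)$, obtained via $\|\bW_k\|_F\le\sqrt d\,\|\bW_k\|_{\mathrm{op}}$; Bernstein's inequality for the sub-exponential centered losses with a union bound over the net; and a $J_{n,i}$-Lipschitz discretization error of order $O_{\Prob}(n^{-1})$. What you call chaining is really this single-scale discretization, which is all that is needed, and your justification of the final clause via $sd\le D=o(n)$ fills in a step the paper leaves implicit.
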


The last statement concerns the entropy condition only; comparator
feasibility is checked separately below.
This proposition verifies the main-text condition only for the displayed
class.  If estimation is carried out over a larger \(\bTheta\),
Condition~\eqref{eq:uniform-loss-condition}, or the weaker increment condition
given below, must instead be established for that estimator.

\begin{proof}
The Euclidean covering bound, together with
\(\|\bW_k\|_F\le\sqrt d\|\bW_k\|_{\mathrm{op}}\), gives
\begin{equation}
\label{app:eq:bounded-parameter-entropy}
    \log N\{u,\bTheta_n(R_{1,n},R_{2,n}),d_{\bTheta}\}
    \le
    D
    \log\left[
        \frac{
            C\{1+R_{1,n}+\sqrt{sd}\,R_{2,n}\}
        }{u}
    \right]
\end{equation}
for \(u\) below the diameter of the class.  Write
\(\ell_{\btheta}(Z_i)=\{y^{(i)}-f_{\btheta}(\bX^{(i)})\}^2\), and, for a
possibly non-identically distributed array, use
\[
    \mathbb P_n g:=\frac1n\sum_{i=1}^n g(Z_i),
    \qquad
    \mathbb P g:=\frac1n\sum_{i=1}^n\mathbb E g(Z_i).
\]
The envelope definitions imply
\[
    |\ell_{\btheta}(Z_i)-\ell_{\btheta'}(Z_i)|
    \le
    J_{n,i}d_{\bTheta}(\btheta,\btheta'),
\]
by \eqref{app:eq:predictor-envelope}--%
\eqref{app:eq:predictor-lipschitz}, and
\(0\le\ell_{\btheta}(Z_i)\le H_{n,i}\).

Let \(\mathcal N_n\) be an \(n^{-1}\)-net of the parameter class under
\(d_{\bTheta}\).  Conditions~\eqref{app:eq:bounded-parameter-entropy}
and~\eqref{app:eq:radius-entropy-growth}, together with
\(R_{1,n}=O(1)\), give
\[
    \log|\mathcal N_n|=O(D\log n).
\]
For each net point, the centered losses are uniformly sub-exponential by
\eqref{app:eq:empirical-process-envelope}.  Bernstein's inequality for sums
of independent, non-identically distributed sub-exponential variables,
followed by a union bound over \(\mathcal N_n\), therefore gives
\[
    \max_{\bm{\vartheta}\in\mathcal N_n}
    |(\mathbb P_n-\mathbb P)\ell_{\bm{\vartheta}}|
    =
    O_{\Prob}\left[
        \sqrt{\frac{D\log n}{n}}
        +\frac{D\log n}{n}
    \right].
\]
For any \(\btheta\), choose \(\pi_n\btheta\in\mathcal N_n\) with
\(d_{\bTheta}(\btheta,\pi_n\btheta)\le n^{-1}\).  Then
\begin{align*}
    |(\mathbb P_n-\mathbb P)
        (\ell_{\btheta}-\ell_{\pi_n\btheta})|
    &\le
    \frac1n\left\{
        \frac1n\sum_{i=1}^n J_{n,i}
        +\frac1n\sum_{i=1}^n\mathbb E J_{n,i}
    \right\}\\
    &=O_{\Prob}(n^{-1}),
\end{align*}
where the last equality follows from the uniform \(L_2\) bound in
\eqref{app:eq:empirical-process-envelope}.  Since
\(D\log n=o(n)\), combining the last two displays proves
\[
    \sup_{\btheta\in\bTheta_n(R_{1,n},R_{2,n})}
    |\mathcal L_n(\btheta)-\mathcal L(\btheta)|
    =O_{\Prob}\left(\sqrt{\frac{D\log n}{n}}\right).
\]
The dependence on \(R_{2,n}\) is logarithmic because softmax keeps the
predictor and loss envelopes bounded uniformly over the query--key matrices;
\(R_{2,n}\) enters only through the size of \(\mathcal N_n\).
\end{proof}

To make the envelope requirement explicit, write
\[
    K_{X,n}
    :=
    \max_{1\le i\le n}
    \left\|\|\bX^{(i)}\|_{\mathrm{op}}\right\|_{\psi_2}.
\]
Conditional Gaussianity and the oracle regression model imply that
Condition~\eqref{app:eq:empirical-process-envelope} holds whenever
\begin{align*}
    &\left[
        \sigma
        +\sqrt{s}\{\|\bB\|_F+R_{1,n}\}K_{X,n}
    \right]^2
    =O(1),\\
    &\left[
        \sigma
        +\sqrt{s}\{\|\bB\|_F+R_{1,n}\}K_{X,n}
    \right]
    \left[
        \sqrt{s}K_{X,n}+R_{1,n}K_{X,n}^3
    \right]
    =O(1).
\end{align*}
Thus the envelope condition follows, for example, under uniformly controlled
sub-Gaussian design and noise after the design normalization needed to keep
these operator-norm scales bounded.
This full-design envelope requirement is separate from, and generally
stronger than, Assumption~\ref{app:assum:rowwise-regularity}, which is used for
the approximation term.  In particular, this rowwise assumption alone does
not make the empirical-process
constants uniform when \(p_{\max}\), \(s\), or the operator norm of the full
design grows.  Also, Condition~\eqref{app:eq:coefficient-column-bound} implies
only \(\|\bB\|_F=O(\sqrt{sd})\).  Since comparator feasibility requires
\(R_{1,n}\ge\|\bB\|_F\), Condition~\eqref{app:eq:coefficient-column-bound}
alone guarantees compatibility with \(R_{1,n}=O(1)\) only when \(sd=O(1)\).
For growing \(s\) or \(d\), an additional bound on the total coefficient
energy is needed to apply this sufficient result.  In other growth regimes,
Condition~\eqref{eq:uniform-loss-condition} remains a separate high-level
assumption unless a suitably scaled or localized empirical-process argument
is supplied.

In fact, the preceding ERM comparison only needs the weaker increment bound
\begin{equation}
\label{app:eq:centered-increment-sufficient}
    \left|
        \{\mathcal L_n(\widehat\btheta)-\mathcal L(\widehat\btheta)\}
        -
        \{\mathcal L_n(\btheta^\dagger)-\mathcal L(\btheta^\dagger)\}
    \right|
    =
    O_{\Prob}\left(
        \sqrt{\frac{D\log n}{n}}
    \right).
\end{equation}
Thus a localized empirical-process or stability argument establishing
\eqref{app:eq:centered-increment-sufficient} may replace the stronger global
condition in the main text without changing the excess-risk conclusion.

\subsection{Completion of the proof}

Let \(\varepsilon_n,\eta_n\in(0,1)\) be the sequences in
Theorem~\ref{thm:erm_excess}.  The score-gap condition
\eqref{eq:score-gap-condition-multi}, evaluated at
\((\varepsilon,\eta)=(\varepsilon_n,\eta_n)\), is exactly the hypothesis
needed to apply Theorem~\ref{thm:selection_multi_s1} at
\((\varepsilon,\eta)=(\varepsilon_n,\eta_n)\).
Proposition~\ref{app:prop:approximation} and
Equation~\eqref{app:eq:estimation-bound} therefore give
\begin{equation}
\label{app:eq:final-excess-rate}
    \mathcal L(\widehat\btheta)-\mathcal L^\ast
    =
    O_{\Prob}\left[
        \sqrt{\frac{D\log n}{n}}
        +
        s^2d
        \left\{
            (d+\log p_{\max})\varepsilon_n^2
            +
            \eta_n\left(
                d+\log\frac{p_{\max}}{\eta_n}
            \right)
        \right\}
    \right].
\end{equation}
This is exactly the conclusion~\eqref{eq:scale-explicit-excess} and completes
the proof of Theorem~\ref{thm:erm_excess}.

For completeness, we give choices that balance the two localization terms
with the estimation term.  Put
\[
    a_n:=\sqrt{\frac{sd^2\log n}{n}},
    \qquad
    \ell_n:=d+\log p_{\max},
    \qquad
    h_n:=d+\log\left(\frac{e p_{\max}s^2d}{a_n}\right).
\]
When \(a_n\to0\), the choices
\[
    \varepsilon_n
    =
    \min\left\{
        \frac12,
        \left(\frac{a_n}{s^2d\ell_n}\right)^{1/2}
    \right\},
    \qquad
    \eta_n
    =
    \min\left\{
        \frac12,
        \frac{a_n}{2s^2dh_n}
    \right\}
\]
belong to \((0,1)\), and the truncations at \(1/2\) are eventually inactive.
For the untruncated choices,
\[
    s^2d(d+\log p_{\max})\varepsilon_n^2=a_n.
\]
Moreover,
\[
    d+\log\left(\frac{p_{\max}}{\eta_n}\right)
    =h_n-1+\log(2h_n)
    \le2h_n,
\]
where the last inequality holds because \(h_n\ge1\).  It follows that
\[
    s^2d\eta_n
    \left\{
        d+\log\left(\frac{p_{\max}}{\eta_n}\right)
    \right\}
    \le a_n.
\]
Hence both localization terms in
\eqref{app:eq:final-excess-rate} are \(O(a_n)\).  If, in addition,
\(\ell_n\asymp h_n\asymp d\), then the choices simplify to
\[
    \varepsilon_n
    \asymp
    d^{-1/2}s^{-3/4}
    \left(\frac{\log n}{n}\right)^{1/4},
    \qquad
    \eta_n
    \asymp
    d^{-1}s^{-3/2}
    \left(\frac{\log n}{n}\right)^{1/2}.
\]
Since \(D=sd+sd^2\asymp sd^2\), the excess risk is then
\(O_{\Prob}(a_n)\).  Obtaining this rate additionally requires the score-gap
condition, the membership \(\btheta^\dagger\in\bTheta\), the rowwise and
coefficient-scale conditions used in Proposition~\ref{app:prop:approximation},
and Condition~\eqref{eq:uniform-loss-condition} to remain valid along the
resulting sequences.

\subsection{Discussion: structured query--key matrices}
\label{app:subsec:structured-risk}

To illustrate how structure can change the estimation term, consider
\[
    \bTheta_{q,n}
    :=
    \left\{
        \btheta=(\bV,\bW)\in\bTheta_n(R_{1,n},R_{2,n}):
        \operatorname{rank}(\bW_k)\le q
        \text{ for every }k
    \right\},
    \qquad q\ll d.
\]
A rank-\(q\), \(d\times d\) matrix admits a factorization with
\(q(2d-q)=O(qd)\) effective parameters, rather than \(d^2\).  This motivates
replacing
\[
    D=sd+sd^2
    \quad\text{by}\quad
    D_{\mathrm{low}}
    =
    sd+O(sqd).
\]
Let \(\widehat\btheta_q\) be an empirical risk minimizer over
\(\bTheta_{q,n}\).  Suppose that \(\bTheta_{q,n}\) contains a separating
comparator and that the envelope conditions in
\eqref{app:eq:empirical-process-envelope} hold.  The same covering argument,
with \(D\) replaced by \(D_{\mathrm{low}}\), verifies the corresponding uniform
loss condition whenever
\(D_{\mathrm{low}}\log n=o(n)\) and
\(\log\{2+\sqrt{sq}\,R_{2,n}\}=O(\log n)\).  Under
Assumption~\ref{app:assum:rowwise-regularity} and
Condition~\eqref{app:eq:coefficient-column-bound}, if the selection conditions hold for
\((\varepsilon_n,\eta_n)\) and
\[
    s^2d\left\{
        (d+\log p_{\max})\varepsilon_n^2
        +\eta_n\left(
            d+\log\frac{p_{\max}}{\eta_n}
        \right)
    \right\}
    =
    O\left(
        \sqrt{\frac{D_{\mathrm{low}}\log n}{n}}
    \right),
\]
then the same proof gives
\[
    \mathcal L(\widehat\btheta_q)-\mathcal L^\ast
    =
    O_{\Prob}\left(
        \sqrt{
            \frac{D_{\mathrm{low}}\log n}{n}
        }
    \right).
\]
The matrices in
\eqref{app:eq:single-informative-weight} and
\eqref{app:eq:multi-informative-weight} give such low-rank comparators
when their effective ranks do not exceed \(q\).

The rank restriction can be implemented through
\(\bW_k=\bL_k\bR_k^\top\), or, for a positive-semidefinite matrix,
\(\bW_k=\bL_k\bL_k^\top\).  Alternatively, one may take an unrestricted
gradient step and then retain the largest \(q\) singular values.  These
parameterizations remain nonconvex.  Importantly, rank restriction and
parameter counting alone do not prove the displayed empirical-process rate;
the loss-class envelope conditions must still be verified.

\subsection{Dimension-growth conditions}
\label{app:subsec:growth-conditions}

Under Condition~\eqref{eq:uniform-loss-condition}, the estimation component
tends to zero if
\begin{equation}
\label{app:eq:estimation-growth}
    D\log n=(sd+sd^2)\log n=o(n),
\end{equation}
or, equivalently at the stated order, if \(sd^2\log n=o(n)\).  Risk
consistency additionally requires
\begin{equation}
\label{app:eq:localization-growth}
    s^2d
    \left\{
        (d+\log p_{\max})\varepsilon_n^2
        +
        \eta_n\left(
            d+\log\frac{p_{\max}}{\eta_n}
        \right)
    \right\}
    =o(1).
\end{equation}

We next separate the selection, rowwise-design, coefficient-scale, and
empirical-process requirements for the covariance construction.  Allowing the temperature to
vary with \(n\), write it as \(\tau_n\), and let
\[
    d_n^\ast:=\lfloor d/s\rfloor,
    \qquad
    \lambda_{\Delta,n}
    :=\min_{1\le k\le s}\lambda_{\min}(\bDelta_k),
    \qquad
    g_{\gamma,n}:=1-\bar\gamma_{\max}^\ast.
\]
Assume first that the sub-Gaussian and spectral factors
in~\eqref{app:eq:multi-subgaussian-dimension} are uniformly bounded.  Since
the common factor \(1/\tau_n\) cancels between the constructed gap and its
fluctuation radius, the stochastic part of the score-gap condition has the
representative requirement
\begin{equation}
\label{app:eq:growth-fluctuation}
    \frac ds
    \gtrsim
    \log\left(\frac{sp_{\max}}{\eta_n}\right).
\end{equation}
The deterministic softmax part remains
\begin{equation}
\label{app:eq:growth-softmax}
    \frac{d_n^\ast g_{\gamma,n}}{\tau_n}
    \ge
    2\log\left(\frac{p_{\max}-1}{\varepsilon_n}\right).
\end{equation}
Here \(\tau_n\) occurs only through the constructed matrices
\(\bW_{0,k}\); the softmax itself is not divided by \(\tau_n\) a second time.

The covariance construction also gives
\[
    \max_{1\le k\le s}\|\bW_{0,k}\|_{\mathrm{op}}
    \le
    \frac1{\tau_n\lambda_{\Delta,n}}.
\]
This norm bound describes how the comparator scale changes with
\(\tau_n\).  The high-level theorem only assumes
\(\btheta^\dagger\in\bTheta\).  For the bounded sufficient class in
Section~\ref{app:subsec:empirical-process}, a sufficient compatibility
interval is
\begin{equation}
\label{app:eq:temperature-radius-interval}
    \frac1{R_{2,n}\lambda_{\Delta,n}}
    \le
    \tau_n
    \le
    \frac{d_n^\ast g_{\gamma,n}}
    {2\log\{(p_{\max}-1)/\varepsilon_n\}}.
\end{equation}
The interval is nonempty whenever
\begin{equation}
\label{app:eq:temperature-radius-compatibility}
    R_{2,n}\lambda_{\Delta,n}d_n^\ast g_{\gamma,n}
    \ge
    2\log\left(\frac{p_{\max}-1}{\varepsilon_n}\right).
\end{equation}
The remaining comparator requirement is
\(R_{1,n}\ge\|\bB\|_F\).  Thus taking
\(R_{2,n}\asymp\log n\) preserves the same empirical-process order and is
comparator-feasible provided
\(\tau_n\lambda_{\Delta,n}\gtrsim1/\log n\), the upper bound in
\eqref{app:eq:temperature-radius-interval} holds, and the envelope conditions
in~\eqref{app:eq:empirical-process-envelope} remain uniform.

For the clean-rate choices following~\eqref{app:eq:final-excess-rate},
combining~\eqref{app:eq:growth-fluctuation}
and~\eqref{app:eq:growth-softmax} gives, when the stochastic spectral factors
are uniform and \(\tau_n/g_{\gamma,n}=O(1)\), the convenient sufficient order
\begin{equation}
\label{app:eq:joint-selection-growth}
    \frac ds
    \gtrsim
    \max\left\{
        \log\left(
            \frac{s^3p_{\max}d h_n}{a_n}
        \right),
        \log\left(
            p_{\max}s\sqrt{\frac{d\ell_n}{a_n}}
        \right)
    \right\}.
\end{equation}
If \(\ell_n\asymp h_n\asymp d\), then
\eqref{app:eq:joint-selection-growth} becomes
\begin{equation}
\label{app:eq:joint-selection-growth-simplified}
    \frac ds
    \gtrsim
    \max\left\{
        \log\left(
            p_{\max}s^{5/2}d\sqrt{\frac{n}{\log n}}
        \right),
        \log\left(
            p_{\max}s^{3/4}d^{1/2}
            \left(\frac{n}{\log n}\right)^{1/4}
        \right)
    \right\}.
\end{equation}
The construction requires
\[
    1\le d_n^\ast\le\min_{1\le k\le s}d_k,
    \qquad d_k:=d-\operatorname{rank}(\bZ_k),
\]
where the lower bound follows from \(d\ge s\).  If also
\(s\log s=O(d)\), \(d/s\gtrsim\log n\), and
\(\ell_n\asymp h_n\asymp d\), then, for a sufficiently small constant
\(c>0\), the selection condition permits the sufficient envelope
\begin{equation}
\label{app:eq:representative-p-growth}
    p_{\max}
    =
    O\left(
        \frac{\exp(c\,d/s)}{s^{5/2}d}
        \sqrt{\frac{\log n}{n}}
    \right).
\end{equation}
This is a selection-side statement, not by itself a joint risk-consistency
regime.  Assumption~\ref{app:assum:rowwise-regularity},
Condition~\eqref{app:eq:coefficient-column-bound}, the localization-growth requirement
\eqref{app:eq:localization-growth}, comparator feasibility, and the assumed
empirical-process bound must also hold.  With the balancing choices above,
the left-hand side of \eqref{app:eq:localization-growth} is \(O(a_n)\), so it
vanishes whenever \(a_n\to0\).

Finally, the illustrative estimation-compatible upper envelope
\[
    d
    =
    O\left(
        \frac{\sqrt{n/s}}{\log n}
    \right)
\]
makes \(sd^2\log n/n=O(1/\log n)\).  It controls the estimation term but
does not replace the lower-dimensional selection requirement
in~\eqref{app:eq:joint-selection-growth}, the rowwise-design and
coefficient-scale conditions, the comparator-membership condition, the
bounded-class compatibility condition
in~\eqref{app:eq:temperature-radius-compatibility} when that sufficient
specialization is used, or the assumed empirical-process bound.  More
generally, the estimation component tends to zero whenever
\(d=o\{\sqrt{n/(s\log n)}\}\).

\section{Discussion of the spectral-overlap condition}
\label{app:sec:overlap-discussion}

The condition \(\bar\gamma_{\max}^\ast<1\) in
Theorem~\ref{prop2} restricts the relative geometry of the excess
covariance matrices.  Positive definiteness of each individual
\(\bDelta_k\) does not by itself imply this spectral-overlap condition.
We first give an analytic family that satisfies the condition, and then
use a shared-and-class-specific path to separate spectral strength from
eigenspace geometry.

\subsection{An analytic sufficient family}
\label{app:subsec:analytic-overlap}

Let
\(\mathcal U_1,\ldots,\mathcal U_s\) be mutually orthogonal subspaces of
dimension \(d^\ast\), and let \(\bP_k\) denote the orthogonal projector
onto \(\mathcal U_k\).  For constants \(a,b>0\), define
\begin{equation}
\label{app:eq:analytic-overlap-family}
    \bDelta_k
    =
    a\bP_k+b\bI_d,
    \qquad k\in[s].
\end{equation}
Every \(\bDelta_k\) is positive definite.  Suppose in addition that
\(\operatorname{col}(\bZ_k)\perp\mathcal U_k\); this holds, in
particular, when all class means are zero.  For a target class \(k\),
\[
    \bD_k
    =
    (s-1)b\bI_d
    +
    a\sum_{\ell\ne k}\bP_\ell.
\]
Before removing the whitened means, the matrix
\(\bDelta_k^{-1/2}\bD_k\bDelta_k^{-1/2}\) has eigenvalue
\[
    \gamma_{\mathrm{low}}
    =
    \frac{(s-1)b}{a+b}
\]
on \(\mathcal U_k\), eigenvalue
\(s-1+a/b\) on every \(\mathcal U_\ell\), \(\ell\ne k\), and eigenvalue
\(s-1\) on the remaining subspace.  The mean projection preserves
\(\mathcal U_k\), so the \(d^\ast\) eigenvalues on this subspace remain
equal to \(\gamma_{\mathrm{low}}\).  If
\begin{equation}
\label{app:eq:analytic-overlap-condition}
    a>(s-2)b,
\end{equation}
then \(\gamma_{\mathrm{low}}<1\), whereas every other spectral level is at least one.
Consequently, the \(d^\ast\) smallest positive eigenvalues of
\(\widetilde{\bD}_k\) all equal \(\gamma_{\mathrm{low}}\), and
\[
    \bar\gamma_k^\ast
    =
    \frac{(s-1)b}{a+b}
    <1
    \qquad\text{for every }k.
\]
This is a rotationally invariant class of strictly positive-definite
covariance contrasts satisfying the spectral-overlap condition in
Theorem~\ref{prop2}.
The strict inequality is stable under sufficiently small perturbations
of the contrasts and of the compatible mean subspaces.

\subsection{A shared-and-class-specific construction}
\label{app:subsec:block-construction}

Suppose for simplicity that \(d\) is divisible by \(s\), and put
\[
    r:=\frac{d}{s}=d^\ast.
\]
Fix a signal strength \(c_{\mathrm{sig}}>0\) and a small interference
level \(c_{\epsilon}>0\).  The latter acts as an isotropic ridge and
guarantees strict positive definiteness.  Let
\(r_{\mathrm{cs}}\in\{0,1,\ldots,r\}\) denote the class-specific
dimension, and
decompose \(\mathbb{R}^d\) orthogonally as
\begin{equation}
\label{app:eq:block-decomposition}
    \mathbb{R}^d
    =
    \mathcal U_{\mathrm{sh}}
    \oplus
    \mathcal U_1
    \oplus\cdots\oplus
    \mathcal U_s
    \oplus
    \mathcal U_{\mathrm{rem}},
\end{equation}
where
\begin{equation}
\label{app:eq:block-dimensions}
    \dim(\mathcal U_{\mathrm{sh}})=r-r_{\mathrm{cs}},
    \qquad
    \dim(\mathcal U_k)=r_{\mathrm{cs}},
    \qquad
    \dim(\mathcal U_{\mathrm{rem}})
    =(s-1)(r-r_{\mathrm{cs}}).
\end{equation}
Indeed,
\[
    (r-r_{\mathrm{cs}})
    +sr_{\mathrm{cs}}
    +(s-1)(r-r_{\mathrm{cs}})
    =sr=d.
\]
The subspace \(\mathcal U_{\mathrm{sh}}\) contains directions enhanced
for every class, whereas \(\mathcal U_k\) contains directions enhanced
only for class \(k\).  Let \(\bP_{\mathrm{sh}}\) and \(\bP_k\) be the
orthogonal projectors onto these two subspaces.

For every \(k\), choose a matrix
\(\bH_k\in\mathbb{R}^{d\times r}\) whose columns form an orthonormal basis of
\(\mathcal U_{\mathrm{sh}}\oplus\mathcal U_k\).  Thus
\[
    \bH_k^\top\bH_k=\bI_r,
    \qquad
    \bH_k\bH_k^\top
    =
    \bP_{\mathrm{sh}}+\bP_k.
\]
Define
\begin{equation}
\label{app:eq:block-delta}
    \bDelta_k
    :=
    c_{\mathrm{sig}}\bH_k\bH_k^\top+c_{\epsilon}\bI_d
    =
    c_{\mathrm{sig}}(\bP_{\mathrm{sh}}+\bP_k)
    +c_{\epsilon}\bI_d.
\end{equation}
This basis matrix is denoted by \(\bH_k\), because it defines the covariance construction and
is not the eigenvector matrix used in the selection proof.

A coordinate implementation of \eqref{app:eq:block-decomposition} may
take the first \(r-r_{\mathrm{cs}}\) standard basis vectors to span
\(\mathcal U_{\mathrm{sh}}\), the next \(sr_{\mathrm{cs}}\) coordinates
to form \(s\)
successive private blocks, and the remaining coordinates to span
\(\mathcal U_{\mathrm{rem}}\).  This choice is only a convenient basis:
applying a common orthogonal rotation to every subspace and every mean
vector leaves all spectral quantities below unchanged.

For every \(r_{\mathrm{cs}}\) and every \(k\),
\(\bP_{\mathrm{sh}}+\bP_k\) is a rank-\(r\) orthogonal projector.
Consequently, each \(\bDelta_k\) has the fixed spectrum
\begin{equation}
\label{app:eq:block-fixed-spectrum}
    c_{\mathrm{sig}}+c_{\epsilon}
    \quad\text{with multiplicity }r,
    \qquad
    c_{\epsilon}
    \quad\text{with multiplicity }d-r.
\end{equation}
Varying \(r_{\mathrm{cs}}\) therefore changes neither the rank of the spike, its
strength, the condition number, nor the spectrum of any individual
excess covariance.  It changes only the proportion of enhanced
directions that are class-specific.

\subsection{Spectrum before removing the whitened means}
\label{app:subsec:preprojection}

For target class \(k\), the aggregate excess covariance of the other
classes is
\begin{align}
\label{app:eq:block-Dk}
    \bD_k
    &=
    \sum_{\ell\in[s]\setminus\{k\}}\bDelta_\ell \nonumber\\
    &=
    (s-1)c_{\epsilon}\bI_d
    +
    c_{\mathrm{sig}}\left\{
        (s-1)\bP_{\mathrm{sh}}
        +
        \sum_{\ell\in[s]\setminus\{k\}}\bP_\ell
    \right\}.
\end{align}
Define the pre-projection whitened interference matrix
\begin{equation}
\label{app:eq:preprojection-matrix}
    \bT_k
    :=
    \bDelta_k^{-1/2}\bD_k\bDelta_k^{-1/2}.
\end{equation}
The notation \(\bT_k\) distinguishes this matrix from the selection
matrix \(\bA^{(i)}\).

All projectors in \eqref{app:eq:block-Dk} are mutually orthogonal, so
\(\bT_k\) acts as a scalar on each component of
\eqref{app:eq:block-decomposition}.  Its spectral decomposition is
summarized below.
\begin{center}
\begin{tabular}{lccc}
\toprule
subspace
& dimension
& eigenvalue of \(\bDelta_k\)
& eigenvalue of \(\bT_k\) \\
\midrule
shared \(\mathcal U_{\mathrm{sh}}\)
& \(r-r_{\mathrm{cs}}\)
& \(c_{\mathrm{sig}}+c_{\epsilon}\)
& \(s-1\) \\
target-specific \(\mathcal U_k\)
& \(r_{\mathrm{cs}}\)
& \(c_{\mathrm{sig}}+c_{\epsilon}\)
& \(\displaystyle
   \frac{(s-1)c_{\epsilon}}{c_{\mathrm{sig}}+c_{\epsilon}}\) \\
other class-specific
\(\bigoplus_{\ell\ne k}\mathcal U_\ell\)
& \((s-1)r_{\mathrm{cs}}\)
& \(c_{\epsilon}\)
& \(\displaystyle(s-1)+\frac{c_{\mathrm{sig}}}{c_{\epsilon}}\) \\
remainder \(\mathcal U_{\mathrm{rem}}\)
& \((s-1)(r-r_{\mathrm{cs}})\)
& \(c_{\epsilon}\)
& \(s-1\) \\
\bottomrule
\end{tabular}
\end{center}
For example, on a target-specific direction, none of the other classes
contributes a spike, so \(\bD_k\) has eigenvalue
\((s-1)c_{\epsilon}\), whereas \(\bDelta_k\) has eigenvalue
\(c_{\mathrm{sig}}+c_{\epsilon}\).  On a private direction of a
competing class, \(\bD_k\) contains one spike of size
\(c_{\mathrm{sig}}\), while \(\bDelta_k\) contains only the ridge
\(c_{\epsilon}\).  A shared direction
receives the same spike from every class and whitens to the neutral
level \(s-1\).

The three spectral levels satisfy
\[
    \frac{(s-1)c_{\epsilon}}
    {c_{\mathrm{sig}}+c_{\epsilon}}
    <
    s-1
    <
    (s-1)+\frac{c_{\mathrm{sig}}}{c_{\epsilon}}.
\]
The smallest \(r\) eigenvalues of \(\bT_k\) therefore consist of the
\(r_{\mathrm{cs}}\) target-specific eigenvalues and
\(r-r_{\mathrm{cs}}\) eigenvalues from the neutral level.  Their exact
average is
\begin{equation}
\label{app:eq:preprojection-average}
    \bar\gamma_{\mathrm{pre}}(r_{\mathrm{cs}})
    =
    \frac{r_{\mathrm{cs}}}{r}
    \frac{(s-1)c_{\epsilon}}
    {c_{\mathrm{sig}}+c_{\epsilon}}
    +
    \left(1-\frac{r_{\mathrm{cs}}}{r}\right)(s-1).
\end{equation}
Solving
\(\bar\gamma_{\mathrm{pre}}(r_{\mathrm{cs}})<1\) gives
\begin{equation}
\label{app:eq:class-specific-cutoff}
    \frac{r_{\mathrm{cs}}}{r}
    >
    \frac{
        (s-2)(c_{\mathrm{sig}}+c_{\epsilon})
    }{
        (s-1)c_{\mathrm{sig}}
    }.
\end{equation}
Thus low-rank structure by itself is not sufficient: a large enough
fraction of the \(r=d^\ast\) enhanced directions must be
class-specific.

\subsection{The fully shared endpoint}
\label{app:subsec:shared-endpoint}

When \(r_{\mathrm{cs}}=0\), all matrices \(\bH_k\) span the same
enhanced subspace and
all excess covariance matrices are identical.  Hence
\[
    \bD_k=(s-1)\bDelta_k,
    \qquad
    \bT_k=(s-1)\bI_d.
\]
Projection onto any orthogonal complement preserves the nonzero
eigenvalue \(s-1\), and therefore
\begin{equation}
\label{app:eq:shared-endpoint}
    \bar\gamma_k^\ast=s-1.
\end{equation}
The strict overlap condition fails for every \(s\ge2\), independently
of the mean vectors.

\subsection{Effect of removing the whitened means}
\label{app:subsec:mean-projection}

Let \(\bQ_k\) be an orthonormal basis for
\(\operatorname{col}(\bZ_k)\), and let \(\bQ_k^\perp\) be an
orthonormal basis for its orthogonal complement.  Put
\[
    q_k:=\operatorname{rank}(\bZ_k),
    \qquad
    d_k=d-q_k.
\]
Thus \(\bQ_k\) has \(q_k\) columns and \(\bQ_k^\perp\) has \(d_k\)
columns.  The nonzero eigenvalues of
\(\widetilde{\bD}_k\) are equivalently the eigenvalues of
\[
    (\bQ_k^\perp)^\top\bT_k\bQ_k^\perp.
\]
Write
\[
    \zeta_{k,1}\le\cdots\le\zeta_{k,d}
\]
for the eigenvalues of \(\bT_k\), and
\[
    \eta_{k,1}\le\cdots\le\eta_{k,d_k}
\]
for those of this compression.  Cauchy interlacing gives
\begin{equation}
\label{app:eq:projection-interlacing}
    \zeta_{k,\ell}
    \le
    \eta_{k,\ell}
    \le
    \zeta_{k,\ell+q_k}.
\end{equation}
In particular, projection cannot decrease the average of the smallest
\(r\) eigenvalues, and
\begin{equation}
\label{app:eq:projection-correction}
    0
    \le
    \bar\gamma_k^\ast
    -\bar\gamma_{\mathrm{pre}}(r_{\mathrm{cs}})
    \le
    \frac1r
    \sum_{\ell=1}^r
    \bigl(
        \zeta_{k,\ell+q_k}-\zeta_{k,\ell}
    \bigr).
\end{equation}
The endpoint \(r_{\mathrm{cs}}=0\) remains exact after projection
because all \(\zeta_{k,\ell}=s-1\).  For \(r_{\mathrm{cs}}>0\), the
finite-dimensional correction depends on the orientation of the
whitened-mean span relative to the three eigenspaces above.

\subsection{Numerical illustration}
\label{app:subsec:numerical-design}

We set
\[
    d=600,
    \qquad
    c_{\mathrm{sig}}=1,
    \qquad
    c_{\epsilon}=0.01,
    \qquad
    s\in\{3,5,10\},
\]
and consider
\[
    \frac{r_{\mathrm{cs}}}{r}
    \in
    \{0,0.25,0.50,0.60,0.70,0.75,0.80,0.85,0.90,0.95,1\}.
\]
For these constants, the right-hand side of
\eqref{app:eq:class-specific-cutoff} is \(0.5050\), \(0.7575\), and
approximately \(0.8978\) for \(s=3,5,10\), respectively.  When a grid
fraction does not make \(r_{\mathrm{cs}}\) an integer, the
implementation uses the nearest integer and records the resulting
value of \(r_{\mathrm{cs}}/r\).

For each \(s\), 30 isotropic Gaussian mean realizations are generated.
The same raw means are reused at every value of \(r_{\mathrm{cs}}\), so
comparisons along the path are paired.  The means serve only to quantify the projection correction
in \eqref{app:eq:projection-correction}.  Figure
\ref{app:fig:tail-overlap} plots the projected truncated-tail average
against \(r_{\mathrm{cs}}/r\), together with the exact pre-projection reference
\eqref{app:eq:preprojection-average}.  Table
\ref{app:tab:tail-overlap} compares the analytic cutoff with the
numerical transition and reports both endpoints.

This experiment illustrates the two endpoint regimes and the
transition between them; it does not assert that the spectral-overlap
condition in Theorem~\ref{prop2} holds generically.  Counts over the
30 mean realizations are finite-sample numerical summaries rather than
estimates of a universal probability.

\begin{figure}[t]
\centering
\includegraphics[width=\textwidth]{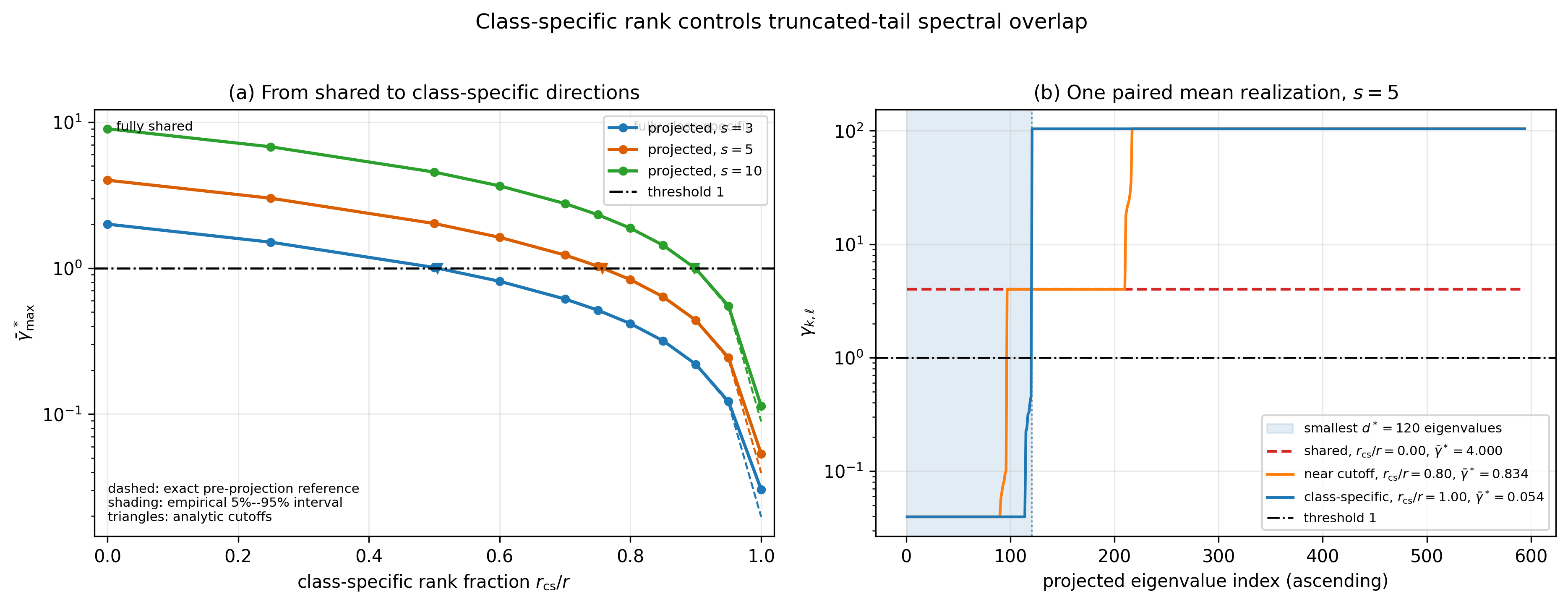}
\caption{Effect of the class-specific dimension on the truncated-tail
spectral-overlap condition.  Panel~(a) reports the median projected
\(\bar\gamma_{\max}^\ast\) over 30 paired mean realizations, with
empirical \(5\%\)--\(95\%\) bands, together with the exact
pre-projection reference and the analytic cutoff for each \(s\).
Panel~(b) shows the ordered projected spectra for one paired
realization at \(s=5\) under the fully shared
(\(r_{\mathrm{cs}}/r=0\)),
near-cutoff (\(r_{\mathrm{cs}}/r=0.8\)), and fully class-specific
(\(r_{\mathrm{cs}}/r=1\))
constructions.  The shaded region marks the smallest \(d^\ast=120\)
eigenvalues entering the truncated-tail average.}
\label{app:fig:tail-overlap}
\end{figure}

\begin{table}[t]
\centering
\small
\caption{Path from shared to class-specific enhanced directions at
\(d=600\), \(c_{\mathrm{sig}}=1\), and \(c_{\epsilon}=0.01\).
Endpoint entries report the
median and empirical \(5\%\)--\(95\%\) interval over 30 paired mean
realizations, with the number satisfying the strict condition on the
second line.
The numerical transition bracket ends at the first grid value from
which all larger grid values satisfy the condition.}
\label{app:tab:tail-overlap}
\begingroup
\setlength{\tabcolsep}{3.5pt}
\renewcommand{\arraystretch}{1.12}
\begin{tabular}{@{}ccccc@{}}
\toprule
\(s\)
& \shortstack{analytic cutoff\\for \(r_{\mathrm{cs}}/r\)}
& \shortstack{numerical\\bracket}
& \shortstack{shared\\(\(r_{\mathrm{cs}}=0\))}
& \shortstack{class-specific\\(\(r_{\mathrm{cs}}=r\))} \\
\midrule
3 & 0.505 & \((0.500,0.600]\)
& \shortstack{\(2.0000\ [2.0000,2.0000]\)\\\(0/30\)}
& \shortstack{\(0.0305\ [0.0301,0.0313]\)\\\(30/30\)} \\
5 & 0.758 & \((0.750,0.800]\)
& \shortstack{\(4.0000\ [4.0000,4.0000]\)\\\(0/30\)}
& \shortstack{\(0.0536\ [0.0530,0.0540]\)\\\(30/30\)} \\
10 & 0.898 & \((0.850,0.900]\)
& \shortstack{\(9.0000\ [9.0000,9.0000]\)\\\(0/30\)}
& \shortstack{\(0.1137\ [0.1130,0.1151]\)\\\(30/30\)} \\
\bottomrule
\end{tabular}
\endgroup
\end{table}

The numerical transition follows the analytic calculation closely.
For \(s=3\), the cutoff \(0.505\) lies between the grid points \(0.50\)
and \(0.60\): none of the 30 projected values satisfies the condition
at \(r_{\mathrm{cs}}/r=0.50\), whereas all 30 satisfy it at
\(r_{\mathrm{cs}}/r=0.60\) and every larger grid point.  Similarly,
the cutoff \(0.758\) for \(s=5\) lies
between \(0.75\) and \(0.80\).  The corresponding projected medians are
\(1.0312\) and \(0.8336\), so the two grid points lie on opposite sides
of the threshold.  For \(s=10\), the analytic cutoff \(0.8978\) lies
just below \(0.90\).  At that grid point the projected median is
\(0.9913\), with empirical \(5\%\)--\(95\%\) interval
\([0.9906,0.9921]\), and all 30 realizations remain below one.  At the
preceding grid point \(r_{\mathrm{cs}}/r=0.85\), the projected median
is \(1.4339\), and all 30 realizations violate the condition.

The projection correction has the sign predicted by
\eqref{app:eq:projection-correction}.  At the fully class-specific
endpoint, the pre-projection values from
\eqref{app:eq:preprojection-average} are \(0.0198\), \(0.0396\), and
\(0.0891\) for \(s=3,5,10\).  After removing the whitened-mean span,
the corresponding medians increase to \(0.0305\), \(0.0536\), and
\(0.1137\).  The correction is visible but does not alter the
qualitative conclusion at this endpoint.

At the fully shared endpoint, equation \eqref{app:eq:shared-endpoint}
applies without approximation after projection: the reported values
are exactly \(2\), \(4\), and \(9\) for \(s=3,5,10\).  Taken together,
the two endpoints and the intermediate path show that low rank alone
does not determine whether the spectral-overlap condition holds.  In
this construction, the decisive quantity is the fraction of enhanced
directions specific to the target class, and the required fraction
increases with the number of classes.
\section{Additional simulation results}\label{supp_sec:add_simu}

\subsection{Additional results under regression setting}
\label{supp_sec:add_simu_reg}

In this subsection, we present additional numerical results under the Gaussian regression setting. We report three sets of experiments below: (i) $n=18000$ and varying the matrix dimension; (ii) $n=9000$ and varying the correlation level, and (iii) $n=9000$ and varying the level of the distribution gap. All results are reported over 100 independent replications.

\textbf{\textsc{Varying sample size and dimension.}} Table~\ref{supp_tab:simu-reg} reports prediction and ROI localization results for $n=18000$ over different matrix dimensions under the single-signal setting $s=1$ with $\mu_1=\sigma_1^2=2$. As compared with the results in Table~\ref{tab:simu-reg} in the main text, a larger sample size leads to a clear reduction in prediction error and an improvement in ROI localization, as increasing $n$ will improve estimation accuracy of the attention and value parameters.  These results are consistent with our theoretical findings, and the remaining conclusions are similar to those in the main text.      

\begin{sidewaystable}
    \centering
    \caption{Performance of different methods across varying dimensions under the Gaussian regression setting with $n=18000$. The best and second-best results are marked by \textbf{bold} and \underline{underline}, respectively.}
    \label{supp_tab:simu-reg}
    \begin{tabular}{c|c|c|c|c|c}
    \hline
    Metric                          & Method           & $(p,d)=(100,70)$          & $(p,d)=(100,90)$          & $(p,d)=(120,70)$          & $(p,d)=(120,90)$          \\ \hline
    \multirow{7}{*}{RMSE}   & GroupLasso       & 15.085 (0.147)            & 15.804 (0.185)            & 15.123 (0.162)            & 15.790 (0.136)            \\
                                  & MDSP             & 9.750 (0.092)             & 9.277 (0.105)             & {\ul 8.957 (0.091)}       & 8.809 (0.075)             \\
                                  & Two-Step         & 15.202 (0.206)            & 15.872 (0.440)            & 15.323 (0.185)            & 15.926 (0.237)            \\
                                  & 1-head StdAtt    & 14.734 (0.140)            & 15.117 (0.167)            & 15.136 (0.157)            & 15.547 (0.165)            \\
                                  & $R$-head StdAtt  & 14.964 (0.155)            & 15.457 (0.178)            & 15.554 (0.163)            & 16.208 (0.184)            \\
                                  & 1-head DiagAtt   & \textbf{6.520 (0.185)}    & \textbf{5.101 (0.208)}    & \textbf{6.902 (0.168)}    & \textbf{5.689 (0.204)}    \\
                                  & $R$-head DiagAtt & {\ul 6.871 (0.187)}       & {\ul 5.203 (0.221)}       & {\ul 7.285 (0.179)}       & {\ul 5.844 (0.196)}       \\ \hline
    \multirow{7}{*}{CRLR} & GroupLasso       & 10.03\% (0.46\%)          & 9.98\% (0.47\%)           & 8.32\% (0.44\%)           & 8.38\% (0.42\%)           \\
                                  & MDSP             & 36.64\% (0.66\%)          & 40.11\% (0.77\%)          & 32.96\% (0.84\%)          & 33.17\% (0.61\%)          \\
                                  & Two-Step         & 24.47\% (13.06\%)         & 25.06\% (22.14\%)         & 14.34\% (9.46\%)          & 22.41\% (16.74\%)         \\
                                  & 1-head StdAtt    & 71.72\% (0.81\%)          & 80.98\% (0.70\%)          & 67.95\% (0.90\%)          & 76.37\% (0.77\%)          \\
                                  & $R$-head StdAtt  & 73.25\% (0.77\%)          & 81.74\% (0.73\%)          & 69.99\% (0.95\%)          & 78.78\% (0.75\%)          \\
                                  & 1-head DiagAtt   & {\ul 83.30\% (0.59\%)}    & {\ul 90.95\% (0.46\%)}    & {\ul 81.60\% (0.61\%)}    & {\ul 88.57\% (0.51\%)}    \\
                                  & $R$-head DiagAtt & \textbf{83.56\% (0.61\%)} & \textbf{91.28\% (0.47\%)} & \textbf{81.92\% (0.64\%)} & \textbf{88.97\% (0.49\%)} \\ \hline
    \end{tabular}
\end{sidewaystable}

\textbf{\textsc{Varying correlation level.}}
The model formulation in \eqref{eq:dist_assumption} separates active and inactive rows through row-wise distributions, but it does not require independence among rows within the same sample. We therefore examine whether DiagAtt remains effective when rows are correlated. Specifically, we generate rows of each $\bX^{(i)}$ so that, for $j\neq k$,
\(
\mathrm{Cov}\bigl(\bm{X}_{j,\cdot}^{(i)},\bm{X}_{k,\cdot}^{(i)}\bigr)
=
\rho
\bigl[\mathrm{Var}\bigl(\bm{X}_{j,\cdot}^{(i)}\bigr)\bigr]^{1/2}
\bigl[\mathrm{Var}\bigl(\bm{X}_{k,\cdot}^{(i)}\bigr)\bigr]^{1/2},
\)
where $\rho\in(0,1)$ controls the overall within-sample correlation strength. We vary $\rho \in \{0.3,0.6\}$ with $(n,p,d)=(9000,100,70)$ and report the results in Table~\ref{supp_tab:simu-correlation}.

Table~\ref{supp_tab:simu-correlation} shows that DiagAtt remains the best-performing method under correlated rows in terms of both prediction accuracy and ROI identification. Interestingly, as the correlation increases, the prediction error decreases for all the methods. Although this result may be surprising, similar observations have been reported in the high-dimensional linear regression literature
\citep{arxivsupp-reid2016study,arxivsupp-dalalyan2017prediction}.

\begin{table}[!htbp]
    \centering
    \caption{Performance of different methods across varying correlation levels under the Gaussian regression setting. The best and second-best results are marked by \textbf{bold} and \underline{underline}, respectively.}
    \label{supp_tab:simu-correlation}
    \resizebox{\linewidth}{!}{
        \begin{tabular}{c|cc|cc}
            \hline
            \multirow{2}{*}{Method} & \multicolumn{2}{c|}{$\rho=0.3$}                                         & \multicolumn{2}{c}{$\rho=0.6$}                                          \\ \cline{2-5} 
                                    & \multicolumn{1}{c|}{RMSE}                   & CRLR                      & \multicolumn{1}{c|}{RMSE}                   & CRLR                      \\ \hline
            GroupLasso              & \multicolumn{1}{c|}{12.710 (0.199)}         & 9.85\% (0.60\%)           & \multicolumn{1}{c|}{9.659 (0.153)}          & 9.83\% (0.60\%)           \\
            MDSP                    & \multicolumn{1}{c|}{8.702 (0.111)}          & 31.06\% (1.15\%)          & \multicolumn{1}{c|}{8.697 (0.124)}          & 31.29\% (1.07\%)          \\
            Two-Step                & \multicolumn{1}{c|}{13.367 (0.228)}         & 16.98\% (9.20\%)          & \multicolumn{1}{c|}{10.199 (0.175)}         & 13.96\% (8.75\%)          \\
            1-head StdAtt          & \multicolumn{1}{c|}{12.430 (0.195)}         & 55.73\% (1.73\%)          & \multicolumn{1}{c|}{8.942 (0.151)}          & 81.93\% (1.13\%)          \\
            $R$-head StdAtt          & \multicolumn{1}{c|}{12.375 (0.205)}         & 54.68\% (1.60\%)          & \multicolumn{1}{c|}{8.352 (0.149)}          & 82.68\% (0.96\%)          \\
            1-head DiagAtt         & \multicolumn{1}{c|}{\textbf{5.644 (0.218)}} & {\ul 82.63\% (0.95\%)}    & \multicolumn{1}{c|}{\textbf{3.086 (0.134)}} & {\ul 91.81\% (0.63\%)}    \\
            $R$-head DiagAtt         & \multicolumn{1}{c|}{{\ul 5.862 (0.228)}}    & \textbf{83.75\% (0.87\%)} & \multicolumn{1}{c|}{{\ul 3.151 (0.160)}}    & \textbf{92.27\% (0.64\%)} \\ \hline
        \end{tabular}
    }
\end{table}

\textbf{\textsc{Varying level of distribution gap.}} 
The theoretical analysis provides the intuition that a greater distribution discrepancy between active and background rows is expected to improve both ROI localization and prediction accuracy.  
To verify this intuition, we continue to consider the single-signal setting with $s=1$ and vary $(\mu_1,\sigma_1^2)\in\{1,3\}\times\{2, 2.5\}$ with $(n,p,d)=(9000,100,70)$. Table~\ref{supp_tab:distribution_gap} summarizes the performance of different methods across varying levels of distributional gap under the Gaussian regression setting.  

\begin{sidewaystable}
    \centering
    \caption{Performance of different methods across varying levels of distribution gap under the Gaussian regression setting. The best and second-best results are marked by \textbf{bold} and \underline{underline}, respectively.}
    \label{supp_tab:distribution_gap}
    \begin{tabular}{c|c|c|c|c|c}
    \hline
    Metric                          & Method           & $(\mu_1,\sigma_1^2)=(1,2)$ & $(\mu_1,\sigma_1^2)=(1,2.5)$ & $(\mu_1,\sigma_1^2)=(3,2)$ & $(\mu_1,\sigma_1^2)=(3,2.5)$ \\ \hline
    \multirow{7}{*}{RMSE}   & GroupLasso       & 15.079 (0.147)             & 16.834 (0.164)               & 15.097 (0.148)             & 16.850 (0.164)               \\
                                  & MDSP             & 9.573 (0.091)              & 10.275 (0.096)               & 10.031 (0.093)             & 10.672 (0.100)               \\
                                  & Two-Step         & 15.157 (0.217)             & 16.630 (0.423)               & 15.274 (0.190)             & 16.798 (0.378)               \\
                                  & 1-head StdAtt    & 14.963 (0.215)             & 15.827 (0.224)               & 14.687 (0.209)             & 15.216 (0.222)               \\
                                  & $R$-head StdAtt  & 15.508 (0.232)             & 15.750 (0.256)               & 14.631 (0.223)             & 14.487 (0.233)               \\
                                  & 1-head DiagAtt   & \textbf{7.753 (0.266)}     & \textbf{4.450 (0.262)}       & \textbf{5.883 (0.272)}     & \textbf{3.695 (0.307)}       \\
                                  & $R$-head DiagAtt & {\ul 8.366 (0.271)}        & {\ul 4.544 (0.283)}          & {\ul 6.102 (0.282)}        & {\ul 3.749 (0.302)}          \\ \hline
    \multirow{7}{*}{CRLR} & GroupLasso       & 10.03\% (0.47\%)           & 10.03\% (0.44\%)             & 10.03\% (0.45\%)           & 10.02\% (0.45\%)             \\
                                  & MDSP             & 35.57\% (0.68\%)           & 51.01\% (0.75\%)             & 38.50\% (0.80\%)           & 53.48\% (0.66\%)             \\
                                  & Two-Step         & 26.44\% (13.51\%)          & 33.56\% (13.16\%)            & 21.08\% (12.07\%)          & 28.59\% (12.96\%)            \\
                                  & 1-head StdAtt    & 66.65\% (1.06\%)           & 82.04\% (0.90\%)             & 75.17\% (1.10\%)           & 87.46\% (0.74\%)             \\
                                  & $R$-head StdAtt  & 68.11\% (1.13\%)           & 84.59\% (0.97\%)             & 76.91\% (0.99\%)           & 89.78\% (0.83\%)             \\
                                  & 1-head DiagAtt   & {\ul 76.51\% (1.05\%)}     & {\ul 93.15\% (0.58\%)}       & {\ul 86.60\% (0.82\%)}     & {\ul 95.43\% (0.50\%)}       \\
                                  & $R$-head DiagAtt & \textbf{76.88\% (0.93\%)}  & \textbf{93.43\% (0.59\%)}    & \textbf{87.05\% (0.74\%)}  & \textbf{95.64\% (0.50\%)}    \\ \hline
    \end{tabular}
\end{sidewaystable}

As expected, an increase in either $\mu_1$ or $\sigma_1^2$ enlarges the distributional discrepancy between active and background rows, thereby leading to clear improvements in both prediction accuracy and ROI localization for the single-head and multi-head DiagAtt. In contrast, the competitors exhibit negligible changes in both RMSE and CRLR. These observations further demonstrate the natural suitability of the proposed diagonalized attention mechanism for heterogeneous matrix-valued data under the separability assumption.

\subsection{Additional results under classification setting}\label{supp_sec:add_simu_cla}
This subsection provides supplementary numerical results under the Gaussian-mixture classification setting. Figure~\ref{fig:simu-mixture-supp} reports classification and ROI localization results of different methods for $n\in\{9000,18000\}$ and $(p,d)\in\{(100,90),(120,70),(120,90)\}$. Together with Figure~\ref{fig:simu-mixture-main} in the main text, the present results further confirm that the patterns associated with $(n,p,d)$ for the proposed diagonalized attention mechanism continue to hold under the Gaussian-mixture classification setting, extending beyond linear data-generating models.  

\begin{figure}[!htbp]
	\centering
	\includegraphics[width=1\textwidth]{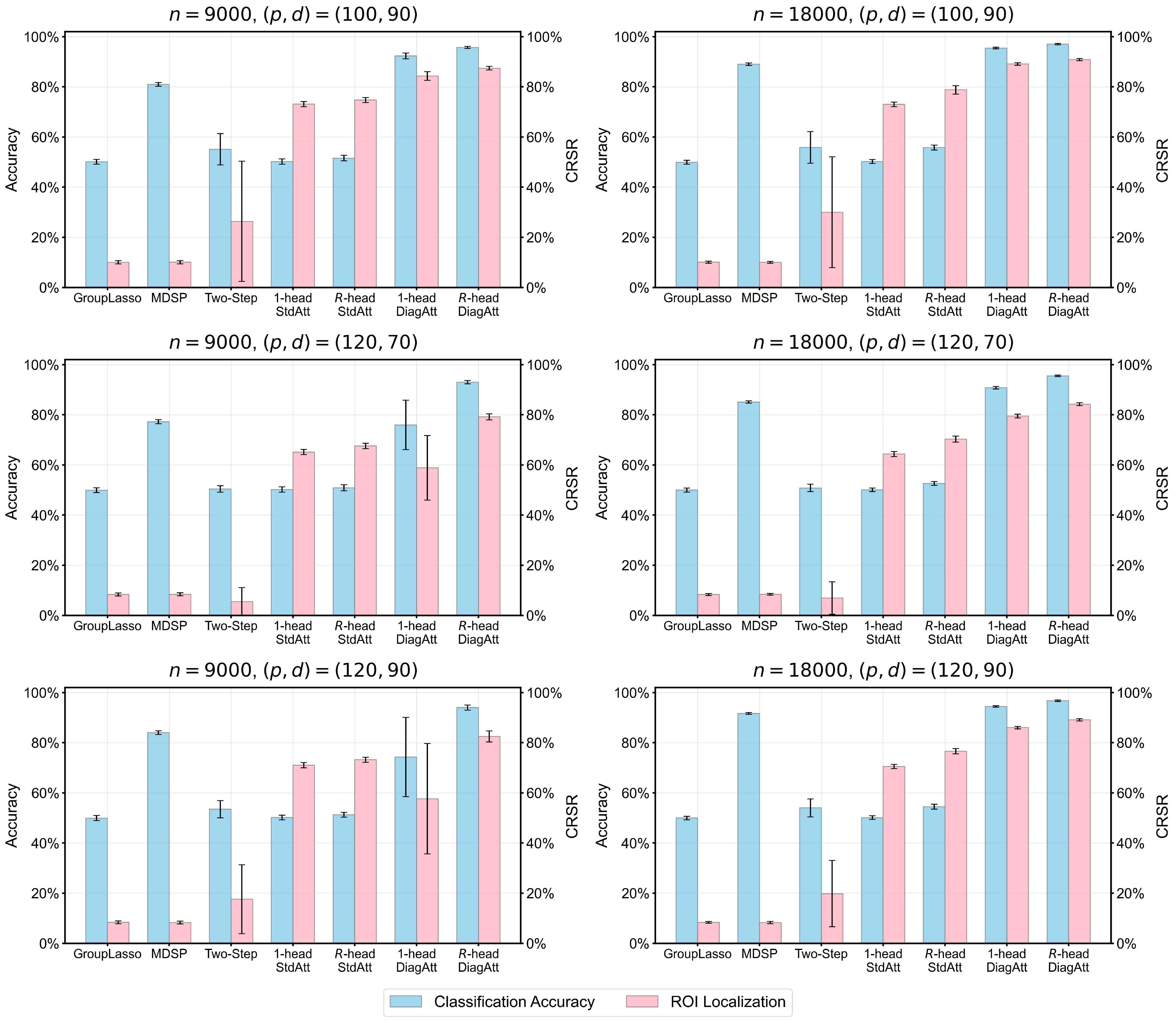}
	\caption{Performance of different methods across varying dimensions under the Gaussian-mixture classification setting with sample size $n=9000$. Each subplot displays classification accuracy (left axis, measured by Accuracy) and ROI localization (right axis, measured by CRLR), with bar heights and error bars representing the average and standard deviation of each metric, respectively.}
	\label{fig:simu-mixture-supp}
\end{figure}


\section{Additional empirical analysis}\label{supp_sec:add_emp}
In practice, ground-truth sentiment keywords are usually unavailable, which precludes a direct comparison of ROI localization across different methods for sentiment analysis. To further examine the ROI identification performance of the proposed diagonalized attention mechanism, we consider an additional sentiment analysis example from \citet{arxivsupp-supp_marion2025attention}, in which the sentiment-bearing keywords are known a priori.  

This dataset comprises a training set of 15,552 comments and a test set of 7,920 comments, both of which are balanced between positive and negative classes with a positive class proportion of 0.5. In particular, the sentiment label of each comment is exclusively determined by a pre-specified sentiment keyword. The test set is constructed using either sentiment keywords or sentence templates that differ from those in the training set, which ensures no overlap between the two sets and includes a substantial number of out-of-distribution (OOD) samples. Please refer to \citet{arxivsupp-supp_marion2025attention} for more details.

We adopt the same pretrained language model as in Section~\ref{section: real} of the main text to tokenize each comment into a sequence of tokens and represent each token as a feature vector of dimension $d=1024$. For each comment, instead of retaining the original token order, the corresponding $p^{(i)}$ feature vectors are randomly shuffled to form the embedding matrix $\bX^{(i)}\in\R^{p^{(i)}\times d}$, which further poses a greater challenge to all the methods. Additionally, a five-fold cross-validation technique is performed on the training set for hyperparameter tuning of all the methods. In this example, the competitors select up to \(\widetilde{R}=3\) tokens per comment, while DiagAtt identifies at most \(R=3\) tokens per comment.

Table~\ref{supp_tab:performance-synthetic} reports classification and ROI localization results of different methods. As discussed in Section~\ref{section: real}, DiagAtt again demonstrates superior performance in terms of both classification accuracy and ROI localization. In particular, the single-head DiagAtt performs better than its multi-head version, consistent with the fact that each comment contains only one sentiment-bearing keyword in this example. The standard attention mechanism also achieves comparable performance with a sufficiently large sample size, which supports the ability of attention-type scoring to accommodate individualized patterns in sentiment analysis. On the other hand, the inferior performance of MDSP further confirms that its underlying subgroup-based assumption is incompatible with heterogeneity inherent in natural languages. Two-Step achieves unremarkable classification accuracy and CRLR, while GroupLasso exhibits poor CRLR due to its shared coefficient estimates across all samples.   

\begin{table}[!htbp]
    \centering
    \caption{Performance of different methods on the sentiment analysis example from \citet{arxivsupp-supp_marion2025attention}. The best and second-best results are marked by \textbf{bold} and \underline{underline}, respectively.}
    \label{supp_tab:performance-synthetic}
    \begin{tabular}{c|c|c}
        \hline
        Method           & Accuracy     
        & CRLR             \\ \hline
        GroupLasso       & 85.72\%          & 41.34\%          \\
        MDSP             & 63.31\%          & 39.53\%          \\
        Two-Step         & 93.51\%          & 72.56\%          \\
        1-head StdAtt    & 98.12\%          & 86.21\%          \\
        $R$-head StdAtt  & 97.80\%          & 81.74\%          \\
        1-head DiagAtt   & \textbf{99.56\%} & \textbf{94.60\%} \\
        $R$-head DiagAtt & {\ul 99.33\%}    & {\ul 87.40\%}    \\ \hline
    \end{tabular}
\end{table}


\section{Additional implementation details}\label{supp_sec:detail}
This section provides further implementation details for the proposed diagonalized attention mechanism (DiagAtt) and the four competitors: group-lasso regression (GroupLasso), the multidirectional separation penalty (MDSP), the two-step clustering-based procedure (Two-Step), and the standard attention mechanism (StdAtt), with a focus on their model architectures, hyperparameter tuning procedures, and ROI localization strategies.  

\subsection{Model architectures and reproducibility}\label{supp_subsec:arch}
We begin with a detailed description of the model architecture of the proposed diagonalized attention mechanism used throughout all experiments.   

\textbf{\textsc{Diagonalized Attention Mechanism.}} We consider two implementations of DiagAtt, namely, the 1-head and $R$-head versions introduced in Section~\ref{sec:method_diag_attn}. Here, the number of attention heads $R$ is treated as a tunable hyperparameter, and its tuning procedure is deferred to Section~\ref{supp_subsec:tuning}. Additionally, we introduce an explicit temperature hyperparameter $\tau$ shared across all attention heads, yielding $\bkappa_r^{(i)}=\bkappa(\bX^{(i)}; \bW_r)=\softmax(\frac{1}{\tau}\cdot\bpsi(\bX^{(i)}; \bW_r))$ for $r\in[R]$. Throughout all numerical experiments, $\tau$ is fixed at $100$ for regression and $1000$ for classification.

Before describing the four competitors, we first introduce zero padding, a preprocessing technique used by all of them. Let $p=\max\{p^{(1)},\cdots,p^{(n)}\}$ denote the maximum number of rows among all covariate matrices $\bX^{(i)}\in\R^{p^{(i)}\times d}$. To ensure a shared row dimension, each $\bX^{(i)}$ is augmented with zero rows to obtain $\widetilde{\bX}^{(i)}\in\mathbb{R}^{p\times d}$. Clearly, such preprocessing inevitably introduces redundancy and additional computational cost, but is unavoidable for the competitors, which require inputs of a common matrix dimension. With this preprocessing in place, the model architectures of the competitors are presented below:

\textbf{\textsc{Group-Lasso Regression.}} The GroupLasso method adopts a generalized linear regression model with link function $g(\cdot)$, given by $g(\E(y^{(i)}))=\langle \widetilde{\bX}^{(i)}, \bC \rangle$. Here, $\bC\in\R^{p\times d}$ is an unknown coefficient matrix shared across the population. To encourage row-wise sparsity in the coefficient $\bC$, GroupLasso imposes a row-wise group-lasso penalty: $\calP_{\lambda}(\bC)=\lambda\sum_{j=1}^{p}\|\bC_{j,\cdot}\|_2$, which ignores heterogeneity in signal locations.

\textbf{\textsc{Multidirectional Separation Penalty.}} In contrast to GroupLasso, the MDSP method considers an individualized linear regression framework with link function $g(\cdot)$, given by $g(\E(y^{(i)}))=\langle\widetilde{\bX}^{(i)},\bC^{(i)}\rangle$, where $\bC^{(i)}\in\R^{p\times d}$ are sample-specific coefficients with shared matrix dimensions. To facilitate ROI localization, we introduce a generalized row-wise multidirectional separation penalty with two latent subgroups:  
\begin{equation}
	\label{supp_eq:mdsp_penalty}
	\calP_{\lambda}(\bC^{(1)},\cdots,\bC^{(n)},\bgamma_1,\cdots,\bgamma_p)=\lambda\sum_{i=1}^n\sum_{j=1}^p \min\left\{\|\bC_{j,\cdot}^{(i)}-\bgamma_j\|_1, \|\bC_{j,\cdot}^{(i)}\|_1\right\}.
\end{equation}
Here, $\{\bgamma_j\}_{j=1}^p$ are unknown but fixed parameter vectors. Although this method can exploit similarities among the sample-specific coefficients through the separation penalty, it imposes no explicit structures linking the sample-specific coefficients to the covariates. As a result, its individualized inference and prediction for a new sample require access to labeled data even at the testing stage, regardless of whether $\{\bgamma_j\}_{j=1}^p$ have been estimated.

\textbf{\textsc{Two-Step Clustering-Based Procedure.}} Given the data generating processes described in Section~\ref{section: simulation}, a seemingly straightforward approach is to adopt a two-step clustering-based procedure. Specifically, Two-Step treats all $\sum_{i=1}^n p^{(i)}$ rows across $n$ samples as independent observations and fits a two-component Gaussian mixture models (\citealp[GMM]{arxivsupp-mclachlan2000finite}). For each sample, it then identifies the target mixture component containing the majority of active rows and selects the top $\widetilde{R}$ representative rows as an estimate of $\calS^{(i)}$, with more details deferred to Section~\ref{supp_subsec:localization}. The resulting $\widetilde{R}$ rows, corresponding to the estimated active set $\widehat{\calS}^{(i)}$, are further concatenated into a feature vector of dimension $\widetilde{R} d$, which are subsequently used for downstream regression and classification tasks to mitigate noise. However, this naive two-step procedure faces two major limitations. First, it struggles with extreme class imbalance. Consider a simple setting where $s=1$ (i.e., there is only one active row per sample). Under the separability assumption~\eqref{eq:dist_assumption}, the total number of active rows across all samples is $n$, whereas the number of background rows is $-n+\sum_{i=1}^n p^{(i)}$. This creates a severe imbalance when $p^{(i)}$'s are large, which becomes even more problematic for clustering when there are multiple important rows. Second, the $p^{(i)}$ rows within a sample are often highly correlated—particularly in language or image data—whereas GMM assumes independence between observations. Violating this assumption significantly degrades GMM's performance. As detailed in Section~\ref{section: simulation}, our numerical studies empirically verify that this naive two-step method is ineffective.

\textbf{\textsc{Standard Attention Mechanism.}} To facilitate a direct comparison between the proposed diagonalized attention mechanism and the standard attention mechanism, we implement both 1-head and $R$-head versions of StdAtt, following the formulations in models~\eqref{single1} and \eqref{multi1} in Section~\ref{sec:method_attn}. In fact, StdAtt can be related to an individualized linear regression model, given by
\begin{equation}
    \label{supp_eq:stdatt}
    \begin{aligned}
        \mathcal{O}(\bX^{(i)};\{\bW_r,\bV_r\}_{r=1}^R)&=\sum_{r=1}^R\left\langle\softmax\left(\frac{\bX^{(i)}\bW_r(\bX^{(i)})^\top}{\sqrt{d}}\right)\bX^{(i)},\bV_r\right\rangle\\
        & = \sum_{r=1}^R\left\langle\bX^{(i)},\left\{\softmax\left(\frac{\bX^{(i)}\bW_r(\bX^{(i)})^\top}{\sqrt{d}}\right)\right\}^\top\bV_r\right\rangle\\
        & = \sum_{r=1}^R\left\langle\bX^{(i)},\bC_r^{(i)}\right\rangle = \left\langle\bX^{(i)},\sum_{r=1}^R\bC_r^{(i)}\right\rangle = \left\langle\bX^{(i)},\bC^{(i)}\right\rangle 
    \end{aligned}
\end{equation}
Here, $\bC^{(i)}=\sum_{r=1}^R \bC_r^{(i)}\coloneq\sum_{r=1}^R \left(\softmax_{row}\left(\frac{1}{\sqrt{d}}\widetilde{\bX}^{(i)}\bW_r(\widetilde{\bX}^{(i)})^{\top}\right)\right)^{\top}\bV_r$ are sample-specific coefficients, $\{\bW_r,\bV_r\}_{r=1}^R$ are unknown parameters, and $\softmax(\cdot)$ is applied row-wise. For fairness, we use the same number of attention heads for StdAtt and DiagAtt.  

All aforementioned models are implemented in \textit{Python} and run on a Linux-based computing cluster equipped with an Intel Xeon Platinum 8488C CPU and an NVIDIA RTX A6000 GPU. Specifically, GroupLasso is solved using the Python package \texttt{skglm}, while MDSP is optimized via a custom implementation of the proximal gradient algorithm. The Gaussian mixture model and the ridge regression in Two-Step are fitted using the Python package \texttt{sklearn}. Finally, DiagAtt and StdAtt are implemented in \textit{PyTorch} and trained using the Adam optimizer with its default settings.  

\subsection{Hyperparameter tuning procedures}\label{supp_subsec:tuning}
We tune the following hyperparameters in all numerical experiments: (1) the regularization parameter $\lambda$ in GroupLasso, MDSP, and the ridge regression used in Two-Step; (2) the number of attention heads $R$ in DiagAtt; and (3) the index of the Gaussian mixture component designated as the active component in Two-Step. Unless otherwise specified, the regularization parameter $\lambda$ and the number of attention heads $R$ are selected from $\{0.01,0.1,1\}$ and $\{1,2,3\}$, respectively.  

In the simulation study, we tune hyperparameters by randomly holding out $n/8$ training samples as a validation set. For each method, the hyperparameters are selected to minimize the validation loss, measured by the mean squared error for regression and the mean cross-entropy loss for classification. After hyperparameter selection, the training and validation samples are combined to refit the final model, which is evaluated on the test set.   

\subsection{ROI localization strategies}\label{supp_subsec:localization}
This subsection details ROI localization strategies for the proposed diagonalized attention mechanism and the four competitors described in Section~\ref{supp_subsec:arch}.
Unless otherwise noted, for all the competitors, we select $\widetilde{R}=10$ important rows per sample to serve as an estimate of $\calS^{(i)}$. 
By comparison, the number of attention heads in DiagAtt is restricted to $R\in\{1,2,3\}$, so that the number of its estimated active rows never exceeds that of the competitors. 
Intuitively, this imbalance in cardinality gives the competitors more opportunities to cover the correct active rows, which systematically favors them under the CRLR metric.  
The corresponding localization procedures are summarized below.    

\textbf{\textsc{Diagonalized Attention Mechanism.}} As discussed in the main text, the weight vector $\bkappa_r^{(i)}=\bkappa(\bX^{(i)};\bW_r)\in\R^{p^{(i)}}$ in the $r$-th attention head of DiagAtt is designed to locate the $r$-th signal row for sample $i$. As a result, the active rows $\calS^{(i)}$ for sample $i$ can be estimated as a collection of indices corresponding to the largest entry in each of the $R$ columns of the attention matrix, namely,
\begin{equation}
    \label{supp_eq:localization_diagatt}
    \hat{\calS}^{(i)} = \left\{(\hat{j}_1,\ldots, \hat{j}_R): \ \hat{j}_r=\argmax_{j\in\{1,\ldots,p^{(i)}\}}\hat{\bK}_{j,r}^{(i)},\quad r=1,\ldots,R,\quad  \hat{\bK}^{(i)}=\bK\left(\bX^{(i)};\{\hat{\bW}_r\}_{r=1}^R\right)\right\}.
\end{equation}
Here, $\{\hat{\bW}_r\}_{r=1}^R$ is obtained by solving the optimization problem~\eqref{obj} using a standard gradient-based algorithm.  

\textbf{\textsc{Group-Lasso Regression.}} Since GroupLasso imposes a group-lasso penalty that encourages row-wise sparsity, we recover the active rows $\calS^{(i)}$ by selecting the indices of the $\widetilde{R}$ rows with the largest row-wise $\ell_2$ norms of the estimated coefficient matrix $\widehat{\bC}$, namely,
\begin{equation}
    \label{supp_eq:localization_grouplasso}
    \hat{\calS}^{(i)}=\left\{(\hat{j}_1,\ldots,\hat{j}_{\widetilde{R}}):\ (\hat{j}_1,\ldots,\hat{j}_{\widetilde{R}})=\argmax_{(j_1,\ldots,j_{\widetilde{R}})\subset [p]}\sum_{r=1}^{\tilde{R}}\|\widehat{\bC}_{j_r,\cdot}\|_2\right\}.
\end{equation}
The resulting estimated active set $\hat{\calS}^{(i)}$ is identical across all samples, as $\widehat{\bC}$ is shared across the population.

\textbf{\textsc{Multidirectional Separation Penalty.}} The MDSP method uses an ROI localization strategy similar to \eqref{supp_eq:localization_grouplasso}, except that the population-level coefficient $\widehat{\bC}$ is replaced by the sample-specific coefficient $\widehat{\bC}^{(i)}$ for sample $i$. Specifically, the active rows for sample $i$ can be identified by
\begin{equation}
    \label{supp_eq:localization_mdsp}
    \hat{\calS}^{(i)}=\left\{(\hat{j}_1,\ldots,\hat{j}_{\widetilde{R}}):\ (\hat{j}_1,\ldots,\hat{j}_{\widetilde{R}})=\argmax_{(j_1,\ldots,j_{\widetilde{R}})\subset [p]}\sum_{r=1}^{\tilde{R}}\|\widehat{\bC}_{j_r,\cdot}^{(i)}\|_2\right\}.
\end{equation}
This individualized formulation allows the estimated active sets to vary across samples. 

\textbf{\textsc{Two-Step Clustering-Based Procedure.}} In Two-Step, once the active Gaussian mixture component has been determined, the fitted Gaussian mixture model is used to compute the posterior probabilities of membership in this component for the $p^{(i)}$ rows of $\bX^{(i)}$, denoted by $\calP^{(i)}=\{P_j^{(i)}\}_{j=1}^{p^{(i)}}$. We then estimate $\calS^{(i)}$ by selecting the indices corresponding to the $\widetilde{R}$ largest posterior probabilities in $\mathcal{P}^{(i)}$, namely,
\begin{equation}
    \label{supp_eq:localization_twostep}
    \hat{\calS}^{(i)}=\left\{(\hat{j}_1,\ldots,\hat{j}_{\widetilde{R}}):\ (\hat{j}_1,\ldots,\hat{j}_{\widetilde{R}})=\argmax_{(j_1,\ldots,j_{\widetilde{R}})\subset [p]}\sum_{r=1}^{\widetilde{R}} P_{j_r}^{(i)}\right\}.
\end{equation}

\textbf{\textsc{Standard Attention Mechanism.}} As shown in the formulation~\eqref{supp_eq:stdatt}, each summation term corresponds to a standard attention head, which can be rewritten as an individualized linear regression model. From the perspectives of \citet{arxivsupp-supp_vaswani2017attention} and \citet{arxivsupp-supp_yang2024attention}, $\bC_r^{(i)}=\left(\softmax_{row}\left(\frac{1}{\sqrt{d}}\widetilde{\bX}^{(i)}\bW_r(\widetilde{\bX}^{(i)})^{\top}\right)\right)^{\top}\bV_r$ can be interpreted as a sample-specific coefficient that captures the $r$-th heterogeneous regression effect for sample $i$. Motivated by this interpretation, we adopt a hybrid ROI localization strategy that combines the procedures used in MDSP and DiagAtt. 
Specifically, for sample $i$, we select the indices of the $\lfloor \widetilde{R}/R\rfloor$ rows with the largest row-wise $\ell_2$ norms of the $r$-th estimated regression effect $\widehat{\bC}_r^{(i)}$ to form $\widehat{\calS}_r^{(i)}$ as in \eqref{supp_eq:localization_mdsp}, for $r=1,\cdots,R$.  
The estimated active set is then defined as the union of these index sets across all $R$ attention heads, namely, $\widehat{\calS}^{(i)}\coloneq\bigcup_{r=1}^R \widehat{\calS}_r^{(i)}$. Note that this strategy yields an estimated active set with cardinality at most $\widetilde{R}$. As a consequence, the CRLR metric may inherently favor StdAtt over DiagAtt, whereas no analogous advantage arises for the other competitors.

\end{document}